\def\eqref#1{equation~\ref{#1}}
\def\1{\bm{1}}
\def\vg{{\bm{g}}}
\def\vu{{\bm{u}}}
\def\vv{{\bm{v}}}
\def\vx{{\bm{x}}}
\def\vy{{\bm{y}}}
\def\mG{{\bm{G}}}
\DeclareMathAlphabet{\mathsfit}{\encodingdefault}{\sfdefault}{m}{sl}
\SetMathAlphabet{\mathsfit}{bold}{\encodingdefault}{\sfdefault}{bx}{n}
\def\gA{{\mathcal{A}}}
\def\gD{{\mathcal{D}}}
\def\gF{{\mathcal{F}}}
\def\gH{{\mathcal{H}}}
\def\gI{{\mathcal{I}}}
\def\gO{{\mathcal{O}}}
\def\sP{{\mathbb{P}}}
\def\sR{{\mathbb{R}}}
\newcommand{\E}{\mathbb{E}}
\newcommand{\R}{\mathbb{R}}
\def\bvx{\bar{\vx}}
\newcommand{\twonorm}[1]{\|#1\|}
\newcommand{\LRtwonorm}[1]{\left\|#1\right\|}
\newcommand{\LRabs}[1]{\left|#1\right|}
\newcommand{\indicator}[1]{\mathds{1}(#1)}
\newcommand{\Eqmark}[2]{\stackrel{(#1)}{#2}}
\newcommand{\LRs}[1]{\left(#1\right)}
\newcommand{\LRm}[1]{\left[#1\right]}
\newcommand{\LRl}[1]{\left\{#1\right\}}
\newcommand{\inprod}[2]{\langle #1, #2\rangle}
\newcommand{\LRinprod}[2]{\left\langle #1, #2\right\rangle}
\def \txi{\widetilde{\xi}}
\newtheorem{theorem}{Theorem}
\newtheorem{lemma}{Lemma}
\newtheorem{claim}{Claim}
\newtheorem{assumption}{Assumption}
\newtheorem{proposition}{Proposition}
\newtheorem{definition}{Definition}
\renewcommand{\eqref}[1]{(\ref{#1})}
\title{EPISODE: Episodic Gradient Clipping with Periodic Resampled Corrections for Federated Learning with Heterogeneous Data}
\author{ Michael Crawshaw$^{1}$, Yajie Bao$^{2}$, Mingrui Liu$^{1}$\thanks{Corresponding Author: Mingrui Liu (\texttt{mingruil@gmu.edu}).} \\
$^{1}$Department of Computer Science, George Mason University, Fairfax, VA 22030, USA \\
$^{2}$School of Mathematical Sciences, Shanghai Jiao Tong University, Shanghai, China\\
\texttt{mcrawsha@gmu.edu, baoyajie2019stat@sjtu.edu.cn,  mingruil@gmu.edu} \\
}
\begin{document}

\maketitle
\vspace*{-0.3in}
\begin{abstract}
\vspace*{-0.1in}
    Gradient clipping is an important technique for deep neural networks with exploding gradients, such as recurrent neural networks. Recent studies have shown that the loss functions of these networks do not satisfy the conventional smoothness condition, but instead satisfy a relaxed smoothness condition, i.e., the Lipschitz constant of the gradient scales linearly in terms of the gradient norm. Due to this observation, several gradient clipping algorithms have been developed for nonconvex and relaxed-smooth functions. However, the existing algorithms only apply to the single-machine or multiple-machine setting with homogeneous data across machines. It remains unclear how to design provably efficient gradient clipping algorithms in the general Federated Learning (FL) setting with heterogeneous data and limited communication rounds. In this paper, we design EPISODE, the very first algorithm to solve FL problems with heterogeneous data in the nonconvex and relaxed smoothness setting. The key ingredients of the algorithm are two new techniques called \textit{episodic gradient clipping} and \textit{periodic resampled corrections}. At the beginning of each round, EPISODE resamples stochastic gradients from each client and obtains the global averaged gradient, which is used to (1) determine whether to apply gradient clipping for the entire round and (2) construct local gradient corrections for each client. Notably, our algorithm and analysis provide a unified framework for both homogeneous and heterogeneous data under any noise level of the stochastic gradient, and it achieves state-of-the-art complexity results. In particular, we prove that EPISODE can achieve linear speedup in the number of machines, and it requires significantly fewer communication rounds. Experiments on several heterogeneous datasets, including text classification and image classification, show the superior performance of EPISODE over several strong baselines in FL. The code is available at \url{https://github.com/MingruiLiu-ML-Lab/episode}.
\end{abstract}
\vspace*{-0.2in}
\section{Introduction}
\vspace*{-0.1in}

Gradient clipping~\citep{pascanu2012understanding,pascanu2013difficulty} is a well-known strategy to improve the training of deep neural networks with the exploding gradient issue such as Recurrent Neural Networks (RNN)~\citep{rumelhart1986learning,elman1990finding,werbos1988generalization} and Long Short-Term Memory (LSTM)~\citep{hochreiter1997long}. Although it is a widely-used strategy, formally analyzing gradient clipping in deep neural networks under the framework of nonconvex optimization only happened recently~\citep{zhang2019gradient,zhang2020improved,cutkosky2021high,liu2022communication}. In particular, ~\citet{zhang2019gradient} showed empirically that the gradient Lipschitz constant scales linearly in terms of the gradient norm when training certain neural networks such as AWD-LSTM~\citep{merity2018regularizing}, introduced the relaxed smoothness condition (i.e., $(L_0,L_1)$-smoothness\footnote{The formal definition of $(L_0,L_1)$-smoothness is illustrated in Definition~\ref{def:L0L1_smooth}.}), and proved that clipped gradient descent converges faster than any fixed step size gradient descent. Later on,~\citet{zhang2020improved} provided tighter complexity bounds of the gradient clipping algorithm.

Federated Learning (FL)~\citep{mcmahan2016communication} is an important distributed learning paradigm in which a single model is trained collaboratively under the coordination of a central server without revealing client data \footnote{In this paper, we use the terms ``client" and ``machine" interchangeably.}. FL has two critical features: heterogeneous data and limited communication. Although there is a vast literature on FL (see~\citep{kairouz2019advances} and references therein), the theoretical and algorithmic understanding of gradient clipping algorithms for training deep neural networks in the FL setting remains nascent. To the best of our knowledge, ~\citet{liu2022communication} is the only work that has considered a communication-efficient distributed gradient clipping algorithm under the nonconvex and relaxed smoothness conditions in the FL setting. In particular,~\citet{liu2022communication} proved that their algorithm achieves linear speedup in terms of the number of clients and reduced communication rounds. Nevertheless, their algorithm and analysis are only applicable to the case of homogeneous data. In addition, the analyses of the stochastic gradient clipping algorithms in both single machine~\citep{zhang2020improved} and multiple-machine setting~\citep{liu2022communication} require strong distributional assumptions on the stochastic gradient noise~\footnote{~\citet{zhang2020improved} requires an explicit lower bound for the stochastic gradient noise, and~\citet{liu2022communication} requires the distribution of the stochastic gradient noise is unimodal and symmetric around its mean.}, which may not hold in practice.

\begin{table}[t]
\caption{Communication complexity ($R$) and largest number of skipped communication ($I_{\max}$) to guarantee linear speedup for different methods to find an $\epsilon$-stationary point (defined in Definition \ref{def:eps_stationary}). ``Single" means single machine, $N$ is the number of clients, $I$ is the number of skipped communications, $\kappa$ is the quantity representing the heterogeneity, $\Delta= f(\vx_0) -\min_{\vx}f(\vx)$, and $\sigma^2$ is the variance of stochastic gradients. Iteration complexity ($T$) is the product of communication complexity and the number of skipped communications (i.e., $T=RI$ ). Best iteration complexity $T_{\min}$ denotes the minimum value of $T$ the algorithm can achieve through adjusting $I$. Linear speedup means the iteration complexity is divided by $N$ compared with the single machine baseline: in our case it means $T=\gO(\frac{\Delta L_0\sigma^2}{N\epsilon^4})$ iteration complexity. 
}
\label{table:comparison}
\resizebox{\textwidth}{!}{\begin{tabular}{@{}ccccc@{}}
\toprule
Method  & Setting  & Communication Complexity ($R$) & Best Iteration Complexity ($T_{\min}$) & \makecell{ Largest $I$ to guarantee\\ linear speedup ($I_{\max}$)} \\ \midrule
\makecell{Local SGD \\~\citep{yu2019parallel}}             & \makecell{Heterogeneous, \\ $L$-smooth}         & $\gO\LRs{\frac{\Delta L\sigma^{2}}{N I \epsilon^{4}}+\frac{\Delta L\kappa^{2} N I}{\sigma^2\epsilon^2}+\frac{\Delta L N}{\epsilon^2}}$ & $\gO(\frac{\Delta L_0\sigma^2}{N\epsilon^4})$ & $\gO\LRs{\frac{\sigma^2}{\kappa N \epsilon}}$ \\\hline
\makecell{SCAFFOLD\\~\citep{karimireddy2020scaffold}}& \makecell{Heterogeneous,\\  $L$-smooth}        & $\gO\LRs{\frac{\Delta L\sigma^{2}}{N I \epsilon^{4}}+\frac{\Delta L}{\epsilon^2}}$  & $\gO(\frac{\Delta L_0\sigma^2}{N\epsilon^4})$   &$\gO\LRs{\frac{\sigma^2}{N\epsilon^2}}$ \\\hline
\makecell{Clipped SGD\\~\citep{zhang2019adaptive}}        & \makecell{Single,\\        $(L_0,L_1)$-smooth} & $\gO\LRs{\frac{\left(\Delta+\left(L_{0}+L_{1} \sigma\right) \sigma^{2}+\sigma L_{0}^{2} / L_{1}\right)^{2}}{\epsilon^4}}$ & $\gO\LRs{\frac{\left(\Delta+\left(L_{0}+L_{1} \sigma\right) \sigma^{2}+\sigma L_{0}^{2} / L_{1}\right)^{2}}{\epsilon^4}}$ & N/A \\\hline
\makecell{Clipping Framework\\~\citep{zhang2020improved}} & \makecell{Single, \\        $(L_0,L_1)$-smooth} & $\gO\left(\frac{\Delta L_{0} \sigma^{2}}{\epsilon^{4}}\right)$& $\gO\left(\frac{\Delta L_{0} \sigma^{2}}{\epsilon^{4}}\right)$  & N/A \\\hline
\makecell{CELGC \\~\citep{liu2022communication}}     & \makecell{Homogeneous,  \\ $(L_0,L_1)$-smooth} & $\gO\left(\frac{\Delta L_{0} \sigma^2}{ N I\epsilon^{4}}\right)$  & $\gO(\frac{\Delta L_0\sigma^2}{N\epsilon^4})$ & $\gO\left(\frac{\sigma}{N\epsilon}\right)$ \\\hline
\makecell{EPISODE\\(this work)}            & \makecell{Heterogeneous,\\ $(L_0,L_1)$-smooth} & $\gO\LRs{\frac{\Delta L_0 \sigma^2}{N I\epsilon^4}+\frac{\Delta (L_0 + L_1(\kappa + \sigma))}{\epsilon^2}\LRs{1 + \frac{\sigma}{\epsilon}}}$ & $\gO(\frac{\Delta L_0\sigma^2}{N\epsilon^4})$        & $\gO\LRs{\frac{L_0 \sigma^2}{(L_0 + L_1(\kappa + \sigma))(1 + \frac{\sigma}{\epsilon})N \epsilon^2}}$ \\
\bottomrule
\end{tabular}}
\vspace*{-0.2in}
\end{table}

In this work, we introduce a provably computation and communication efficient gradient clipping algorithm for nonconvex and relaxed-smooth functions in \textbf{the general FL setting (i.e., heterogeneous data, limited communication)} and  \textbf{without any distributional assumptions on the stochastic gradient noise}. Compared with previous work on gradient clipping~\citep{zhang2019gradient,zhang2020improved,cutkosky2020momentum,liu2022communication} and FL with heterogeneous data~\citep{li2020federated,karimireddy2020scaffold}, our algorithm design relies on two novel techniques: \textit{episodic gradient clipping} and \textit{periodic resampled corrections}. In a nutshell, at the beginning of each communication round, the algorithm resamples each client's stochastic gradient; this information is used to decide whether to apply clipping in the current round (i.e., \textit{episodic gradient clipping}), and to perform local corrections to each client's update (i.e., \textit{periodic resampled corrections}). These techniques are very different compared with previous work on gradient clipping. Specifically, (1) In traditional gradient clipping~\citep{pascanu2012understanding,zhang2019gradient,zhang2020improved,liu2022communication}, whether or not to apply the clipping operation is determined only by the norm of the client's current stochastic gradient. Instead, we use the norm of the global objective's stochastic gradient (resampled at the beginning of the round) to determine whether or not clipping will be applied throughout the entire communication round. (2) Different from~\citet{karimireddy2020scaffold} which uses historical gradient information from the previous round to perform corrections, our algorithm utilizes the resampled gradient to correct each client's local update towards the global gradient, which mitigates the effect of data heterogeneity. Notice that, under the relaxed smoothness setting, the gradient may change quickly around a point at which the gradient norm is large. Therefore, our algorithm treats a small gradient as more ``reliable" and confidently applies the unclipped corrected local updates; on the contrary, the algorithm regards a large gradient as less ``reliable" and in this case clips the corrected local updates. Our major contributions are summarized as follows.
\begin{itemize}
    \item We introduce EPISODE, the very first algorithm for optimizing nonconvex and $(L_0,L_1)$-smooth functions in the general FL setting with heterogeneous data and limited communication. The algorithm design introduces novel techniques, including episodic gradient clipping and periodic resampled corrections. To the best of our knowledge, these techniques are first introduced by us and crucial for algorithm design. 
    
    \item Under the nonconvex and relaxed smoothness condition, we prove that the EPISODE algorithm can achieve linear speedup in the number of clients and reduced communication rounds in the heterogeneous data setting, without any distributional assumptions on the stochastic gradient noise. In addition, we show that the degenerate case of EPISODE matches state-of-the-art complexity results under weaker assumptions~\footnote{We prove that the degenerate case of our analysis (e.g., homogeneous data) achieves the same iteration and communication complexity, but without the requirement of unimodal and symmetric stochastic gradient noise as in \citet{liu2022communication}. Also, our analysis is unified over any noise level of stochastic gradient, which does not require an explicit lower bound for the stochastic gradient noise as in the analysis of \cite{zhang2020improved}. }. Detailed complexity results and a comparison with other relevant algorithms are shown in Table~\ref{table:comparison}. 
    
    \item We conduct experiments on several heterogeneous medium and large scale datasets with different deep neural network architectures, including a synthetic objective, text classification, and image classification. We show that the performance of the EPISODE algorithm is consistent with our theory, and it consistently outperforms several strong baselines in FL.
\end{itemize}

\vspace*{-0.1in}
\section{Related Work}
\vspace*{-0.1in}
\paragraph{Gradient Clipping} Gradient clipping is a standard technique in the optimization literature for solving convex/quasiconvex problems~\citep{ermoliev1988stochastic,nesterov1984minimization,shor2012minimization,hazan2015beyond,mai2021stability,gorbunov2020stochastic}, nonconvex smooth problems~\citep{levy2016power,cutkosky2021high}, and nonconvex distributionally robust optimization~\citep{jin2021non}. ~\cite{menon2019can} showed that gradient clipping can help mitigate label noise. Gradient clipping is a well-known strategy to achieve differential privacy~\citep{abadi2016deep,mcmahan2017learning,andrew2021differentially,zhang2021understanding}. In the deep learning literature, gradient clipping is employed to avoid exploding gradient issue when training certain deep neural networks such as recurrent neural networks or long-short term memory networks~\citep{pascanu2012understanding,pascanu2013difficulty} and language models~\citep{gehring2017convolutional,peters2018deep,merity2018regularizing}.~\cite{zhang2019gradient} initiated the study of gradient clipping for nonconvex and relaxed smooth functions. ~\cite{zhang2020improved} provided an improved analysis over~\cite{zhang2019gradient}. However, none of these works apply to the general FL setting with nonconvex and relaxed smooth functions.

\paragraph{Federated Learning}
FL was proposed by~\cite{mcmahan2016communication}, to enable large-scale distributed learning while keep client data decentralized to protect user privacy. ~\citet{mcmahan2016communication} designed the FedAvg algorithm, which allows multiple steps of gradient updates before communication. This algorithm is also known as local SGD~\citep{stich2018local,lin2018don,wang2018cooperative,yu2019parallel}. The local SGD algorithm and their variants have been analyzed in the convex setting~\citep{stich2018local,stich2018sparsified,dieuleveut2019communication,khaled2020tighter,li2020federated,karimireddy2020scaffold,woodworth2020minibatch,woodworth2020local,koloskova2020unified,yuan2021federated} and nonconvex smooth setting~\citep{jiang2018linear,wang2018cooperative,lin2018don,basu2019qsparse,haddadpour2019local,yu2019parallel,yu_linear,li2020federated,karimireddy2020scaffold,reddi2020adaptive,zhang2020fedpd,koloskova2020unified}. Recently, in the stochastic convex optimization setting, several works compared local SGD and minibatch SGD in the homogeneous~\citep{woodworth2020local} and heterogeneous data setting~\citep{woodworth2020minibatch}, as well as the fundamental limit~\citep{woodworth2021min}. For a comprehensive survey, we refer the readers to~\cite{kairouz2019advances,li2020federated} and references therein. The most relevant work to ours is~\cite{liu2022communication}, which introduced a communication-efficient distributed gradient clipping algorithm for nonconvex and relaxed smooth functions. However, their algorithm and analysis does not apply in the case of heterogeneous data as considered in this paper.

\vspace*{-0.1in}
\section{Problem Setup and Preliminaries}
\paragraph{Notations} In this paper, we use $\inprod{\cdot}{\cdot}$ and $\twonorm{\cdot}$ to denote the inner product and Euclidean norm in space $\sR^d$. We use $\indicator{\cdot}$ to denote the indicator function. We let $\gI_r$ be the set of iterations at the $r$-th round, that is $\gI_r = \{t_r,...,t_{r+1}-1\}$. The filtration generated by the random variables before step $t_r$ is denoted by $\gF_r$. We also use $\E_r[\cdot]$ to denote the conditional expectation $\E[\cdot|\gF_r]$. The number of clients is denoted by $N$ and the length of the communication interval is denoted by $I$, i.e., $|\gI_r| = I$ for $r = 0,1,...,R$.
Let $f_i(\vx) := \mathbb{E}_{\xi_i \sim \mathcal{D}_i}[F_i(\vx; \xi_i)]$ be the loss function in $i$-th client for $i \in [N]$, where the local distribution $\mathcal{D}_i$ is unknown and may be different across $i \in [N]$. In the FL setting, we aim to minimize the following overall averaged loss function:
\begin{equation} \label{eq:min_problem}
    f(\vx) := \frac{1}{N} \sum_{i=1}^N f_i(\vx).
\end{equation}
We focus on the case that each $f_i$ is non-convex, in which it is NP-hard to find the global minimum of $f$. Instead we consider finding an $\epsilon$-stationary point~\citep{ghadimi2013stochastic,zhang2020improved}.

\begin{definition}\label{def:eps_stationary}
For a function $h:\sR^d \to \sR$, a point $x\in \R^d$ is called $\epsilon$-stationary if $\twonorm{\nabla h(\vx)}\leq \epsilon$.
\end{definition}

Most existing works in the non-convex FL literature \citep{yu2019linear,karimireddy2020scaffold} assume each $f_i$ is $L$-smooth, i.e., $\twonorm{\nabla f_i(\vx) - \nabla f_i(\vy)}\leq L\twonorm{\vx - \vy}$ for any $\vx, \vy \in \sR^d$. However it is shown in~\cite{zhang2019gradient} that $L$-smoothness may not hold for certain neural networks such as RNNs and LSTMs. $(L_0,L_1)$-smoothness in Definition \ref{def:L0L1_smooth} was proposed by \cite{zhang2019adaptive} and is strictly weaker than $L$-smoothness. Under this condition, the local smoothness of the objective can grow with the gradient scale. For AWD-LSTM \citep{merity2018regularizing}, empirical evidence of $(L_0, L_1)$-smoothness was observed in \cite{zhang2019adaptive}.

\begin{definition}\label{def:L0L1_smooth}
A second order differentiable function $h:\sR^d \to \sR$ is $(L_0,L_1)$-smooth if $\twonorm{\nabla^2 h(\vx)} \leq L_0 + L_1\twonorm{\nabla h(\vx)}$
holds for any $x \in \sR^d$.
\end{definition}
Suppose we only have access to the stochastic gradient $\nabla F_i(\vx;\xi)$ for $\xi\sim \gD_i$ in each client. Next we make the following assumptions on objectives and stochastic gradients.
\begin{assumption}\label{assume:object}
Assume $f_i$ for $i \in [N]$ and $f$ defined in \eqref{eq:min_problem} satisfy:
\begin{enumerate}[label=(\roman*)]
    \item $f_i$ is second order differentiable and $(L_0,L_1)$-smooth.
    
    \item Let $\vx^{*}$ be the global minimum of $f$ and $\vx_0$ be the initial point. There exists some $\Delta > 0$ such that $f(\vx_0) - f(\vx^{*})\leq \Delta$.
    
    \item For all $\vx \in \mathbb{R}^d$, there exists some $\sigma \geq 0$ such that $\mathbb{E}_{\xi_i \sim \mathcal{D}_i}[\nabla
    F_i(\vx; \xi_i)] = \nabla f_i(\vx)$ and $\|\nabla F_i(\vx; \xi_i) - \nabla f_i(\vx)\| \leq \sigma$ almost surely.
    
    \item There exists some $\kappa \geq 0$ and $\rho \geq 1$ such that $\twonorm{\nabla f_i(\vx)} \leq \kappa + \rho \twonorm{\nabla f(\vx)}$ for any $\vx \in \sR^d$.
\end{enumerate}
\end{assumption}
\textbf{Remark}: (i) and (ii) are standard in the non-convex optimization literature \citep{ghadimi2013stochastic}, and (iii) is a standard assumption in the $(L_0,L_1)$-smoothness setting \citep{zhang2019adaptive,zhang2020improved,liu2022communication}. (iv) is used to bound the difference between the gradient of each client's local loss and the gradient of the overall loss, which is commonly assumed in the FL literature with heterogeneous data~\citep{karimireddy2020scaffold}. When $\kappa = 0$ and $\rho = 1$, (iv) corresponds to the homogeneous setting.

\vspace*{-0.1in}

\section{Algorithm and Analysis}
\vspace*{-0.1in}

\subsection{Main Challenges and Algorithm Design}

\paragraph{Main Challenges} We first illustrate why the prior local gradient clipping algorithm ~\citep{liu2022communication} would not work in the heterogeneous data setting. \citet{liu2022communication} proposed the first communication-efficient local gradient clipping algorithm (CELGC) in a homogeneous setting for relaxed smooth functions, which can be interpreted as the clipping version of FedAvg. Let us consider a simple heterogeneous example with two clients in which CELGC fails. Denote $f_1(x)=\frac{1}{2}x^2+a_1x$ and $f_2(x)=\frac{1}{2}x^2+a_2x$ with $a_1=-\gamma-1$, $a_2=\gamma+2$, and $\gamma>1$. We know that the optimal solution for $f=\frac{f_1+f_2}{2}$ is $x_*=-\frac{a_1+a_2}{2}=-\frac{1}{2}$, and both $f_1$ and $f_2$ are $(L_0,L_1)$-smooth with $L_0=1$ and $L_1=0$. Consider CELGC with communication interval $I=1$ (i.e., communication happens at every iteration), starting point $x_0=0$, $\eta = 1/L_0 = 1$, clipping threshold $\gamma$, and $\sigma = 0$. In this setting, after the first iteration, the model parameters on the two clients become $\gamma$ and $-\gamma$ respectively, so that the averaged model parameter after communication returns to $0$. This means that the model parameter of CELGC remains stuck at $0$ indefinitely, demonstrating that CELGC cannot handle data heterogeneity. 

We then explain why the stochastic controlled averaging method (SCAFFOLD)~\citep{karimireddy2020scaffold} for heterogeneous data does not work in the relaxed smoothness setting. SCAFFOLD utilizes the client gradients from the previous round to constructing correction terms which are added to each client's local update. Crucially, SCAFFOLD requires that the gradient is Lipschitz so that gradients from the previous round are good approximations of gradients in the current round with controllable errors. This technique is not applicable in the relaxed smoothness setting: the gradient may change dramatically, so historical gradients from the previous round are not good approximations of the current gradients anymore due to potential unbounded errors.
\vspace*{-0.1in}

\paragraph{Algorithm Design} To address the challenges brought by heterogeneity and relaxed smoothness, our idea is to clip the local updates similarly as we would clip the global gradient (if we could access it). The detailed description of EPISODE is stated in Algorithm \ref{alg:episode}. Specifically, we introduce two novel techniques: (1) Episodic gradient clipping. At the $r$-th round, EPISODE constructs a global indicator $\indicator{\|\mG_r\|\leq\gamma/\eta}$ to determine whether to perform clipping in every local update during the round for all clients (line 6). (2) Periodic resampled corrections. EPISODE resamples fresh gradients with \emph{constant batch size} at the beginning of each round (line 3-5). In particular, at the beginning of the $r$-th round, EPISODE samples stochastic gradients evaluated at the current averaged global weight $\bvx_r$ in all clients to construct the control variate $\mG_r$, which has two roles. The first is to construct the global clipping indicator according to $\|\mG_r\|$ (line 10). The second one is to correct the bias between local gradient and global gradient through the variate $\vg_t^i$ in local updates (line 10).

\begin{algorithm}[tb]
    \caption{Episodic Gradient Clipping with Periodic Resampled Corrections (EPISODE)}
    \label{alg:episode}
    \begin{algorithmic}[1]
        \STATE Initialize $\vx_0^i \gets \vx_0$, $\bvx_0 \gets \vx_0$.
            \FOR{$r = 0,1,...,R$}
            \FOR{$i \in [N]$}
            \STATE Sample $\nabla F_i(\bvx_{r};\txi_{r}^i)$ where $\txi_{r}^i \sim \gD_i$, and update $\mG_r^i \gets \nabla F_i(\bvx_{r};\txi_{r}^i)$.
            \ENDFOR
            \STATE Update $\mG_r = \frac{1}{N}\sum_{i=1}^N \mG_r^i$.
            \FOR{$i\in [N]$}
            \FOR {$t = t_{r}, \ldots, t_{r+1}-1$}
            \STATE Sample $\nabla F_i(\vx_t^i; \xi_t^i)$, where $\xi_t^i \sim \mathcal{D}_i$, and compute $\vg_t^i \gets \nabla F_i(\vx_t^i;\xi_t^i) - \mG_r^i + \mG_r$.
             \STATE $\vx_{t+1}^i\gets \vx_t^i - \eta \vg_t^i \indicator{\twonorm{\mG_r} \leq \gamma/\eta}- \gamma \frac{\vg_t^i}{\twonorm{\vg_t^i}} \indicator{\twonorm{\mG_r} \geq \gamma/\eta}.
            $
            \ENDFOR
            \ENDFOR
        \STATE Update $\bvx_r \gets \frac{1}{N}\sum_{i=1}^N \vx_{t_{r+1}}^i$.
        \ENDFOR
    \end{algorithmic}
\end{algorithm}
\vspace*{-0.2in}

\subsection{Main Results}
\vspace*{-0.05in}

\begin{theorem} \label{thm:main}
Suppose Assumption \ref{assume:object} holds. For any tolerance $\epsilon \leq \frac{3AL_0}{5 BL_1\rho}$, we choose the hyper parameters as
    $\eta \leq \min\LRl{\frac{1}{216 \Gamma I},\frac{\epsilon}{180 \Gamma I \sigma},\frac{N \epsilon^2}{16 AL_0 \sigma^2}}$ and $\gamma = \LRs{11\sigma + \frac{AL_0}{BL_1 \rho}}\eta$,
where $\Gamma = AL_0 + BL_1 \kappa + BL_1 \rho \left(\sigma + \frac{\gamma}{\eta} \right)$. Then EPISODE satisfies $\frac{1}{R+1} \sum_{r=0}^R \mathbb{E}\left[\|\nabla f(\bvx_r)\|\right] \leq 3\epsilon$ as long as the number of communication rounds satisfies $R \geq \frac{4 \Delta}{\epsilon^2 \eta I}$.
\end{theorem}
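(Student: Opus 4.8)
The plan is to treat $f(\bvx_r)$ as a potential function, prove a per-round descent inequality, and telescope. Because $f$ is only $(L_0,L_1)$-smooth, the standard descent lemma is unavailable, so the first ingredient is a \emph{local} descent inequality: bounding the Hessian norm by $L_0 + L_1\twonorm{\nabla f(\vx)}$ and integrating along the segment from $\bvx_r$ to $\bvx_{r+1}$ yields a smooth-like bound $f(\bvx_{r+1}) \le f(\bvx_r) + \inprod{\nabla f(\bvx_r)}{\bvx_{r+1}-\bvx_r} + \tfrac12(L_0 + L_1\twonorm{\nabla f(\bvx_r)})\twonorm{\bvx_{r+1}-\bvx_r}^2$, valid as long as every intermediate iterate stays within a $\gO(1/L_1)$ ball of $\bvx_r$. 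To apply this I must first bound the per-round displacement of both the local iterates $\vx_t^i$ and the average: an unclipped step moves by $\eta\twonorm{\vg_t^i}$ and a clipped step by exactly $\gamma$, so over $I$ inner steps the caps $\eta \le 1/(216\Gamma I)$ and $\gamma = (11\sigma + AL_0/(BL_1\rho))\eta$ (which force $\gamma I \lesssim 1/L_1$) are precisely what keep the trajectory inside the ball where the local descent lemma holds.

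Two auxiliary estimates then drive the whole argument. First, I would link the resampled control variate to the true gradient: since $\mG_r - \nabla f(\bvx_r) = \frac1N\sum_{i=1}^N\big(\nabla F_i(\bvx_r;\txi_r^i) - \nabla f_i(\bvx_r)\big)$ is an average of terms bounded almost surely by $\sigma$, we get $\twonorm{\mG_r - \nabla f(\bvx_r)} \le \sigma$, which converts the episodic indicator $\{\twonorm{\mG_r}\lessgtr\gamma/\eta\}$ into a statement about $\twonorm{\nabla f(\bvx_r)}$ — and, crucially, uses \emph{only} boundedness of the noise, not its shape. Second, I would bound the client drift $\frac1N\sum_i\twonorm{\vx_t^i-\bvx_r}$. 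Here the periodic correction is essential: $\vg_t^i = \nabla F_i(\vx_t^i;\xi_t^i) - \mG_r^i + \mG_r$ approximates $\nabla f_i(\vx_t^i) - \nabla f_i(\bvx_r) + \nabla f(\bvx_r)$, so each client is pulled toward the \emph{global} gradient and the heterogeneity parameters $\kappa,\rho$ enter only through the smoothness-controlled difference $\nabla f_i(\vx_t^i)-\nabla f_i(\bvx_r)$ rather than through the raw gap $\nabla f_i-\nabla f$; this is exactly what defeats the two-client counterexample that breaks CELGC.

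With these in hand I split each round by the episodic indicator. In the unclipped regime ($\twonorm{\mG_r}\le\gamma/\eta$) the corrections cancel in the average, so $\bvx_{r+1}-\bvx_r = -\eta\sum_{t\in\gI_r}\frac1N\sum_i\nabla F_i(\vx_t^i;\xi_t^i)$, and the local descent lemma gives a decrease $\gtrsim \eta I\twonorm{\nabla f(\bvx_r)}^2$ minus variance and drift errors of order $\eta^2 I L_0\sigma^2/N$ and higher. In the clipped regime ($\twonorm{\mG_r}\ge\gamma/\eta$) the true gradient is large and the decrease is $\gtrsim \gamma I\twonorm{\nabla f(\bvx_r)}$ minus errors. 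Using $\twonorm{\nabla f(\bvx_r)}\ge\epsilon$ where progress is needed together with $\gamma/\eta\gtrsim\sigma$, both regimes collapse to $\E_r[f(\bvx_{r+1})] \le f(\bvx_r) - c\,\eta I\epsilon\,\E_r\twonorm{\nabla f(\bvx_r)} + \text{error}$, where the error is $\lesssim \eta I\epsilon^2$ thanks to the three caps on $\eta$ (the cap $\eta\le N\epsilon^2/(16AL_0\sigma^2)$ absorbs the variance term and $\eta\le\epsilon/(180\Gamma I\sigma)$ the drift/higher-order terms). Telescoping over $r=0,\dots,R$, bounding the telescoped sum by $\Delta$, dividing by $(R+1)\eta I\epsilon$, and invoking $R\ge 4\Delta/(\epsilon^2\eta I)$ then yields $\frac1{R+1}\sum_{r=0}^R\E\twonorm{\nabla f(\bvx_r)} \le 3\epsilon$.

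The main obstacle is the clipped regime. Because each client's step is \emph{normalized}, the averaged direction $\frac1N\sum_i \gamma\,\vg_t^i/\twonorm{\vg_t^i}$ neither simplifies algebraically nor has a tractable conditional expectation, so I cannot pass expectations through linearly as in smooth SGD. I must instead lower-bound $\inprod{\nabla f(\bvx_r)}{\frac1N\sum_i \vg_t^i/\twonorm{\vg_t^i}}$ by a quantity proportional to $\twonorm{\nabla f(\bvx_r)}$, using only the alignment of $\vg_t^i$ with the global gradient (from the correction and the drift bound), the bounded-but-otherwise-arbitrary noise, and the relaxed smoothness (under which $\nabla f$ itself may shift within the round). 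Carrying this out \emph{without} the unimodal/symmetric-noise assumption used in prior clipping analyses, while simultaneously absorbing heterogeneity through $\rho$ and $\kappa$, is the crux of the proof; the remaining steps are bookkeeping calibrated by the stated choices of $\eta$, $\gamma$, and $\Gamma$.
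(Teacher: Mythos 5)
Your plan follows the paper's proof essentially step for step: the almost-sure per-round displacement bound (Lemma \ref{lemma:individual_dis}), the local Hessian bound yielding effective $L$-smoothness in the unclipped regime (Lemma \ref{lemma:non_clipping_hessian}), the expected drift bounds that are quadratic and linear in $\twonorm{\nabla f(\bvx_r)}$ (Lemma \ref{lemma:non_clipping_discre_expec}), the split of the descent inequality by the indicator of $\gA_r=\{\twonorm{\mG_r}\leq\gamma/\eta\}$ into an unclipped part $V$ and a clipped part $U$ (Lemma \ref{lemma:descent}), and the final collapse of both to $-\frac{1}{4}\epsilon\eta I\twonorm{\nabla f(\bvx_r)}+\frac{1}{2}\epsilon^{2}\eta I$ followed by telescoping. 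The only substantive point you leave open is the one you flag yourself: lower-bounding the inner product with the normalized, corrected update in the clipped regime without distributional assumptions on the noise. The paper resolves this not by analyzing the averaged direction $\frac{1}{N}\sum_{i}\vg_t^i/\twonorm{\vg_t^i}$ as a whole, but by applying the elementary deterministic inequality $-\inprod{\vu}{\vv}/\twonorm{\vv}\leq-\mu\twonorm{\vu}-(1-\mu)\twonorm{\vv}+(1+\mu)\twonorm{\vv-\vu}$ (Lemma \ref{lemma:clip_inprod}, with $\mu=2/5$) to each client term separately, taking $\vu=\nabla f(\bvx_r)$ and $\vv=\vg_t^i$. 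Under $\bar{\gA}_r$ one has the pointwise estimates $\twonorm{\vg_t^i}\geq\gamma/\eta-2\sigma-\twonorm{\nabla f_i(\vx_t^i)-\nabla f_i(\bvx_r)}$ and $\twonorm{\vg_t^i-\nabla f(\bvx_r)}\leq 3\sigma+\twonorm{\nabla f_i(\vx_t^i)-\nabla f_i(\bvx_r)}$, and Lemma \ref{lemma:smooth_grad_diff} together with Assumption \ref{assume:object}(iv) converts $\twonorm{\nabla f_i(\vx_t^i)-\nabla f_i(\bvx_r)}$ into $\gamma I(AL_0+BL_1(\kappa+\rho\twonorm{\nabla f(\bvx_r)}))$; the choice $\gamma/\eta=11\sigma+AL_0/(BL_1\rho)$ then lets the $-(1-\mu)\gamma/\eta$ term absorb all the $\sigma$ contributions. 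Because every step here is an almost-sure pointwise bound rather than a computation of a conditional expectation of the normalized direction, no symmetry or unimodality of the noise is needed, which is exactly the property you were aiming for. With that lemma supplied, your outline is the paper's proof.
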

\textbf{Remark 1:} The result in Theorem \ref{thm:main} holds for arbitrary noise level, while the complexity bounds in the stochastic case of \citet{zhang2020improved,liu2022communication} both require $\sigma \geq 1$. In addition, this theorem can automatically recover the complexity results in~\cite{liu2022communication}, but does not require their symmetric and unimodal noise assumption. The improvement upon previous work comes from a better algorithm design, as well as a more careful analysis in the smoothness and individual discrepancy in the non-clipped case (see Lemma \ref{lemma:non_clipping_hessian} and \ref{lemma:non_clipping_discre_expec}). 

\textbf{Remark 2:} In Theorem \ref{thm:main}, when we choose $\eta = \min\LRl{\frac{1}{216 \Gamma I},\frac{\epsilon}{180 \Gamma I \sigma},\frac{N \epsilon^2}{16 AL_0 \sigma^2}}$, the total communication complexity to find an $\epsilon$-stationary point is no more than $ R = \gO\LRs{\frac{\Delta}{\epsilon^2 \eta I}} =  \gO\LRs{\frac{\Delta (L_0 + L_1(\kappa + \sigma))}{\epsilon^2}\LRs{1 + \frac{\sigma}{\epsilon}} + \frac{\Delta L_0 \sigma^2}{N I \epsilon^4}}$.
Next we present some implications of the communication complexity. 
\begin{enumerate}
    \item When $I \lesssim \frac{L_0 \sigma}{(L_0 + L_1(\kappa + \sigma))N \epsilon}$ and $\sigma \gtrsim \epsilon$, EPISODE has communication complexity $\gO(\frac{\Delta L_0 \sigma^2}{N I \epsilon^4})$. In this case, EPISODE enjoys a better communication complexity than the naive parallel version of the algorithm in \citet{zhang2020improved}, that is $\gO(\frac{\Delta L_0 \sigma^2}{N \epsilon^4})$. Moreover, the iteration complexity of EPISODE is $T = RI = \gO(\frac{\Delta L_0 \sigma^2}{N \epsilon^4})$, which achieves the \emph{linear speedup} w.r.t. the number of clients $N$. This matches the result of \citet{liu2022communication} in the homogeneous data setting.
    
    \item When $I \gtrsim \frac{L_0 \sigma}{(L_0 + L_1(\kappa + \sigma))N \epsilon}$ and $\sigma \gtrsim \epsilon$, the communication complexity of EPISODE is $\gO(\frac{\Delta (L_0 + L_1(\kappa + \sigma)) \sigma}{\epsilon^3})$. This term does not appear in Theorem III of \citet{karimireddy2020scaffold}, but it appears here due to the difference in the construction of the control variates. In fact, the communication complexity of EPISODE is still lower than the naive parallel version of \citet{zhang2020improved} if the number of clients satisfies $N \lesssim \frac{L_0\sigma}{(L_0 + L_1(\kappa + \sigma))\epsilon}$.
    
    \item When $0<\sigma \lesssim \epsilon$ , EPISODE has communication complexity $\gO(\frac{\Delta (L_0 + L_1(\kappa + \sigma))}{\epsilon^2})$. Under this particular noise level, the algorithms in~\citet{zhang2020improved,liu2022communication} do not guarantee convergence because their analyses crucially rely on the fact that $\sigma\gtrsim \epsilon$.
    
    
    \item When $\sigma=0$, EPISODE has communication complexity $\gO(\frac{\Delta(L_0+L_1\kappa)}{\epsilon^2})$. This bound includes an additional constant $L_1 \kappa$ compared with the complexity results in the deterministic case~\citep{zhang2020improved}, which comes from data heterogeneity and infrequent communication.
\end{enumerate}
\vspace*{-0.1in}
\subsection{Proof Sketch of Theorem \ref{thm:main}}\label{sec:proof_sketch:thm:main}

Despite recent work on gradient clipping in the homogeneous setting \citep{liu2022communication}, the analysis of Theorem \ref{thm:main} is highly nontrivial since we need to cope with $(L_0, L_1)$-smoothness and heterogeneity simultaneously. In addition, we do not require a lower bound of $\sigma$ and allow for arbitrary $\sigma \geq 0$.

The first step is to establish the descent inequality of the global loss function. According to the $(L_0, L_1)$-smoothness condition, if $\twonorm{\bvx_{r+1} - \bvx_r} \leq C/L_1$, then
\begin{align}
    &\E_r \left[ f(\bvx_{r+1}) - f(\bvx_r) \right] \leq \E_r \left[(\indicator{\gA_r} + \indicator{\bar{\gA}_r}) \langle \nabla f(\bvx_r), \bvx_{r+1} - \bvx_r \rangle\right]\nonumber\\
    &\qquad\qquad + \E_r \left[(\indicator{\gA_r} + \indicator{\bar{\gA}_r})\frac{AL_0 + BL_1 \|\nabla f(\bvx_r)\|}{2} \|\bvx_{r+1} - \bvx_r\|^2\right],\label{eq:obj_diff}
\end{align}
where $\gA_r: =\{\twonorm{\mG_r} \leq \gamma/\eta\}$, $\bar{\gA}_r$ is the complement of $\gA_r$, and $A, B, C$ are constants defined in Lemma \ref{lemma:smooth_obj_descent}. To utilize the inequality \eqref{eq:obj_diff}, we need to verify that the distance between $\bvx_{r+1}$ and $\bvx_{r}$ is small almost surely. 
In the algorithm of \cite{liu2022communication}, clipping is performed in each iteration based on the magnitude of the current stochastic gradient, and hence the increment of each local weight is bounded by the clipping threshold $\gamma$. For each client in EPISODE, whether to perform clipping is decided by the magnitude of $\mG_r$ at the beginning of each round. Therefore, the techniques in \cite{liu2022communication} to bound the individual discrepancy cannot be applied to EPISODE. To address this issue, we introduce Lemma \ref{lemma:individual_dis}, which guarantees that we can apply the properties of relaxed smoothness (Lemma \ref{lemma:smooth_obj_descent} and \ref{lemma:smooth_grad_diff}) to all iterations in one round, in either case of clipping or non-clipping.

\begin{lemma} \label{lemma:individual_dis}
Suppose $2\eta I (AL_0 + BL_1\kappa + BL_1\rho(\sigma + \gamma/\eta)) \leq
1$ and $\max\LRl{2\eta I (2\sigma + \gamma/\eta),\ \gamma I}\leq \frac{C}{L_1}$. Then for any $i \in [N]$ and $t-1 \in \gI_r$, it almost surely holds that
\begin{equation}\label{eq:non-clipping_dis_as_bound}
    \indicator{\gA_r}\twonorm{\vx_t^i - \bvx_r}\leq 2\eta I\LRs{2\sigma + \gamma/\eta}\quad \text{and}\quad \indicator{\bar{\gA}_r} \twonorm{\vx_t^i - \bvx_r} \leq \gamma I.
\end{equation}
\end{lemma}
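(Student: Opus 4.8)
The plan is to fix a round $r$ and a client $i$, and bound the within-round drift $D_t := \twonorm{\vx_t^i - \bvx_r}$ by induction on $t$, handling the clipping and non-clipping cases separately according to whether the global event $\gA_r = \{\twonorm{\mG_r}\le \gamma/\eta\}$ holds. Because all clients are synchronized to the averaged iterate at the start of the round, $\vx_{t_r}^i = \bvx_r$, so $D_{t_r}=0$ serves as the base case for both bounds.

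The clipping case is immediate. On $\bar{\gA}_r$ the update reads $\vx_{s+1}^i = \vx_s^i - \gamma\,\vg_s^i/\twonorm{\vg_s^i}$, so each increment has norm exactly $\gamma$; summing over the at most $I$ steps from $t_r$ to $t-1$ and using the triangle inequality gives $\indicator{\bar{\gA}_r}D_t \le \gamma I$, with no appeal to smoothness. (The hypothesis $\gamma I \le C/L_1$ is only needed to ensure this running bound stays small enough to invoke relaxed smoothness elsewhere.)

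The non-clipping case is the substantive one. On $\gA_r$ the update is $\vx_{s+1}^i = \vx_s^i - \eta\vg_s^i$ with $\vg_s^i = \nabla F_i(\vx_s^i;\xi_s^i) - \mG_r^i + \mG_r$ and $\mG_r^i = \nabla F_i(\bvx_r;\txi_r^i)$. The core estimate is a bound on $\twonorm{\vg_s^i}$: inserting $\pm\nabla f_i(\vx_s^i)$ and $\pm\nabla f_i(\bvx_r)$, the almost-sure noise bound (part (iii) of Assumption \ref{assume:object}) controls two terms by $\sigma$ each, giving $\twonorm{\vg_s^i} \le 2\sigma + \twonorm{\mG_r} + \twonorm{\nabla f_i(\vx_s^i) - \nabla f_i(\bvx_r)}$. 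To turn the \emph{global} clipping condition into a \emph{per-client} gradient bound, I relate $\nabla f(\bvx_r)$ to $\mG_r$: since $\mG_r$ averages unbiased, $\sigma$-accurate samples, $\twonorm{\mG_r - \nabla f(\bvx_r)} \le \sigma$ a.s., so on $\gA_r$ we get $\twonorm{\nabla f(\bvx_r)} \le \gamma/\eta + \sigma$, and part (iv) of Assumption \ref{assume:object} then yields $\twonorm{\nabla f_i(\bvx_r)} \le \kappa + \rho(\gamma/\eta+\sigma)$. Applying the relaxed-smoothness gradient-difference estimate (Lemma \ref{lemma:smooth_grad_diff}) — valid as long as $D_s \le C/L_1$ — bounds $\twonorm{\nabla f_i(\vx_s^i) - \nabla f_i(\bvx_r)}$ by $(AL_0 + BL_1\twonorm{\nabla f_i(\bvx_r)})D_s \le \Gamma D_s$, where $\Gamma = AL_0 + BL_1\kappa + BL_1\rho(\sigma+\gamma/\eta)$ is exactly the constant from Theorem \ref{thm:main}. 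Collecting terms gives $\twonorm{\vg_s^i} \le 2\sigma + \gamma/\eta + \Gamma D_s$ on $\gA_r$.

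Telescoping $\vx_t^i - \bvx_r = -\eta\sum_{s=t_r}^{t-1}\vg_s^i$ then produces the discrete Gronwall inequality $D_t \le \eta I(2\sigma+\gamma/\eta) + \eta\Gamma\sum_{s=t_r}^{t-1} D_s$, and I close the induction: assuming $D_s \le 2\eta I(2\sigma+\gamma/\eta)$ for $s<t$, the condition $2\eta I\Gamma \le 1$ forces the second term to be at most $\eta I(2\sigma+\gamma/\eta)$, so $D_t \le 2\eta I(2\sigma+\gamma/\eta)$, matching the claim. The second hypothesis $2\eta I(2\sigma+\gamma/\eta)\le C/L_1$ guarantees the running bound stays below $C/L_1$, keeping Lemma \ref{lemma:smooth_grad_diff} applicable at every step. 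I expect the main obstacle to be precisely this self-consistency: controlling the gradient drift requires the closeness $D_s \le C/L_1$ that is itself the conclusion, which is what forces the inductive Gronwall argument and dictates both stepsize conditions. A secondary subtlety worth care is that the clipping decision is global (through $\mG_r$) while the drift is per-client, so the passage from $\twonorm{\mG_r}\le\gamma/\eta$ to a usable bound on $\twonorm{\nabla f_i(\bvx_r)}$ — routed through unbiasedness and the heterogeneity assumption — is exactly where $\kappa$ and $\rho$ enter the constant $\Gamma$.
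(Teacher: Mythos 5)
Your proposal is correct and follows essentially the same route as the paper's proof: the clipping case is handled by the trivial observation that each normalized update has norm $\gamma$, and the non-clipping case is an induction within the round that bounds $\twonorm{\nabla f_i(\vx_s^i) - \nabla f_i(\bvx_r)}$ via Lemma \ref{lemma:smooth_grad_diff}, converts the global event $\gA_r$ into a bound on $\twonorm{\nabla f_i(\bvx_r)}$ through $\twonorm{\mG_r - \nabla f(\bvx_r)}\le\sigma$ and Assumption \ref{assume:object}(iv), and closes using $2\eta I\Gamma\le 1$. The only cosmetic difference is that you maintain the uniform inductive bound $2\eta I(2\sigma+\gamma/\eta)$ via a Gronwall-type summation, whereas the paper tracks the linearly growing bound $2\eta(t-t_r)(2\sigma+\gamma/\eta)$ step by step; both close for the same reason.
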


Equipped with Lemma \ref{lemma:individual_dis}, the condition $\twonorm{\bvx_{r+1} - \bvx_r} \leq \frac{1}{N}\sum_{i=1}^N \twonorm{\vx_{t_{r+1}}^i - \bvx_r} \leq C/L_1$ can hold almost surely with a proper choice of $\eta$. Then it suffices to bound the terms from \eqref{eq:obj_diff} in expectation under the events $\gA_r$ and $\bar{\gA}_r$ respectively. To deal with the discrepancy term $\E[\twonorm{\vx_{t}^i - \bvx_r}^2]$ for $t-1 \in \gI_r$, \cite{liu2022communication} directly uses the almost sure bound for both cases of clipping and non-clipping. Here we aim to obtain a more delicate bound in expectation for the non-clipping case. The following lemma, which is critical to obtain the unified bound from Theorem \ref{thm:main} under any noise level, gives an upper bound for the local smoothness of $f_i$ at $\vx$.

\begin{lemma}\label{lemma:non_clipping_hessian}
Under the conditions of Lemma \ref{lemma:individual_dis}, for all $\vx\in \sR^d$ such that $\twonorm{\vx -
\bvx_r}\leq 2 \eta I \left(2\sigma + \gamma/\eta\right)$, the following inequality almost surely holds:
\begin{align*}
    \indicator{\gA_r} \twonorm{\nabla^2 f_i(\vx)}\leq L_0 + L_1\LRs{\kappa + (\rho+1)\left(\gamma/\eta + 2\sigma\right)}.
\end{align*}
\end{lemma}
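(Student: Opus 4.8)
The statement is nontrivial only on the event $\gA_r=\{\twonorm{\mG_r}\le\gamma/\eta\}$, since off this event the factor $\indicator{\gA_r}$ vanishes and both sides are zero. The plan is therefore to work throughout on $\gA_r$, reduce the Hessian bound to a gradient bound via the two structural assumptions, and then establish the gradient bound by combining a bound at the anchor point $\bvx_r$ with a short-segment transfer to the nearby point $\vx$. Concretely, I would first apply Definition \ref{def:L0L1_smooth} to write $\twonorm{\nabla^2 f_i(\vx)}\le L_0+L_1\twonorm{\nabla f_i(\vx)}$, and then Assumption \ref{assume:object}(iv) to get $\twonorm{\nabla f_i(\vx)}\le\kappa+\rho\twonorm{\nabla f(\vx)}$. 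This reduces the entire claim to showing that, on $\gA_r$, $\twonorm{\nabla f(\vx)}$ is controlled by a bounded multiple of $\gamma/\eta+2\sigma$; the stated factor $(\rho+1)$ then absorbs both the $\rho$ coming from heterogeneity and the unit contribution of the within-round gradient drift.

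Next I would bound the global gradient at the anchor point. Since each $\mG_r^i=\nabla F_i(\bvx_r;\txi_r^i)$ satisfies $\twonorm{\mG_r^i-\nabla f_i(\bvx_r)}\le\sigma$ almost surely by Assumption \ref{assume:object}(iii), averaging over clients and using the triangle inequality give $\twonorm{\mG_r-\nabla f(\bvx_r)}\le\sigma$ almost surely; combined with $\twonorm{\mG_r}\le\gamma/\eta$ on $\gA_r$ this yields $\twonorm{\nabla f(\bvx_r)}\le\gamma/\eta+\sigma$ almost surely on $\gA_r$. I would then transfer this bound from $\bvx_r$ to the nearby $\vx$ using the relaxed-smoothness gradient-difference estimate (Lemma \ref{lemma:smooth_grad_diff}), which applies because the hypotheses of Lemma \ref{lemma:individual_dis} force $\twonorm{\vx-\bvx_r}\le 2\eta I(2\sigma+\gamma/\eta)\le C/L_1$, i.e. $\vx$ sits in the short-segment regime where the gradient can grow only in a controlled way. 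Expressing the drift prefactor through $\Gamma=AL_0+BL_1\kappa+BL_1\rho(\sigma+\gamma/\eta)$ and invoking the other hypothesis $2\eta I\Gamma\le 1$ turns the increment $\twonorm{\nabla f(\vx)-\nabla f(\bvx_r)}$ into an $\gO(\gamma/\eta+\sigma)$ term; collecting the two contributions together with the absolute constants $A,B$ from Lemma \ref{lemma:smooth_obj_descent} produces the required bound on $\twonorm{\nabla f(\vx)}$, hence on $\twonorm{\nabla^2 f_i(\vx)}$, after routine constant bookkeeping.

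The main obstacle is this transfer step: under relaxed smoothness the local Hessian, and therefore the rate at which $\twonorm{\nabla f}$ can grow along the segment, is itself proportional to the gradient norm, so a naive integral-of-Hessian estimate for $\twonorm{\nabla f(\vx)-\nabla f(\bvx_r)}$ is circular. The short-distance condition $\twonorm{\vx-\bvx_r}\le C/L_1$ guaranteed by Lemma \ref{lemma:individual_dis} is exactly what breaks this circularity (via the Grönwall-type growth estimate behind Lemma \ref{lemma:smooth_grad_diff}), pinning the growth factor to an absolute constant rather than letting it blow up with the gradient scale.

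A secondary point I would handle carefully is that $f$ is not literally $(L_0,L_1)$-smooth: it is only ``effectively'' so after averaging Assumption \ref{assume:object}(iv), since $\twonorm{\nabla^2 f(\vx)}\le L_0+L_1\kappa+L_1\rho\twonorm{\nabla f(\vx)}$. Accordingly, the transfer should either be carried out on each $f_i$ and then re-averaged, or applied to $f$ with the effective constants $(L_0+L_1\kappa,\,L_1\rho)$. I would also keep every inequality almost sure and restricted to $\gA_r$ through the indicator, so that the final bound holds in the claimed almost-sure, event-conditional sense.
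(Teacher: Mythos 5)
Your overall strategy is the paper's: reduce the Hessian bound to a gradient bound via Definition \ref{def:L0L1_smooth}, bound the anchor gradient through $\twonorm{\nabla f(\bvx_r)}\le\twonorm{\nabla f(\bvx_r)-\mG_r}+\twonorm{\mG_r}\le\sigma+\gamma/\eta$ on $\gA_r$, and transfer to $\vx$ with Lemma \ref{lemma:smooth_grad_diff} using $2\eta I\Gamma\le 1$. But the order in which you invoke Assumption \ref{assume:object}(iv) breaks the stated constant. You apply it at the point $\vx$, writing $\twonorm{\nabla f_i(\vx)}\le\kappa+\rho\twonorm{\nabla f(\vx)}$, and then bound $\twonorm{\nabla f(\vx)}\le\twonorm{\nabla f(\bvx_r)}+\twonorm{\nabla f(\vx)-\nabla f(\bvx_r)}\le(\sigma+\gamma/\eta)+(2\sigma+\gamma/\eta)$. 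This yields $\twonorm{\nabla f_i(\vx)}\le\kappa+\rho(3\sigma+2\gamma/\eta)$, in which the within-round drift term carries the factor $\rho$. The claimed bound $\kappa+(\rho+1)(2\sigma+\gamma/\eta)$ requires $\rho(\sigma+\gamma/\eta)\le 2\sigma+\gamma/\eta$, which fails whenever $\rho>1$ (e.g.\ $\rho=2$, $\sigma=0$). So your decomposition proves a strictly weaker inequality, and your remark that $(\rho+1)$ ``absorbs both the $\rho$ from heterogeneity and the unit contribution of the drift'' is inconsistent with your own reduction: under your ordering the drift contributes $\rho$, not $1$. The paper's proof applies the triangle inequality to $\nabla f_i$ first, bounds the drift $\twonorm{\nabla f_i(\vx)-\nabla f_i(\bvx_r)}\le 2\sigma+\gamma/\eta$ per client via Lemma \ref{lemma:smooth_grad_diff} applied to $f_i$, and invokes Assumption \ref{assume:object}(iv) only at the anchor term $\twonorm{\nabla f_i(\bvx_r)}\le\kappa+\rho(\sigma+\gamma/\eta)$; that ordering is what produces $(\rho+1)$.

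A secondary problem with your preferred transfer route: applying Lemma \ref{lemma:smooth_grad_diff} to $f$ with the effective constants $(L_0+L_1\kappa,\,L_1\rho)$ requires $\twonorm{\vx-\bvx_r}\le C/(L_1\rho)$, whereas the hypotheses of Lemma \ref{lemma:individual_dis} only guarantee $\twonorm{\vx-\bvx_r}\le C/L_1$, which is weaker when $\rho>1$. Your alternative of carrying out the transfer on each $f_i$ and averaging avoids this, but once you do that you are essentially in the paper's argument and should also move the heterogeneity step to the anchor point, which simultaneously repairs the constant.
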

From \eqref{eq:non-clipping_dis_as_bound}, we can see that all iterations in the $r$-th round satisfy the condition of Lemma \ref{lemma:non_clipping_hessian} almost surely. Hence we are guaranteed that each local loss $f_i$ is $L$-smooth over the iterations in this round under the event $\gA_r$, where $L = L_0 + L_1(\kappa + (\rho+1)(\gamma/\eta + 2\sigma))$. In light of this, the following lemma gives a bound in expectation of the individual discrepancy. We denote $p_r = \E_r[\indicator{\gA_r}]$.

\begin{lemma}\label{lemma:non_clipping_discre_expec}
Under the conditions of Lemma \ref{lemma:individual_dis}, for any $i \in [N]$ and $t-1 \in \gI_r$, we have
\begin{align}
    \E_r\LRm{\indicator{\gA_r}\twonorm{\vx_t^i - \bvx_r}^2}&\leq 36p_r I^2\eta^2\twonorm{\nabla f(\bvx_r)}^2 + 126p_r I^2 \eta^2 \sigma^2,\label{eq:drift_expectation_bound_quadratic_1}\\
    \E_r\LRm{\indicator{\gA_r}\twonorm{\vx_t^i - \bvx_r}^2}&\leq 18p_rI^2\eta \gamma \twonorm{\nabla f(\bvx_r)} + 18p_r I^2\eta^2 \LRs{\gamma\sigma/\eta + 5\sigma^2}.
    \label{eq:drift_expectation_bound_linear_1}
\end{align}
\end{lemma}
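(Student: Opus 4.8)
The plan is to bound the per-client drift $\twonorm{\vx_t^i - \bvx_r}$ directly on the non-clipping event $\gA_r$ by a discrete Gronwall argument, then square and take the conditional expectation $\E_r[\indicator{\gA_r}\,\cdot\,]$, pulling the indicator out to produce the factor $p_r$. First I would observe that on $\gA_r$ the update in line 10 is the non-clipped one, so unrolling from the synchronized point $\vx_{t_r}^i=\bvx_r$ gives $\vx_t^i-\bvx_r=-\eta\sum_{s=t_r}^{t-1}\vg_s^i$, a sum of at most $I$ terms. Writing $\mG_r^i=\nabla f_i(\bvx_r)+\bm{\delta}_r^i$ and $\mG_r=\nabla f(\bvx_r)+\bar{\bm{\delta}}_r$ with $\bar{\bm{\delta}}_r=\frac1N\sum_j\bm{\delta}_r^j$, Assumption \ref{assume:object}(iii) gives $\twonorm{\bm{\delta}_r^i}\leq\sigma$, $\twonorm{\bar{\bm{\delta}}_r}\leq\sigma$, and $\twonorm{\bm{\epsilon}_s^i}\leq\sigma$ for $\bm{\epsilon}_s^i:=\nabla F_i(\vx_s^i;\xi_s^i)-\nabla f_i(\vx_s^i)$. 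Thus $\vg_s^i=\LRs{\nabla f_i(\vx_s^i)-\nabla f_i(\bvx_r)}+\nabla f(\bvx_r)+\bm{\epsilon}_s^i-\bm{\delta}_r^i+\bar{\bm{\delta}}_r$, and applying Lemma \ref{lemma:non_clipping_hessian} (which is legitimate because Lemma \ref{lemma:individual_dis} places every $\vx_s^i$ in the convex ball of radius $2\eta I(2\sigma+\gamma/\eta)$ on which $f_i$ is $L$-smooth, $L=L_0+L_1(\kappa+(\rho+1)(\gamma/\eta+2\sigma))$) yields the almost-sure bound $\twonorm{\vg_s^i}\leq L\twonorm{\vx_s^i-\bvx_r}+\twonorm{\nabla f(\bvx_r)}+3\sigma$ on $\gA_r$.

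Next, combining this with the triangle inequality $\twonorm{\vx_t^i-\bvx_r}\leq\eta\sum_{s=t_r}^{t-1}\twonorm{\vg_s^i}$ produces, for $a_t:=\twonorm{\vx_t^i-\bvx_r}$ with $a_{t_r}=0$, the recursion $a_t\leq\eta L\sum_{s=t_r}^{t-1}a_s+\eta I\LRs{\twonorm{\nabla f(\bvx_r)}+3\sigma}$. Since the hyperparameter conditions of Lemma \ref{lemma:individual_dis} force $\eta I L\leq\tfrac12$ (here $L$ is dominated by $\Gamma=AL_0+BL_1\kappa+BL_1\rho(\sigma+\gamma/\eta)$ from Theorem \ref{thm:main} once $A\geq1$ and $B$ is large enough relative to $\rho$), a short induction gives the sharpened almost-sure bound $a_t\leq 2\eta I\LRs{\twonorm{\nabla f(\bvx_r)}+3\sigma}$ on $\gA_r$. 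This is the crucial gain over the crude estimate of Lemma \ref{lemma:individual_dis}: the drift is now governed by $\twonorm{\nabla f(\bvx_r)}$ rather than by the clipping scale $\gamma/\eta$.

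For the quadratic bound \eqref{eq:drift_expectation_bound_quadratic_1} I would square, apply $(x+y)^2\leq 2x^2+2y^2$, and take $\E_r[\indicator{\gA_r}\,\cdot\,]$; because $\twonorm{\nabla f(\bvx_r)}$ is $\gF_r$-measurable the entire deterministic factor pulls out, leaving $\E_r[\indicator{\gA_r}]=p_r$ and a bound of the form $p_r I^2\eta^2\LRs{\twonorm{\nabla f(\bvx_r)}^2+\sigma^2}$ with explicit constants. For the linear bound \eqref{eq:drift_expectation_bound_linear_1} I would instead multiply the two almost-sure estimates for $a_t$ on $\gA_r$, namely the Gronwall bound $2\eta I\LRs{\twonorm{\nabla f(\bvx_r)}+3\sigma}$ and the Lemma \ref{lemma:individual_dis} bound $2\eta I(2\sigma+\gamma/\eta)$, to get $a_t^2\leq 4\eta^2 I^2\LRs{\twonorm{\nabla f(\bvx_r)}+3\sigma}(2\sigma+\gamma/\eta)$; expanding and replacing the residual $\twonorm{\nabla f(\bvx_r)}$ in the cross terms by $\gamma/\eta+\sigma$ (valid on $\gA_r$, since $\twonorm{\mG_r}\leq\gamma/\eta$ and $\mG_r=\nabla f(\bvx_r)+\bar{\bm{\delta}}_r$) isolates exactly the terms $\eta\gamma I^2\twonorm{\nabla f(\bvx_r)}$, $\eta\gamma\sigma I^2$, and $\eta^2\sigma^2 I^2$; taking $\E_r[\indicator{\gA_r}\,\cdot\,]$ again factors out $p_r$.

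I expect the main obstacle to be the self-referential Gronwall step: one must argue that the drift term genuinely re-expresses in terms of $\twonorm{\nabla f(\bvx_r)}$ rather than the looser $\gamma/\eta$ scale of Lemma \ref{lemma:individual_dis}, and verify that $\eta I L\leq\tfrac12$ really follows from the stated step-size constraints, which requires comparing the effective smoothness constant $L$ against $\Gamma$ and tracking the constants $A,B,\rho$. The remaining work---the Young-type expansions and the measurability argument producing $p_r$---is routine bookkeeping, and the precise constants $36,126,18$ are an artifact of the particular grouping of terms rather than anything essential.
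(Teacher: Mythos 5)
Your strategy is sound and genuinely different from the paper's. The paper never derives an almost-sure drift bound in terms of $\twonorm{\nabla f(\bvx_r)}$; instead it sets up a recursion \emph{in conditional expectation} for $\E_r[\indicator{\gA_r}\twonorm{\vx_{t+1}^i-\bvx_r}^2]$: it expands the square, kills the cross term with the per-step noise $\nabla F_i(\vx_t^i;\xi_t^i)-\nabla f_i(\vx_t^i)$ by conditioning on $\gH_t$, bounds $\twonorm{\nabla f_i(\vx_t^i)-\mG_r^i+\mG_r}^2$ via the Hessian bound of Lemma \ref{lemma:non_clipping_hessian} plus a $\twonorm{\mG_r}^2$ term, and unrolls the resulting geometric recursion using $(1+2/I)^I(I+1)\le e^2 I$. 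Your route replaces all of this with a single almost-sure Gronwall induction $a_t\le \eta L\sum_{s}a_s+\eta I(\twonorm{\nabla f(\bvx_r)}+3\sigma)$ followed by squaring; this is legitimate here precisely because Assumption \ref{assume:object}(iii) bounds the noise almost surely (under a mere variance bound only the paper's expectation recursion would survive), and for \eqref{eq:drift_expectation_bound_quadratic_1} it yields even better constants ($8$ and $72$ in place of $36$ and $126$). Both routes share the same delicate step, which you correctly flag: deducing $\eta I L\le 1/2$ for $L=L_0+L_1(\kappa+(\rho+1)(\gamma/\eta+2\sigma))$ from $2\eta I\Gamma\le 1$ requires $B$ to be moderately large (roughly $B\ge 4$, attainable by taking $C$ large enough in Lemmas \ref{lemma:smooth_obj_descent} and \ref{lemma:smooth_grad_diff}); the paper asserts the analogous inequality with the same justification, so this is a shared wrinkle rather than a new gap. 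The one place your sketch does not literally recover the statement is \eqref{eq:drift_expectation_bound_linear_1}: multiplying your two almost-sure bounds and substituting $\twonorm{\nabla f(\bvx_r)}\le\sigma+\gamma/\eta$ in the cross term gives $4\eta\gamma I^2\twonorm{\nabla f(\bvx_r)}+20\eta\gamma\sigma I^2+32\eta^2\sigma^2I^2$, whose middle coefficient exceeds the stated $18$ and is not dominated by the stated right-hand side when $\gamma/\eta\gg\sigma$ and $\nabla f(\bvx_r)=0$. The paper instead obtains \eqref{eq:drift_expectation_bound_linear_1} by stopping the unrolled recursion at $18I^2\eta^2\E_r[\indicator{\gA_r}\twonorm{\mG_r}^2]+90p_rI^2\eta^2\sigma^2$ and then linearizing only one factor via $\twonorm{\mG_r}^2\le(\gamma/\eta)(\sigma+\twonorm{\nabla f(\bvx_r)})$ on $\gA_r$. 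Since these constants are consumed downstream in Lemma \ref{lemma:descent} and Theorem \ref{thm:main}, you would either need to adopt that grouping or propagate your own constants through the later proofs; the rest of your argument goes through.
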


It is worthwhile noting that the bound in \eqref{eq:drift_expectation_bound_quadratic_1} involves a quadratic term of $\twonorm{\nabla f(\bvx_r)}$, whereas it is linear in \eqref{eq:drift_expectation_bound_linear_1}. The role of the linear bound is to deal with $\indicator{\gA_r}\twonorm{\nabla f(\bvx_r)}\twonorm{\bvx_{r+1} - \bvx_r}^2$ from the descent inequality \eqref{eq:obj_diff}, since directly substituting \eqref{eq:drift_expectation_bound_quadratic_1} would result in a cubic term which is hard to analyze. With Lemma \ref{lemma:individual_dis}, \ref{lemma:non_clipping_hessian} and \ref{lemma:non_clipping_discre_expec}, we obtain the following descent inequality.

\begin{lemma} \label{lemma:descent}
Under the conditions of Lemma \ref{lemma:individual_dis}, let $\Gamma = AL_0 + BL_1(\kappa+ \rho (\gamma/\eta+\sigma))$. Then it holds that for each $0\leq r\leq R-1$,
\begin{align}\label{eq:final_descent_ineuqality}
    &\E_r \left[ f(\bvx_{r+1}) - f(\bvx_r)\right] \leq \E_r \LRm{\indicator{\gA_r} V(\bvx_r)} +  \E_r \LRm{\indicator{\bar{\gA}_r} U(\bvx_r)},
\end{align}
where the definitions of $V(\bvx_r)$ and $U(\bvx_r)$ are given in Appendix \ref{proof:lemma:descent}.
\end{lemma}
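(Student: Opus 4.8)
The plan is to begin from the second-order descent inequality \eqref{eq:obj_diff}, whose validity requires $\twonorm{\bvx_{r+1}-\bvx_r}\leq C/L_1$ almost surely; this is exactly furnished by Lemma \ref{lemma:individual_dis}, since $\twonorm{\bvx_{r+1}-\bvx_r}\leq\frac{1}{N}\sum_{i=1}^N\twonorm{\vx_{t_{r+1}}^i-\bvx_r}$ and the per-client bounds $2\eta I(2\sigma+\gamma/\eta)$ on $\gA_r$ and $\gamma I$ on $\bar{\gA}_r$ both fall below $C/L_1$ under the stated hypotheses. I would then split the right-hand side of \eqref{eq:obj_diff} along the disjoint events $\gA_r=\{\twonorm{\mG_r}\leq\gamma/\eta\}$ and $\bar{\gA}_r$, treating the first-order (inner product) and second-order (quadratic) contributions separately in each case. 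On $\gA_r$ the aggregate update unrolls as $\bvx_{r+1}-\bvx_r=-\eta\frac{1}{N}\sum_{i=1}^N\sum_{t\in\gI_r}\vg_t^i$, whereas on $\bar{\gA}_r$ every local step has norm exactly $\gamma$.

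For the non-clipping event $\gA_r$, the first-order term is the crux. Conditioning first on the start-of-round randomness (so that $\mG_r$, $\mG_r^i$ and the indicator $\indicator{\gA_r}$ are frozen) and then on the within-round history up to step $t$, I would combine unbiasedness $\E[\nabla F_i(\vx_t^i;\xi_t^i)\mid\cdot]=\nabla f_i(\vx_t^i)$ with the exact cancellation $\frac{1}{N}\sum_{i=1}^N(-\mG_r^i+\mG_r)=\vzero$ (which holds because $\mG_r=\frac{1}{N}\sum_i\mG_r^i$) to get $\frac{1}{N}\sum_i\E[\vg_t^i\mid\cdot]=\frac{1}{N}\sum_i\nabla f_i(\vx_t^i)$. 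Writing $\frac{1}{N}\sum_i\nabla f_i(\vx_t^i)=\nabla f(\bvx_r)+\frac{1}{N}\sum_i(\nabla f_i(\vx_t^i)-\nabla f_i(\bvx_r))$ isolates a leading term that, after applying $\E_r$, contributes $-\eta I\,p_r\twonorm{\nabla f(\bvx_r)}^2$, while the remainder is controlled by the local Lipschitzness $\twonorm{\nabla f_i(\vx_t^i)-\nabla f_i(\bvx_r)}\leq L\twonorm{\vx_t^i-\bvx_r}$ with $L=L_0+L_1(\kappa+(\rho+1)(\gamma/\eta+2\sigma))$ supplied by Lemma \ref{lemma:non_clipping_hessian}, whose domain condition is met at every iterate by \eqref{eq:non-clipping_dis_as_bound}. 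A Young's inequality then splits the cross term into a multiple of $\twonorm{\nabla f(\bvx_r)}^2$ and a multiple of $\twonorm{\vx_t^i-\bvx_r}^2$, both bounded in expectation by Lemma \ref{lemma:non_clipping_discre_expec}. For the second-order contribution on $\gA_r$ I would pass via Jensen to $\frac{1}{N}\sum_i\twonorm{\vx_{t_{r+1}}^i-\bvx_r}^2$ and invoke Lemma \ref{lemma:non_clipping_discre_expec}, using the quadratic bound \eqref{eq:drift_expectation_bound_quadratic_1} for the $AL_0$ part and, crucially, the linear bound \eqref{eq:drift_expectation_bound_linear_1} for the $BL_1\twonorm{\nabla f(\bvx_r)}$ part, so as to avoid an unmanageable cubic term in $\twonorm{\nabla f(\bvx_r)}$. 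Collecting these yields $\E_r[\indicator{\gA_r}V(\bvx_r)]$.

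For the clipping event $\bar{\gA}_r$, each local increment is $-\gamma\vg_t^i/\twonorm{\vg_t^i}$ of norm $\gamma$, so Lemma \ref{lemma:individual_dis} gives $\twonorm{\bvx_{r+1}-\bvx_r}\leq\gamma I$ almost surely, and the second-order term is immediately bounded by $\frac{AL_0+BL_1\twonorm{\nabla f(\bvx_r)}}{2}(\gamma I)^2$. The first-order term I would handle by Cauchy--Schwarz, with the descent driven by the observation that $\twonorm{\mG_r}\geq\gamma/\eta$ on $\bar{\gA}_r$ forces $\twonorm{\nabla f(\bvx_r)}$ to be large up to the noise $\sigma$ and the heterogeneity encoded in $\mG_r-\nabla f(\bvx_r)$, thereby producing a contribution proportional to $-\gamma I\twonorm{\nabla f(\bvx_r)}$. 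Assembling these pieces gives $\E_r[\indicator{\bar{\gA}_r}U(\bvx_r)]$, and the two event contributions combine into \eqref{eq:final_descent_ineuqality}.

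The step I expect to be the main obstacle is the first-order analysis on $\gA_r$: the indicator $\indicator{\gA_r}$ and the iterates $\vx_t^i$ are coupled through the shared resampled variate $\mG_r$, so expectations cannot be taken naively and must be organized by a careful two-stage conditioning that first freezes the start-of-round samples and then unrolls the within-round martingale structure. Extracting a clean leading term $-\eta I\,p_r\twonorm{\nabla f(\bvx_r)}^2$ while guaranteeing that every error term is either proportional to $\twonorm{\nabla f(\bvx_r)}^2$ (hence absorbable into the descent) or to a drift quantity already controlled by Lemma \ref{lemma:non_clipping_discre_expec} is the delicate bookkeeping, and the factor $p_r=\E_r[\indicator{\gA_r}]$ appearing in those drift bounds must be tracked consistently so that the $\gA_r$ and $\bar{\gA}_r$ terms assemble into the stated $V(\bvx_r)$ and $U(\bvx_r)$.
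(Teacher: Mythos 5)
Your overall architecture matches the paper's: the same descent inequality from Lemma \ref{lemma:smooth_obj_descent} justified via Lemma \ref{lemma:individual_dis}, the same split over $\gA_r$ and $\bar{\gA}_r$, the same cancellation $\frac{1}{N}\sum_i(-\mG_r^i+\mG_r)=\vzero$ after two-stage conditioning, and the same crucial use of the linear drift bound \eqref{eq:drift_expectation_bound_linear_1} for the $BL_1\twonorm{\nabla f(\bvx_r)}\twonorm{\bvx_{r+1}-\bvx_r}^2$ term. However, two steps would not deliver the stated $V$ and $U$. First, for the $\frac{AL_0}{2}\E_r[\twonorm{\bvx_{r+1}-\bvx_r}^2]$ term on $\gA_r$, passing via Jensen to $\frac{1}{N}\sum_i\twonorm{\vx_{t_{r+1}}^i-\bvx_r}^2$ and invoking \eqref{eq:drift_expectation_bound_quadratic_1} yields a noise contribution of order $AL_0 I^2\eta^2\sigma^2$ with no $1/N$ factor, whereas the stated $V(\bvx_r)$ contains $\frac{2AL_0 I\eta^2\sigma^2}{N}$ --- and that $1/N$ is precisely what gives Theorem \ref{thm:main} its linear speedup. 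The paper's Claim \ref{claim:quadratic} avoids Jensen over clients: it keeps the client average intact, splits off the mean part $\frac{1}{N}\sum_i\sum_t\nabla f_i(\vx_t^i)$, and uses independence of the per-client noises to obtain the variance term $\frac{4p_r I\sigma^2\eta^2}{N}$. The leftover term $4\eta^2\E_r[\indicator{\gA_r}\twonorm{\frac{1}{N}\sum_i\sum_t\nabla f_i(\vx_t^i)}^2]$ is then cancelled against a matching negative term $-\frac{\eta}{2I}\E_r[\cdots]$ that the paper deliberately retains from the first-order analysis via the identity $2\inprod{a}{b}=\twonorm{a}^2+\twonorm{b}^2-\twonorm{a-b}^2$ (Claim \ref{claim:inner_prod_no_clip}); your Young's-inequality split of the cross term discards that negative term, so this cancellation is unavailable to you.

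Second, on $\bar{\gA}_r$ Cauchy--Schwarz points the wrong way: it bounds $-\gamma\inprod{\nabla f(\bvx_r)}{\vg_t^i/\twonorm{\vg_t^i}}$ from above by $+\gamma\twonorm{\nabla f(\bvx_r)}$, and the observation that $\twonorm{\nabla f(\bvx_r)}$ is large on $\bar{\gA}_r$ does not by itself make the normalized inner product sufficiently positive. The paper instead applies Lemma \ref{lemma:clip_inprod} with $\mu=2/5$, together with the almost-sure bounds $\twonorm{\vg_t^i}\geq\gamma/\eta-2\sigma-\twonorm{\nabla f_i(\vx_t^i)-\nabla f_i(\bvx_r)}$ and $\twonorm{\vg_t^i-\nabla f(\bvx_r)}\leq 3\sigma+\twonorm{\nabla f_i(\vx_t^i)-\nabla f_i(\bvx_r)}$ (the drift controlled by Lemma \ref{lemma:smooth_grad_diff} under $\twonorm{\vx_t^i-\bvx_r}\leq\gamma I$), to produce both the $-\frac{2}{5}\gamma I\twonorm{\nabla f(\bvx_r)}$ term and the $-\frac{3\gamma^2 I}{5\eta}$ term of $U(\bvx_r)$; the latter is essential in the proof of Theorem \ref{thm:main} to absorb $\gamma^2 I^2(3AL_0+2BL_1\kappa)+6\gamma I\sigma$, and your sketch provides no mechanism for generating it. Both gaps are repairable, but only by replacing your stated tools with the ones above.
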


The detailed proof of Lemma \ref{lemma:descent} is deferred in Appendix \ref{proof:lemma:descent}. With this Lemma, the descent inequality is divided into $V(\bvx_r)$ (objective value decrease during the non-clipping rounds) and  $U(\bvx_r)$ (objective value decrease during the clipping rounds). Plugging in the choices of $\eta$ and $\gamma$ yields
\begin{align}\label{eq:UV_bound}
    \max \left\{ U(\bvx_r), V(\bvx_r) \right\} \leq -\frac{1}{4}\epsilon \eta I \twonorm{\nabla f(\bvx_r)} + \frac{1}{2}\epsilon^2 \eta I.
\end{align}
The conclusion of Theorem \ref{thm:main} can then be obtained by substituting \eqref{eq:UV_bound} into \eqref{eq:final_descent_ineuqality} and summing over $r$.

\vspace*{-0.15in}
\section{Experiments} \label{sec:experiments}
\vspace*{-0.1in}

In this section, we present an empirical evaluation of EPISODE to validate our
theory. We present results in the heterogeneous FL setting on three diverse
tasks: a synthetic optimization problem satisfying $(L_0, L_1)$-smoothness, natural language 
inference on the SNLI dataset \citep{snli:emnlp2015}, and ImageNet classification
\citep{deng2009imagenet}. We compare EPISODE against FedAvg \citep{mcmahan2016communication}, SCAFFOLD \citep{karimireddy2020scaffold},
CELGC \citep{liu2022communication}, and a naive distributed algorithm which we refer to as Naive Parallel Clip \footnote{
Naive Parallel Clip uses the globally averaged stochastic gradient obtained from synchronization at every iteration to run SGD with gradient clipping on the global objective.
}
We include additional experiments on the CIFAR-10 dataset~\citep{krizhevsky2009learning} in Appendix \ref{appen:cifar}, running time results in Appendix~\ref{appen:running_time}, ablation study in Appendix~\ref{append:ablation}, and new experiments on federated learning benchmark datasets in Appendix~\ref{app:LEAF}.
\vspace*{-0.05in}
\subsection{Setup}
\vspace*{-0.05in}

All non-synthetic experiments were implemented with PyTorch \citep{paszke2019pytorch} and run on a cluster with eight NVIDIA Tesla V100 GPUs.
Since SNLI , CIFAR-10, and ImageNet are centralized datasets,
we follow the non-i.i.d. partitioning protocol in \citep{karimireddy2020scaffold} to split
each dataset into heterogeneous client datasets with varying label distributions.
Specifically, for a similarity parameter $s \in [0, 100]$, each client's local dataset is composed of two parts. The first $s\%$ is comprised of i.i.d. samples from the complete dataset, and the remaining $(100-s)\%$ of data is sorted by label.

\paragraph{Synthetic}
To demonstrate the behavior of EPISODE and baselines under $(L_0, L_1)$-smoothness, we consider a simple minimization problem in a single variable. Here we have $N=2$ clients with:
\begin{align*}
    f_1(x) = x^4 - 3x^3 + Hx^2 + x, \quad f_2(x) = x^4 - 3x^3 - 2Hx^2 + x,
\end{align*}
where the parameter $H$ controls the heterogeneity between the two clients. Notice that $f_1$ and $f_2$ satisfy $(L_0, L_1)$-smoothness but not traditional $L$-smoothness. 
\begin{proposition}\label{proposition:H_kappa}
For any $x \in \sR$ and $i=1,2$, it holds that $\twonorm{\nabla f_i(x)} \leq 2\twonorm{\nabla f(x)} + \kappa(H)$, where $\kappa(H) < \infty$ and is a positive increasing function of $H$ for $H \geq 1$.
\end{proposition}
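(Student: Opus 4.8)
The plan is to reduce the claim to an explicit one-dimensional supremum and then analyze how it depends on $H$. First I would compute the gradients directly,
$$\nabla f_1(x) = 4x^3 - 9x^2 + 2Hx + 1, \qquad \nabla f_2(x) = 4x^3 - 9x^2 - 4Hx + 1,$$
so that $\nabla f(x) = \tfrac{1}{2}(\nabla f_1 + \nabla f_2)(x) = 4x^3 - 9x^2 - Hx + 1$. The crucial observation is that the heterogeneity enters only through the linear terms: $\nabla f_1(x) - \nabla f(x) = 3Hx$ and $\nabla f_2(x) - \nabla f(x) = -3Hx$, hence $\abs{\nabla f_i(x) - \nabla f(x)} = 3H\abs{x}$ for both $i$. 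By the triangle inequality $\abs{\nabla f_i(x)} \leq \abs{\nabla f(x)} + 3H\abs{x}$, so it suffices to define
$$\kappa(H) := \sup_{x \in \sR}\LRs{3H\abs{x} - \abs{\nabla f(x)}}$$
and prove that this quantity is finite, positive, and increasing in $H$; then $\abs{\nabla f_i(x)} \leq \abs{\nabla f(x)} + 3H\abs{x} \leq 2\abs{\nabla f(x)} + \kappa(H)$, which is the claim with $\rho = 2$.

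Finiteness follows from the growth of the cubic: writing $\phi(x,H) := 3H\abs{x} - \abs{\nabla f(x)}$, as $\abs{x} \to \infty$ the term $\abs{\nabla f(x)}$ grows like $4\abs{x}^3$ and dominates the linear gain $3H\abs{x}$, so $\phi(\cdot,H) \to -\infty$ at both ends and, being continuous, attains a finite maximum. Positivity follows by evaluating $\phi$ at a zero of $\nabla f$: since $\nabla f(0) = 1 > 0$ and $\nabla f(x) \to -\infty$ as $x \to -\infty$, there is some $x_0 < 0$ with $\nabla f(x_0) = 0$, so $\phi(x_0,H) = 3H\abs{x_0} > 0$ and therefore $\kappa(H) > 0$.

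The main obstacle is monotonicity in $H$, which is delicate precisely because $H$ enters $\phi$ twice — once in the gain term $3H\abs{x}$ and once, with opposite sign, inside $\nabla f$ through the $-Hx$ term, so the two effects partly cancel. The clean way around this is a finite-difference estimate via the reverse triangle inequality, which avoids any nondifferentiability issues at the zeros of $\nabla f$. Denoting $\nabla f_H(x) = 4x^3 - 9x^2 - Hx + 1$, for any $x \in \sR$ and $H_2 > H_1 \geq 1$ we have $\abs{\nabla f_{H_2}(x)} - \abs{\nabla f_{H_1}(x)} \leq \abs{\nabla f_{H_2}(x) - \nabla f_{H_1}(x)} = (H_2 - H_1)\abs{x}$, hence
$$\phi(x,H_2) - \phi(x,H_1) \geq 3(H_2 - H_1)\abs{x} - (H_2 - H_1)\abs{x} = 2(H_2 - H_1)\abs{x} \geq 0.$$
I would then conclude by the standard supremum-monotonicity argument, obtaining strictness from the fact that the maximizer is nonzero: letting $x^\ast$ be a maximizer for $H_1$ (which satisfies $x^\ast \neq 0$, since $\phi(0,H) = -1 < 0$ whereas $\kappa(H_1) > 0$), the display above gives $\kappa(H_2) \geq \phi(x^\ast,H_2) > \phi(x^\ast,H_1) = \kappa(H_1)$, so $\kappa$ is strictly increasing on $[1,\infty)$. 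This completes the proof.
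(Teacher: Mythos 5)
Your proposal is correct, and while it opens with the same reduction as the paper --- the triangle inequality together with $\nabla f_i(x) - \nabla f(x) = \pm 3Hx$, which turns the claim into bounding $3H\abs{x} - \abs{\nabla f(x)}$ --- it diverges substantially in how that quantity is controlled. The paper further relaxes it (peeling off the $-Hx$ and $+1$ terms by two more triangle inequalities, and loosening $4H\abs{x}$ to $10H\abs{x}$) until it reaches $g(x) = 10H\abs{x} - \abs{4x^3 - 9x^2} + 1$, and then maximizes $g$ explicitly over the three regions $(-\infty,0)$, $(0,\tfrac94)$, $(\tfrac94,\infty)$ by locating critical points of the resulting cubics; the payoff is a closed-form $\kappa(H)$ whose monotonicity in $H$ is visible by inspection, since it is a sum of manifestly increasing terms. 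You instead take $\kappa(H)$ to be the exact supremum $\sup_x\LRs{3H\abs{x} - \abs{\nabla f(x)}}$ and establish finiteness, positivity, and monotonicity non-constructively. Your monotonicity argument is the genuinely novel part: because $H$ appears both in the gain $3H\abs{x}$ and inside $\abs{\nabla f(x)}$ with competing effect, you cannot read off monotonicity directly, and your finite-difference estimate via the reverse triangle inequality ($\abs{\nabla f_{H_2}(x)} - \abs{\nabla f_{H_1}(x)} \leq (H_2-H_1)\abs{x}$, leaving a net gain of $2(H_2-H_1)\abs{x} \geq 0$) resolves this cleanly, with strictness following from the observation that any maximizer is nonzero because $\phi(0,H) = -1 < 0 < \kappa(H)$. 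Your approach yields the tightest possible constant and even strict monotonicity, at the cost of not producing an explicit formula; the paper's approach is more computational but hands the reader a concrete $\kappa(H)$. Both are complete and valid proofs of the stated proposition.
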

According to Proposition \ref{proposition:H_kappa}, Assumption \ref{assume:object}(iv) will be satisfied with $\rho = 2$ and $\kappa = \kappa(H)$, where $\kappa(H)$ is an increasing function of $H$. The proof of this proposition is deferred to Appendix \ref{proof:proposition:H_kappa}.
\paragraph{SNLI}
Following \cite{conneau2017supervised}, we train a BiRNN network for 25
epochs using the multi-class hinge loss and a batch size of 64 on each worker. The
network is composed of a one layer BiRNN encoder with hidden size 2048 and max pooling,
and a three-layer fully connected classifier with hidden size 512. The BiRNN encodes a sentence (represented as a sequence of GloVe vectors \citep{pennington2014glove}), and the classifier predicts the relationship of two encoded sentences as either entailment, neutral, or contradiction. For more hyperparameter information, see Appendix \ref{appen:SNLI}.

To determine the effects of infrequent communication and data heterogeneity on the performance of each algorithm, we vary $I \in \{2, 4, 8, 16\}$ and $s \in \{10\%, 30\%, 50\%\}$. We compare EPISODE, CELGC, and the Naive Parallel Clip. Note that the training process diverged when using SCAFFOLD, likely due to a gradient explosion issue, since SCAFFOLD does not use gradient clipping.

\paragraph{ImageNet}
Following \cite{goyal2017accurate}, we train a ResNet-50~\citep{he2016deep} for 90 epochs using the cross-entropy loss, a batch size of 32 for each worker, clipping parameter $\gamma = 1.0$, momentum with coefficient $0.9$, and weight decay with coefficient $5 \times 10^{-5}$. We initially set the learning rate $\eta = 0.1$ and decay by a factor of 0.1 at epochs $30$, $60$, and $80$. To analyze the effect of data heterogeneity in this setting, we fix $I = 64$ and vary $s \in \{50\%, 60\%, 70\%\}$. Similarly, to analyze the effect of infrequent communication, we fix $s = 60\%$ and vary $I \in \{64, 128\}$. We compare the performance of FedAvg, CELGC, EPISODE, and SCAFFOLD.
\vspace*{-0.1in}

\subsection{Results}
\vspace*{-0.05in}

\paragraph{Synthetic}
Figure \ref{fig:synthetic_trajectory} in Appendix \ref{appen:synthetic} shows the objective value throughout training, where the heterogeneity parameter $H$ varies over $\{1, 2, 4, 8\}$. CELGC exhibits very slow optimization due to the heterogeneity across clients: as $H$ increases, the optimization progress becomes slower and slower. In contrast, EPISODE maintains fast convergence as $H$ varies. We can also see that EPISODE converges to the minimum of global loss, while CELGC fails to do so when $H$ is larger.

\begin{figure}[tb]
\begin{center}
\subfigure[Effect of $I$ (Epochs)]{
    \includegraphics[width=0.31\linewidth]{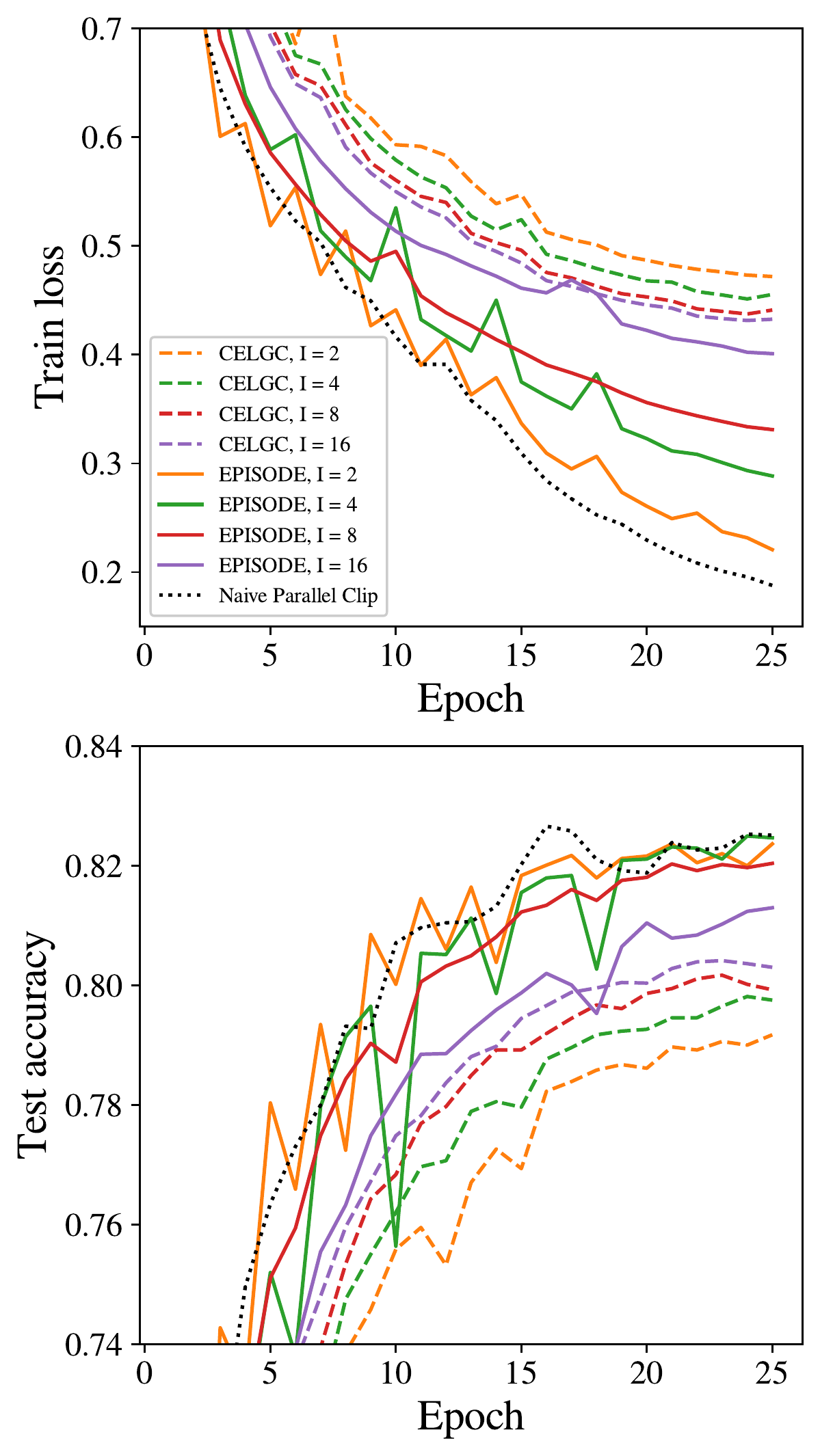}
    \label{fig:snli_I_epochs}
}
\subfigure[Effect of $I$ (Rounds)]{
    \includegraphics[width=0.31\linewidth]{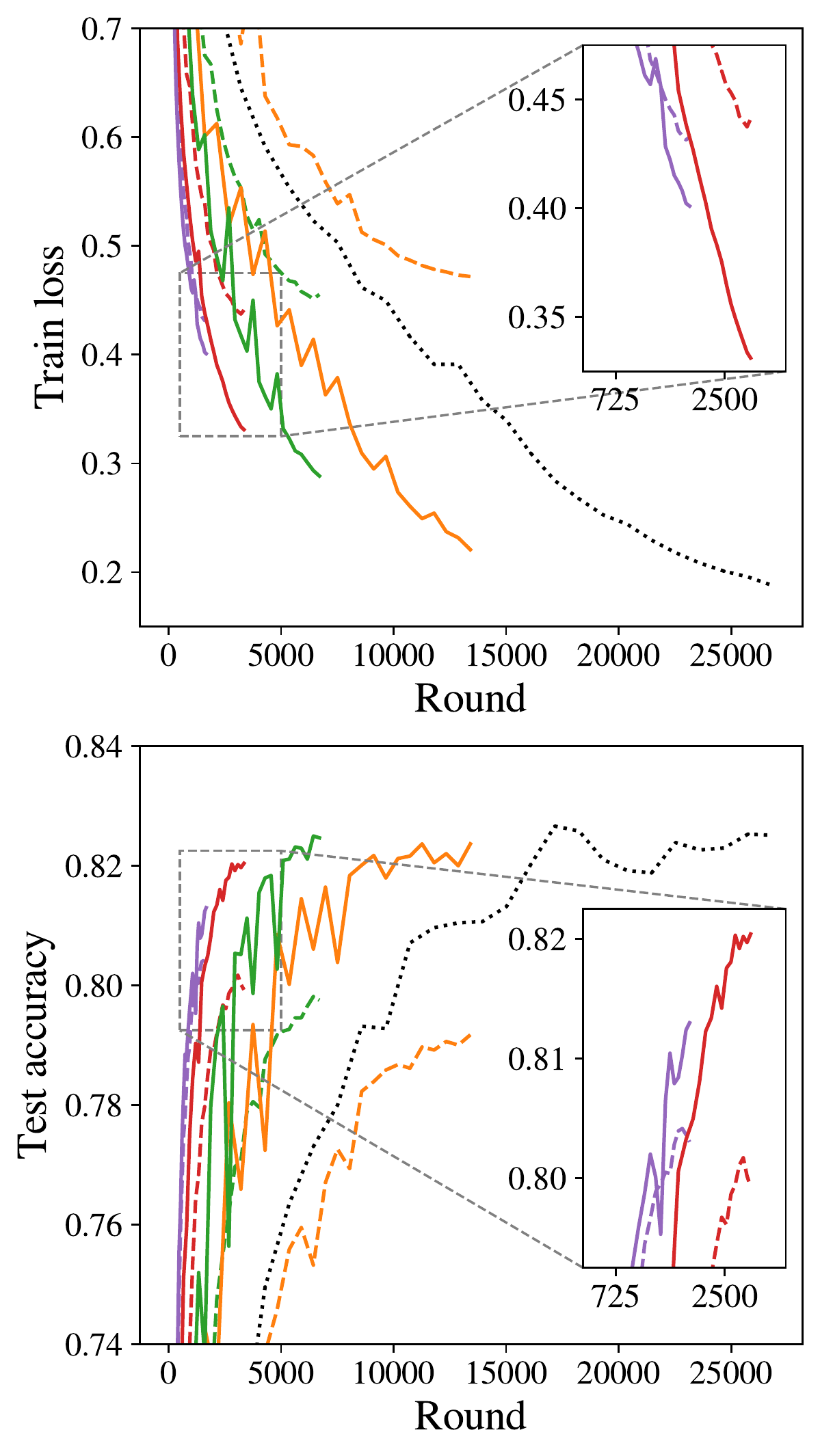}
    \label{fig:snli_I_rounds}
}
\subfigure[Effect of $s$ (Epochs)]{
    \includegraphics[width=0.31\linewidth]{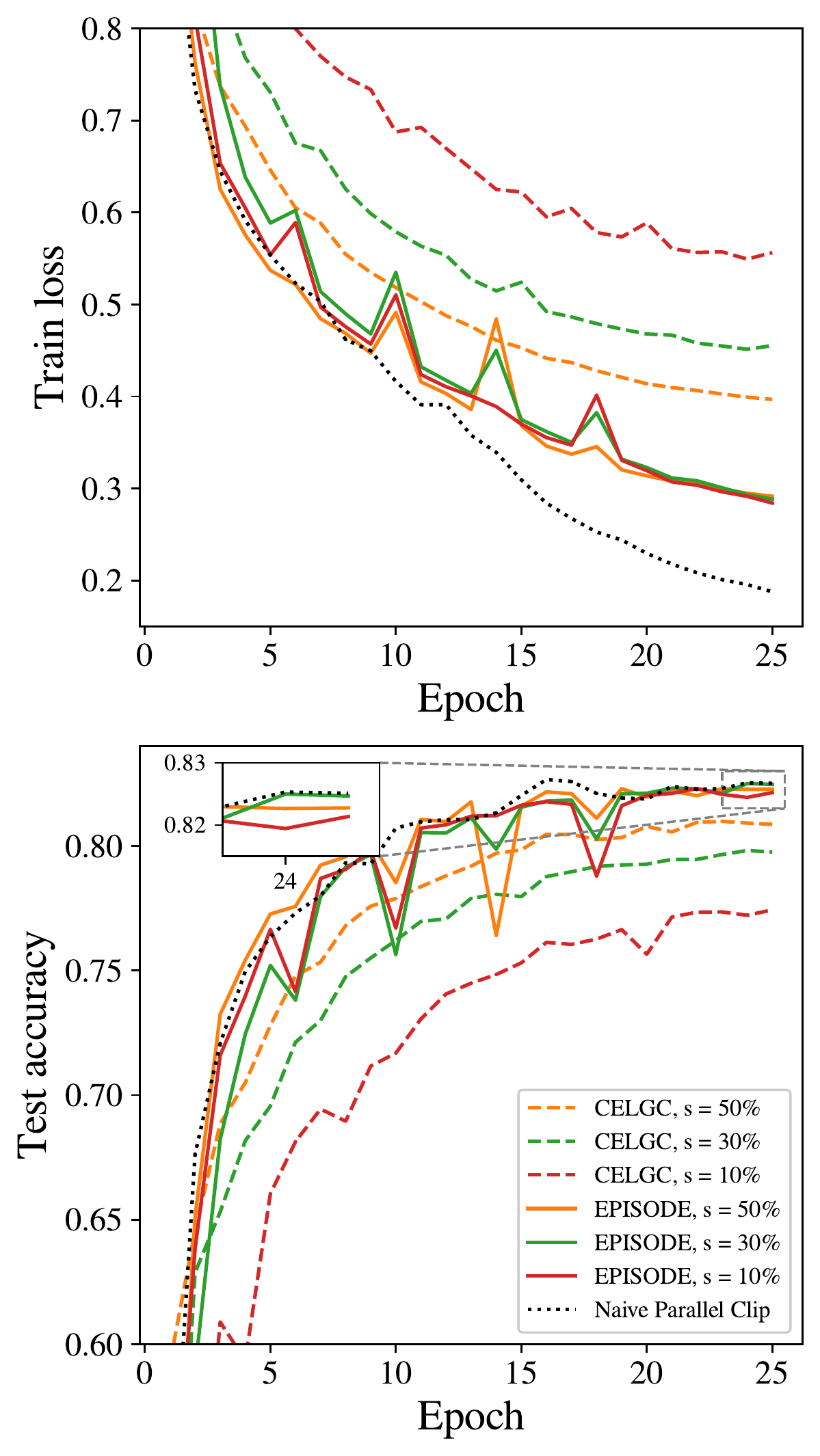}
    \label{fig:snli_kappa}
}
\end{center}
\caption{
Training loss and testing accuracy on SNLI. The style of each curve (solid, dashed, dotted) corresponds to the algorithm, while the color corresponds to either the communication interval $I$ (for (a) and (b)) or the client data similarity $s$ (for (c)). \textbf{(a), (b)} Effect of varying $I$ with $s = 30\%$, plotted against (a) epochs and (b) communication rounds. \textbf{(c)} Effect of varying $s$ with $I = 4$.
}
\label{fig:snli}
\end{figure}

\paragraph{SNLI}
Results for the SNLI dataset are shown in Figure \ref{fig:snli}. To demonstrate the effect of infrequent communication, Figures \ref{fig:snli_I_epochs} and \ref{fig:snli_I_rounds} show results for EPISODE, CELGC, and Naive Parallel Clip as the communication interval $I$ varies (with fixed $s = 30\%$). After 25 epochs, the test accuracy of EPISODE nearly matches that of Naive Parallel Clip for all $I \leq 8$, while CELGC lags 2-3\% behind Naive Parallel Clip for all values of $I$. Also, EPISODE nearly matches the test accuracy of Naive Parallel Clip with as little as $8$ times fewer communication rounds. Lastly, EPISODE requires significantly less communication rounds to reach the same training loss as CELGC. For example, EPISODE with $I = 4$, $s = 30\%$ takes less than $5000$ rounds to reach a training loss of $0.4$, while CELGC does not reach $0.4$ during the entirety of training with any $I$.

To demonstrate the effect of client data heterogeneity, Figure \ref{fig:snli_kappa} shows results for varying values of $s$ (with fixed $I = 4$). Here we can see that EPISODE is resilient against data heterogeneity: even with client similarity as low as $s = 10\%$, the performance of EPISODE is the same as $s = 50\%$. Also, the testing accuracy of EPISODE with $s = 10\%$ is nearly identical to that of the Naive Parallel Clip. On the other hand, the performance of CELGC drastically worsens with more heterogeneity: even with $s = 50\%$, the training loss of CELGC is significantly worse than EPISODE with $s = 10\%$.

\begin{figure}
\begin{minipage}{0.65\linewidth}
\centering
\begin{tabular}{@{}lllll@{}}
    \toprule
    Interval & Similarity & Algorithm & Train loss & Test acc. \\
    \midrule
    64 & 70\% & FedAvg & 1.010 & 74.89\% \\
     & & CELGC & 1.016 & 74.89\% \\
     & & SCAFFOLD & 1.024 & 74.92\% \\
     & & EPISODE & \textbf{0.964} & \textbf{75.20\%} \\
    \midrule
    64 & 60\% & FedAvg & 0.990 & 74.73\% \\
     & & CELGC & 0.979 & 74.51\% \\
     & & SCAFFOLD & 0.983 & 74.68\% \\
     & & EPISODE & \textbf{0.945} & \textbf{74.95\%} \\
    \midrule
    64 & 50\% & FedAvg & 0.955 & 74.53\% \\
     & & CELGC & 0.951 & 74.12\% \\
     & & SCAFFOLD & 0.959 & 74.19\% \\
     & & EPISODE & \textbf{0.916} & \textbf{74.81\%} \\
    \midrule
    128 & 60\% & FedAvg & 1.071 & 74.15\% \\
     & & CELGC & 1.034 & 74.24\% \\
     & & SCAFFOLD & 1.071 & 74.03\% \\
     & & EPISODE & \textbf{1.016} & \textbf{74.36\%} \\
    \bottomrule
\end{tabular}
\end{minipage}
\begin{minipage}{0.35\linewidth}
\centering
\includegraphics[width=\linewidth]{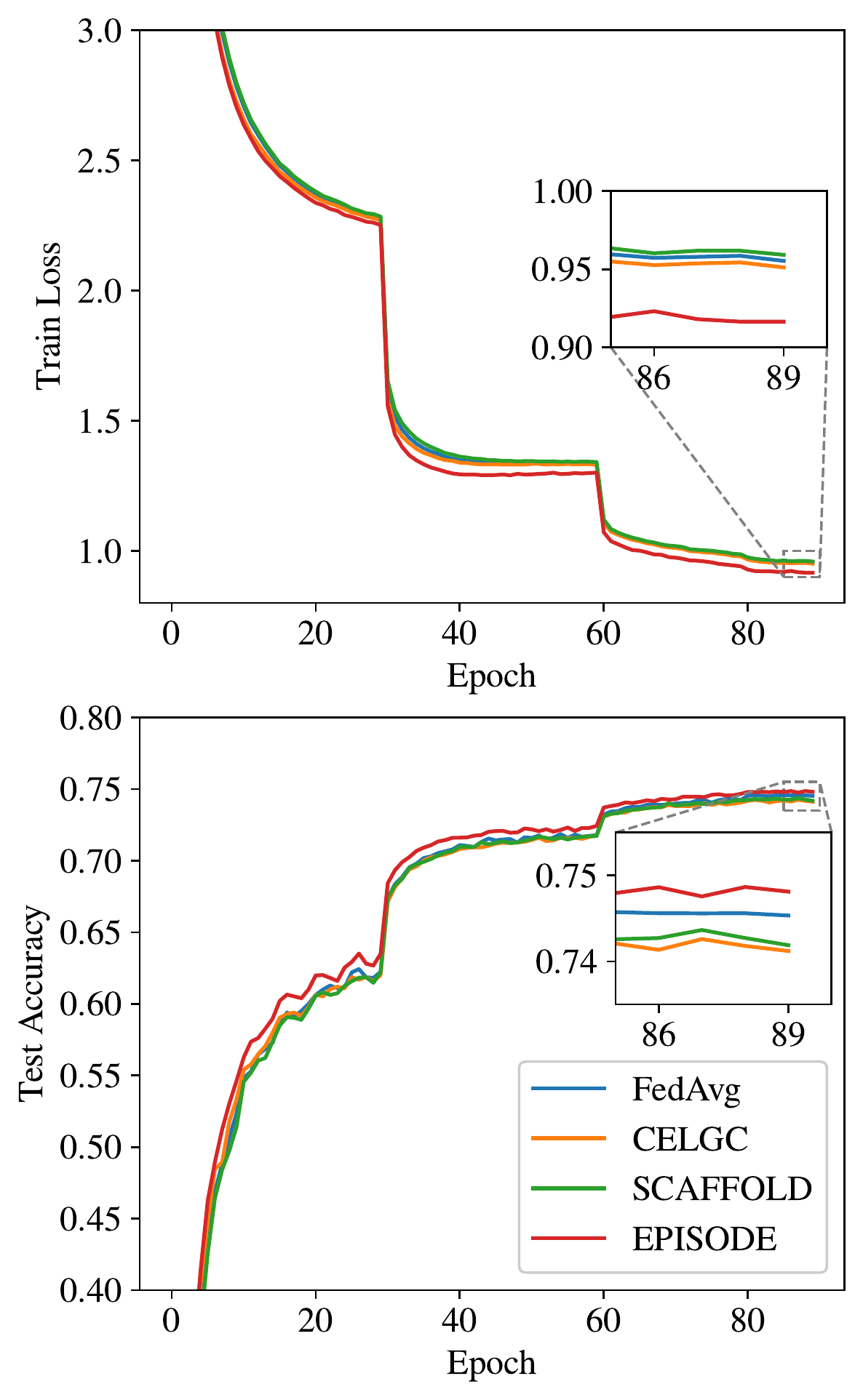}
\end{minipage}
\caption{ImageNet results. \textbf{Left:} Training loss and testing accuracy at the end of training for various settings of $I$ and $s$. EPISODE consistently reaches better final metrics in all settings. \textbf{Right:} Training loss and testing accuracy during training for $I = 64$ and $s = 50\%$.}
\vspace*{-0.2in}
\label{fig:imagenet}
\end{figure}


\vspace*{-0.1in}
\paragraph{ImageNet}
Figure \ref{fig:imagenet} shows the performance of each algorithm at the end of training for all settings (left) and during training for the setting $I = 64$ and $s = 50\%$ (right). Training curves for the rest of the settings are given in Appendix \ref{appen:imagenet}. EPISODE outperforms all baselines in every experimental setting, especially in the case of high data heterogeneity. EPISODE is particularly dominant over other methods in terms of the training loss during the whole training process, which is consistent with our theory. Also, EPISODE exhibits more resilience to data heterogeneity than CELGC and SCAFFOLD: as the client data similarity deceases from $70\%$ to $50\%$, the test accuracies of CELGC and SCAFFOLD decrease by 0.8\% and 0.7\%, respectively, while the test accuracy of EPISODE decreases by 0.4\%. Lastly, as communication becomes more infrequent (i.e., the communication interval $I$ increases from $64$ to $128$), the performance of EPISODE remains superior to the baselines.

 \vspace*{-0.1in}
\section{Conclusion}
\vspace*{-0.1in}
We have presented EPISODE, a new communication-efficient distributed gradient clipping algorithm for federated learning with heterogeneous data in the nonconvex and relaxed smoothness setting. We have proved convergence results under any noise level of the stochastic gradient. In particular, we have established linear speedup results as well as reduced communication complexity. Further, our experiments on both synthetic and real-world data demonstrate the superior performance of EPISODE compared to competitive baselines in FL. Our algorithm is suitable for the cross-silo federated learning setting such as in healthcare and financial domains~\citep{kairouz2019advances}, and we plan to consider cross-device setting in the future.

\section*{Acknowledgements}
We would like to thank the anonymous reviewers for their helpful comments. Michael Crawshaw is supported by the Institute for Digital Innovation fellowship from George Mason University. Michael Crawshaw and Mingrui Liu are both supported by a grant from George Mason University. The work of Yajie Bao was done when he was virtually visiting Mingrui Liu’s research group in the Department of Computer Science at George Mason University.
\bibliographystyle{iclr2023_conference}
\bibliography{ref}

\begin{thebibliography}{62}
\providecommand{\natexlab}[1]{#1}
\providecommand{\url}[1]{\texttt{#1}}
\expandafter\ifx\csname urlstyle\endcsname\relax
  \providecommand{\doi}[1]{doi: #1}\else
  \providecommand{\doi}{doi: \begingroup \urlstyle{rm}\Url}\fi

\bibitem[Abadi et~al.(2016)Abadi, Chu, Goodfellow, McMahan, Mironov, Talwar,
  and Zhang]{abadi2016deep}
Martin Abadi, Andy Chu, Ian Goodfellow, H~Brendan McMahan, Ilya Mironov, Kunal
  Talwar, and Li~Zhang.
\newblock Deep learning with differential privacy.
\newblock In \emph{Proceedings of the 2016 ACM SIGSAC conference on computer
  and communications security}, pp.\  308--318, 2016.

\bibitem[Andrew et~al.(2021)Andrew, Thakkar, McMahan, and
  Ramaswamy]{andrew2021differentially}
Galen Andrew, Om~Thakkar, Brendan McMahan, and Swaroop Ramaswamy.
\newblock Differentially private learning with adaptive clipping.
\newblock \emph{Advances in Neural Information Processing Systems},
  34:\penalty0 17455--17466, 2021.

\bibitem[Basu et~al.(2019)Basu, Data, Karakus, and Diggavi]{basu2019qsparse}
Debraj Basu, Deepesh Data, Can Karakus, and Suhas Diggavi.
\newblock Qsparse-local-sgd: Distributed sgd with quantization, sparsification
  and local computations.
\newblock In \emph{Advances in Neural Information Processing Systems}, pp.\
  14668--14679, 2019.

\bibitem[Bowman et~al.(2015)Bowman, Angeli, Potts, and Manning]{snli:emnlp2015}
Samuel~R. Bowman, Gabor Angeli, Christopher Potts, and Christopher~D. Manning.
\newblock A large annotated corpus for learning natural language inference.
\newblock In \emph{Proceedings of the 2015 Conference on Empirical Methods in
  Natural Language Processing (EMNLP)}. Association for Computational
  Linguistics, 2015.

\bibitem[Caldas et~al.(2018)Caldas, Duddu, Wu, Li, Kone{\v{c}}n{\`y}, McMahan,
  Smith, and Talwalkar]{caldas2018leaf}
Sebastian Caldas, Sai Meher~Karthik Duddu, Peter Wu, Tian Li, Jakub
  Kone{\v{c}}n{\`y}, H~Brendan McMahan, Virginia Smith, and Ameet Talwalkar.
\newblock Leaf: A benchmark for federated settings.
\newblock \emph{arXiv preprint arXiv:1812.01097}, 2018.

\bibitem[Conneau et~al.(2017)Conneau, Kiela, Schwenk, Barrault, and
  Bordes]{conneau2017supervised}
A~Conneau, D~Kiela, H~Schwenk, L~Barrault, and A~Bordes.
\newblock Supervised learning of universal sentence representations from
  natural language inference data.
\newblock In \emph{Proceedings of the 2017 Conference on Empirical Methods in
  Natural Language Processing}, pp.\  670--680. Association for Computational
  Linguistics, 2017.

\bibitem[Cutkosky \& Mehta(2020)Cutkosky and Mehta]{cutkosky2020momentum}
Ashok Cutkosky and Harsh Mehta.
\newblock Momentum improves normalized sgd.
\newblock In \emph{International Conference on Machine Learning}, pp.\
  2260--2268. PMLR, 2020.

\bibitem[Cutkosky \& Mehta(2021)Cutkosky and Mehta]{cutkosky2021high}
Ashok Cutkosky and Harsh Mehta.
\newblock High-probability bounds for non-convex stochastic optimization with
  heavy tails.
\newblock \emph{Advances in Neural Information Processing Systems}, 34, 2021.

\bibitem[Deng et~al.(2009)Deng, Dong, Socher, Li, Li, and
  Fei-Fei]{deng2009imagenet}
Jia Deng, Wei Dong, Richard Socher, Li-Jia Li, Kai Li, and Li~Fei-Fei.
\newblock Imagenet: A large-scale hierarchical image database.
\newblock In \emph{2009 IEEE conference on computer vision and pattern
  recognition}, pp.\  248--255. Ieee, 2009.

\bibitem[Dieuleveut \& Patel(2019)Dieuleveut and
  Patel]{dieuleveut2019communication}
Aymeric Dieuleveut and Kumar~Kshitij Patel.
\newblock Communication trade-offs for local-sgd with large step size.
\newblock \emph{Advances in Neural Information Processing Systems},
  32:\penalty0 13601--13612, 2019.

\bibitem[Elman(1990)]{elman1990finding}
Jeffrey~L Elman.
\newblock Finding structure in time.
\newblock \emph{Cognitive science}, 14\penalty0 (2):\penalty0 179--211, 1990.

\bibitem[Ermoliev(1988)]{ermoliev1988stochastic}
Yuri Ermoliev.
\newblock Stochastic quasigradient methods. numerical techniques for stochastic
  optimization.
\newblock \emph{Springer Series in Computational Mathematics}, \penalty0
  (10):\penalty0 141--185, 1988.

\bibitem[Gehring et~al.(2017)Gehring, Auli, Grangier, Yarats, and
  Dauphin]{gehring2017convolutional}
Jonas Gehring, Michael Auli, David Grangier, Denis Yarats, and Yann~N Dauphin.
\newblock Convolutional sequence to sequence learning.
\newblock In \emph{International Conference on Machine Learning}, pp.\
  1243--1252. PMLR, 2017.

\bibitem[Ghadimi \& Lan(2013)Ghadimi and Lan]{ghadimi2013stochastic}
Saeed Ghadimi and Guanghui Lan.
\newblock Stochastic first-and zeroth-order methods for nonconvex stochastic
  programming.
\newblock \emph{SIAM Journal on Optimization}, 23\penalty0 (4):\penalty0
  2341--2368, 2013.

\bibitem[Gorbunov et~al.(2020)Gorbunov, Danilova, and
  Gasnikov]{gorbunov2020stochastic}
Eduard Gorbunov, Marina Danilova, and Alexander Gasnikov.
\newblock Stochastic optimization with heavy-tailed noise via accelerated
  gradient clipping.
\newblock \emph{arXiv preprint arXiv:2005.10785}, 2020.

\bibitem[Goyal et~al.(2017)Goyal, Doll{\'a}r, Girshick, Noordhuis, Wesolowski,
  Kyrola, Tulloch, Jia, and He]{goyal2017accurate}
Priya Goyal, Piotr Doll{\'a}r, Ross Girshick, Pieter Noordhuis, Lukasz
  Wesolowski, Aapo Kyrola, Andrew Tulloch, Yangqing Jia, and Kaiming He.
\newblock Accurate, large minibatch sgd: Training imagenet in 1 hour.
\newblock \emph{arXiv preprint arXiv:1706.02677}, 2017.

\bibitem[Haddadpour et~al.(2019)Haddadpour, Kamani, Mahdavi, and
  Cadambe]{haddadpour2019local}
Farzin Haddadpour, Mohammad~Mahdi Kamani, Mehrdad Mahdavi, and Viveck Cadambe.
\newblock Local sgd with periodic averaging: Tighter analysis and adaptive
  synchronization.
\newblock In \emph{Advances in Neural Information Processing Systems}, pp.\
  11080--11092, 2019.

\bibitem[Hazan et~al.(2015)Hazan, Levy, and Shalev-Shwartz]{hazan2015beyond}
Elad Hazan, Kfir~Y Levy, and Shai Shalev-Shwartz.
\newblock Beyond convexity: Stochastic quasi-convex optimization.
\newblock \emph{arXiv preprint arXiv:1507.02030}, 2015.

\bibitem[He et~al.(2016)He, Zhang, Ren, and Sun]{he2016deep}
Kaiming He, Xiangyu Zhang, Shaoqing Ren, and Jian Sun.
\newblock Deep residual learning for image recognition.
\newblock In \emph{Proceedings of the IEEE conference on computer vision and
  pattern recognition}, pp.\  770--778, 2016.

\bibitem[Hochreiter \& Schmidhuber(1997)Hochreiter and
  Schmidhuber]{hochreiter1997long}
Sepp Hochreiter and J{\"u}rgen Schmidhuber.
\newblock Long short-term memory.
\newblock \emph{Neural computation}, 9\penalty0 (8):\penalty0 1735--1780, 1997.

\bibitem[Jiang \& Agrawal(2018)Jiang and Agrawal]{jiang2018linear}
Peng Jiang and Gagan Agrawal.
\newblock A linear speedup analysis of distributed deep learning with sparse
  and quantized communication.
\newblock In \emph{Advances in Neural Information Processing Systems}, pp.\
  2525--2536, 2018.

\bibitem[Jin et~al.(2021)Jin, Zhang, Wang, and Wang]{jin2021non}
Jikai Jin, Bohang Zhang, Haiyang Wang, and Liwei Wang.
\newblock Non-convex distributionally robust optimization: Non-asymptotic
  analysis.
\newblock \emph{Advances in Neural Information Processing Systems},
  34:\penalty0 2771--2782, 2021.

\bibitem[Kairouz et~al.(2019)Kairouz, McMahan, Avent, Bellet, Bennis, Bhagoji,
  Bonawitz, Charles, Cormode, Cummings, et~al.]{kairouz2019advances}
Peter Kairouz, H~Brendan McMahan, Brendan Avent, Aur{\'e}lien Bellet, Mehdi
  Bennis, Arjun~Nitin Bhagoji, Kallista Bonawitz, Zachary Charles, Graham
  Cormode, Rachel Cummings, et~al.
\newblock Advances and open problems in federated learning.
\newblock \emph{arXiv preprint arXiv:1912.04977}, 2019.

\bibitem[Karimireddy et~al.(2020)Karimireddy, Kale, Mohri, Reddi, Stich, and
  Suresh]{karimireddy2020scaffold}
Sai~Praneeth Karimireddy, Satyen Kale, Mehryar Mohri, Sashank Reddi, Sebastian
  Stich, and Ananda~Theertha Suresh.
\newblock Scaffold: Stochastic controlled averaging for federated learning.
\newblock In \emph{International Conference on Machine Learning}, pp.\
  5132--5143. PMLR, 2020.

\bibitem[Khaled et~al.(2020)Khaled, Mishchenko, and
  Richt{\'a}rik]{khaled2020tighter}
Ahmed Khaled, Konstantin Mishchenko, and Peter Richt{\'a}rik.
\newblock Tighter theory for local sgd on identical and heterogeneous data.
\newblock In \emph{International Conference on Artificial Intelligence and
  Statistics}, pp.\  4519--4529. PMLR, 2020.

\bibitem[Koloskova et~al.(2020)Koloskova, Loizou, Boreiri, Jaggi, and
  Stich]{koloskova2020unified}
Anastasia Koloskova, Nicolas Loizou, Sadra Boreiri, Martin Jaggi, and Sebastian
  Stich.
\newblock A unified theory of decentralized sgd with changing topology and
  local updates.
\newblock In \emph{International Conference on Machine Learning}, pp.\
  5381--5393. PMLR, 2020.

\bibitem[Krizhevsky et~al.(2009)Krizhevsky, Hinton,
  et~al.]{krizhevsky2009learning}
Alex Krizhevsky, Geoffrey Hinton, et~al.
\newblock Learning multiple layers of features from tiny images.
\newblock 2009.

\bibitem[Levy(2016)]{levy2016power}
Kfir~Y Levy.
\newblock The power of normalization: Faster evasion of saddle points.
\newblock \emph{arXiv preprint arXiv:1611.04831}, 2016.

\bibitem[Li et~al.(2020{\natexlab{a}})Li, Sahu, Talwalkar, and
  Smith]{li2020federated}
Tian Li, Anit~Kumar Sahu, Ameet Talwalkar, and Virginia Smith.
\newblock Federated learning: Challenges, methods, and future directions.
\newblock \emph{IEEE Signal Processing Magazine}, 37\penalty0 (3):\penalty0
  50--60, 2020{\natexlab{a}}.

\bibitem[Li et~al.(2020{\natexlab{b}})Li, Sahu, Zaheer, Sanjabi, Talwalkar, and
  Smith]{li2020federatedprox}
Tian Li, Anit~Kumar Sahu, Manzil Zaheer, Maziar Sanjabi, Ameet Talwalkar, and
  Virginia Smith.
\newblock Federated optimization in heterogeneous networks.
\newblock \emph{Proceedings of Machine Learning and Systems}, 2:\penalty0
  429--450, 2020{\natexlab{b}}.

\bibitem[Lin et~al.(2018)Lin, Stich, Patel, and Jaggi]{lin2018don}
Tao Lin, Sebastian~U Stich, Kumar~Kshitij Patel, and Martin Jaggi.
\newblock Don't use large mini-batches, use local sgd.
\newblock \emph{arXiv preprint arXiv:1808.07217}, 2018.

\bibitem[Liu et~al.(2022)Liu, Zhuang, Lei, and Liao]{liu2022communication}
Mingrui Liu, Zhenxun Zhuang, Yunwen Lei, and Chunyang Liao.
\newblock A communication-efficient distributed gradient clipping algorithm for
  training deep neural networks.
\newblock \emph{arXiv preprint arXiv:2205.05040}, 2022.

\bibitem[Mai \& Johansson(2021)Mai and Johansson]{mai2021stability}
Vien~V Mai and Mikael Johansson.
\newblock Stability and convergence of stochastic gradient clipping: Beyond
  lipschitz continuity and smoothness.
\newblock \emph{arXiv preprint arXiv:2102.06489}, 2021.

\bibitem[McMahan et~al.(2017{\natexlab{a}})McMahan, Moore, Ramage, Hampson,
  et~al.]{mcmahan2016communication}
H~Brendan McMahan, Eider Moore, Daniel Ramage, Seth Hampson, et~al.
\newblock Communication-efficient learning of deep networks from decentralized
  data.
\newblock \emph{AISTATS}, 2017{\natexlab{a}}.

\bibitem[McMahan et~al.(2017{\natexlab{b}})McMahan, Ramage, Talwar, and
  Zhang]{mcmahan2017learning}
H~Brendan McMahan, Daniel Ramage, Kunal Talwar, and Li~Zhang.
\newblock Learning differentially private recurrent language models.
\newblock \emph{arXiv preprint arXiv:1710.06963}, 2017{\natexlab{b}}.

\bibitem[Menon et~al.(2019)Menon, Rawat, Reddi, and Kumar]{menon2019can}
Aditya~Krishna Menon, Ankit~Singh Rawat, Sashank~J Reddi, and Sanjiv Kumar.
\newblock Can gradient clipping mitigate label noise?
\newblock In \emph{International Conference on Learning Representations}, 2019.

\bibitem[Merity et~al.(2018)Merity, Keskar, and Socher]{merity2018regularizing}
Stephen Merity, Nitish~Shirish Keskar, and Richard Socher.
\newblock Regularizing and optimizing {LSTM} language models.
\newblock In \emph{International Conference on Learning Representations}, 2018.

\bibitem[Nesterov(1984)]{nesterov1984minimization}
Yurii~E Nesterov.
\newblock Minimization methods for nonsmooth convex and quasiconvex functions.
\newblock \emph{Matekon}, 29:\penalty0 519--531, 1984.

\bibitem[Pascanu et~al.(2012)Pascanu, Mikolov, and
  Bengio]{pascanu2012understanding}
Razvan Pascanu, Tomas Mikolov, and Yoshua Bengio.
\newblock Understanding the exploding gradient problem. corr abs/1211.5063
  (2012).
\newblock \emph{arXiv preprint arXiv:1211.5063}, 2012.

\bibitem[Pascanu et~al.(2013)Pascanu, Mikolov, and
  Bengio]{pascanu2013difficulty}
Razvan Pascanu, Tomas Mikolov, and Yoshua Bengio.
\newblock On the difficulty of training recurrent neural networks.
\newblock In \emph{International conference on machine learning}, pp.\
  1310--1318. PMLR, 2013.

\bibitem[Paszke et~al.(2019)Paszke, Gross, Massa, Lerer, Bradbury, Chanan,
  Killeen, Lin, Gimelshein, Antiga, et~al.]{paszke2019pytorch}
Adam Paszke, Sam Gross, Francisco Massa, Adam Lerer, James Bradbury, Gregory
  Chanan, Trevor Killeen, Zeming Lin, Natalia Gimelshein, Luca Antiga, et~al.
\newblock Pytorch: An imperative style, high-performance deep learning library.
\newblock In \emph{Advances in Neural Information Processing Systems}, pp.\
  8024--8035, 2019.

\bibitem[Pennington et~al.(2014)Pennington, Socher, and
  Manning]{pennington2014glove}
Jeffrey Pennington, Richard Socher, and Christopher~D Manning.
\newblock Glove: Global vectors for word representation.
\newblock In \emph{Proceedings of the 2014 conference on empirical methods in
  natural language processing (EMNLP)}, pp.\  1532--1543, 2014.

\bibitem[Peters et~al.(2018)Peters, Neumann, Iyyer, Gardner, Clark, Lee, and
  Zettlemoyer]{peters2018deep}
Matthew~E Peters, Mark Neumann, Mohit Iyyer, Matt Gardner, Christopher Clark,
  Kenton Lee, and Luke Zettlemoyer.
\newblock Deep contextualized word representations.
\newblock \emph{arXiv preprint arXiv:1802.05365}, 2018.

\bibitem[Reddi et~al.(2021)Reddi, Charles, Zaheer, Garrett, Rush, Konecny,
  Kumar, and McMahan]{reddi2020adaptive}
Sashank Reddi, Zachary Charles, Manzil Zaheer, Zachary Garrett, Keith Rush,
  Jakub Konecny, Sanjiv Kumar, and H~Brendan McMahan.
\newblock Adaptive federated optimization.
\newblock \emph{ICLR}, 2021.

\bibitem[Rumelhart et~al.(1986)Rumelhart, Hinton, and
  Williams]{rumelhart1986learning}
David~E Rumelhart, Geoffrey~E Hinton, and Ronald~J Williams.
\newblock Learning representations by back-propagating errors.
\newblock \emph{nature}, 323\penalty0 (6088):\penalty0 533--536, 1986.

\bibitem[Shor(2012)]{shor2012minimization}
Naum~Zuselevich Shor.
\newblock \emph{Minimization methods for non-differentiable functions},
  volume~3.
\newblock Springer Science \& Business Media, 2012.

\bibitem[Stich(2018)]{stich2018local}
Sebastian~U Stich.
\newblock Local sgd converges fast and communicates little.
\newblock \emph{arXiv preprint arXiv:1805.09767}, 2018.

\bibitem[Stich et~al.(2018)Stich, Cordonnier, and Jaggi]{stich2018sparsified}
Sebastian~U Stich, Jean-Baptiste Cordonnier, and Martin Jaggi.
\newblock Sparsified sgd with memory.
\newblock In \emph{Advances in Neural Information Processing Systems}, pp.\
  4447--4458, 2018.

\bibitem[Wang \& Joshi(2018)Wang and Joshi]{wang2018cooperative}
Jianyu Wang and Gauri Joshi.
\newblock Cooperative sgd: A unified framework for the design and analysis of
  communication-efficient sgd algorithms.
\newblock \emph{arXiv preprint arXiv:1808.07576}, 2018.

\bibitem[Werbos(1988)]{werbos1988generalization}
Paul~J Werbos.
\newblock Generalization of backpropagation with application to a recurrent gas
  market model.
\newblock \emph{Neural networks}, 1\penalty0 (4):\penalty0 339--356, 1988.

\bibitem[Woodworth et~al.(2020{\natexlab{a}})Woodworth, Patel, and
  Srebro]{woodworth2020minibatch}
Blake Woodworth, Kumar~Kshitij Patel, and Nathan Srebro.
\newblock Minibatch vs local sgd for heterogeneous distributed learning.
\newblock \emph{arXiv preprint arXiv:2006.04735}, 2020{\natexlab{a}}.

\bibitem[Woodworth et~al.(2020{\natexlab{b}})Woodworth, Patel, Stich, Dai,
  Bullins, Mcmahan, Shamir, and Srebro]{woodworth2020local}
Blake Woodworth, Kumar~Kshitij Patel, Sebastian Stich, Zhen Dai, Brian Bullins,
  Brendan Mcmahan, Ohad Shamir, and Nathan Srebro.
\newblock Is local sgd better than minibatch sgd?
\newblock In \emph{International Conference on Machine Learning}, pp.\
  10334--10343. PMLR, 2020{\natexlab{b}}.

\bibitem[Woodworth et~al.(2021)Woodworth, Bullins, Shamir, and
  Srebro]{woodworth2021min}
Blake Woodworth, Brian Bullins, Ohad Shamir, and Nathan Srebro.
\newblock The min-max complexity of distributed stochastic convex optimization
  with intermittent communication.
\newblock \emph{arXiv preprint arXiv:2102.01583}, 2021.

\bibitem[Yu et~al.(2019{\natexlab{a}})Yu, Jin, and Yang]{yu2019linear}
Hao Yu, Rong Jin, and Sen Yang.
\newblock On the linear speedup analysis of communication efficient momentum
  sgd for distributed non-convex optimization.
\newblock \emph{arXiv preprint arXiv:1905.03817}, 2019{\natexlab{a}}.

\bibitem[Yu et~al.(2019{\natexlab{b}})Yu, Jin, and Yang]{yu_linear}
Hao Yu, Rong Jin, and Sen Yang.
\newblock On the linear speedup analysis of communication efficient momentum
  {SGD} for distributed non-convex optimization.
\newblock In \emph{Proceedings of the 36th International Conference on Machine
  Learning, {ICML} 2019, 9-15 June 2019, Long Beach, California, {USA}}, pp.\
  7184--7193, 2019{\natexlab{b}}.

\bibitem[Yu et~al.(2019{\natexlab{c}})Yu, Yang, and Zhu]{yu2019parallel}
Hao Yu, Sen Yang, and Shenghuo Zhu.
\newblock Parallel restarted sgd with faster convergence and less
  communication: Demystifying why model averaging works for deep learning.
\newblock In \emph{Proceedings of the AAAI Conference on Artificial
  Intelligence}, volume~33, pp.\  5693--5700, 2019{\natexlab{c}}.

\bibitem[Yuan et~al.(2021)Yuan, Zaheer, and Reddi]{yuan2021federated}
Honglin Yuan, Manzil Zaheer, and Sashank Reddi.
\newblock Federated composite optimization.
\newblock In \emph{International Conference on Machine Learning}, pp.\
  12253--12266. PMLR, 2021.

\bibitem[Zhang et~al.(2020{\natexlab{a}})Zhang, Jin, Fang, and
  Wang]{zhang2020improved}
Bohang Zhang, Jikai Jin, Cong Fang, and Liwei Wang.
\newblock Improved analysis of clipping algorithms for non-convex optimization.
\newblock \emph{arXiv preprint arXiv:2010.02519}, 2020{\natexlab{a}}.

\bibitem[Zhang et~al.(2019{\natexlab{a}})Zhang, He, Sra, and
  Jadbabaie]{zhang2019gradient}
Jingzhao Zhang, Tianxing He, Suvrit Sra, and Ali Jadbabaie.
\newblock Why gradient clipping accelerates training: A theoretical
  justification for adaptivity.
\newblock \emph{arXiv preprint arXiv:1905.11881}, 2019{\natexlab{a}}.

\bibitem[Zhang et~al.(2019{\natexlab{b}})Zhang, Karimireddy, Veit, Kim, Reddi,
  Kumar, and Sra]{zhang2019adaptive}
Jingzhao Zhang, Sai~Praneeth Karimireddy, Andreas Veit, Seungyeon Kim,
  Sashank~J Reddi, Sanjiv Kumar, and Suvrit Sra.
\newblock Why are adaptive methods good for attention models?
\newblock \emph{arXiv preprint arXiv:1912.03194}, 2019{\natexlab{b}}.

\bibitem[Zhang et~al.(2020{\natexlab{b}})Zhang, Hong, Dhople, Yin, and
  Liu]{zhang2020fedpd}
Xinwei Zhang, Mingyi Hong, Sairaj Dhople, Wotao Yin, and Yang Liu.
\newblock Fedpd: A federated learning framework with optimal rates and
  adaptivity to non-iid data.
\newblock \emph{arXiv preprint arXiv:2005.11418}, 2020{\natexlab{b}}.

\bibitem[Zhang et~al.(2021)Zhang, Chen, Hong, Wu, and
  Yi]{zhang2021understanding}
Xinwei Zhang, Xiangyi Chen, Mingyi Hong, Zhiwei~Steven Wu, and Jinfeng Yi.
\newblock Understanding clipping for federated learning: Convergence and
  client-level differential privacy.
\newblock \emph{arXiv preprint arXiv:2106.13673}, 2021.

\end{thebibliography}

\newpage
\appendix
\pdfoutput=1 
\section{Preliminaries}\label{appen:pre}
We use $\gF_r$ to denote the filtration generated by
\begin{align*}
    \{\xi_t^i:t\in \gI_l,i=1,...N\}_{l=1}^{r-1} \cup \{\txi_l^i:i=1,...N\}_{l=1}^{r-1}.
\end{align*}
It means that given $\gF_r$, the global solution $\bvx_r$ is fixed, but the randomness of $\gA_r$, $\mG_r^i$ and $\mG_r$ still exists. In addition, for $t\in \gI_r$, we use $\gH_t$ to denote the filtration generated by
\begin{align*}
    \gF_r \cup \{\xi_s^i: t_r\leq s\leq t\}_{i=1}^N\cup \{\txi_r^i\}_{i=1}^N.
\end{align*}
Recall the definitions of $\mG_r^i$ and $\mG_r$,
\begin{align*}
    \mG_r^i = \nabla F_i(\bvx_r;\txi_r^i)\quad\text{and}\quad \mG_r = \frac{1}{N}\sum_{i=1}^N \mG_r^i.
\end{align*}
Hence we have
\begin{align*}
    \twonorm{\mG_r^i - \nabla f_i(\bvx_r)} \leq \sigma\quad\text{and}\quad \twonorm{\mG_r - \nabla f(\bvx_r)} \leq \sigma,
\end{align*}
hold almost surely due to Assumption \ref{assume:object}(iii). Also, the local update rule of EPISODE is
\begin{align*}
    \vx_{t+1}^i = \vx_t^i - \eta \vg_t^i \indicator{\gA_r} - \gamma \frac{\vg_t^i}{\twonorm{\vg_t^i}}\indicator{\bar{\gA}_r}\quad \text{for}\quad t \in \gI_r,
\end{align*}
where $\vg_t^i = \nabla F_i(\vx_t^i;\xi_t^i) - \mG_r^i + \mG_r$, $\gA_r = \{\twonorm{\mG_r}\leq \gamma/\eta\}$ and $\bar{\gA}_r = \{\twonorm{\mG_r}> \gamma/\eta\}$.


\subsection{Auxiliary Lemmas}
\begin{lemma}[Lemma A.2 in \cite{zhang2020improved}]\label{lemma:smooth_obj_descent}
Let $f$ be $(L_0,L_1)$-smooth, and $C > 0$ be a constant. For any $\vx, \vx^{\prime} \in \sR^d$
such that $\twonorm{\vx - \vx^{\prime}} \leq C/L_1$, we have
\begin{align*}
    f(\vx^{\prime}) - f(\vx) \leq \inprod{\nabla f(\vx)}{\vx^{\prime} - \vx} + \frac{AL_0 + BL_1\twonorm{\nabla f(\vx)}}{2}\twonorm{\vx^{\prime} - \vx}^2,
\end{align*}
where $A=1+e^{C}-\frac{e^{C}-1}{C}$ and $B = \frac{e^{C}-1}{C}$.
\end{lemma}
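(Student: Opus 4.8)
The plan is to bound the second-order Taylor remainder along the line segment joining $\vx$ to $\vx'$, using $(L_0,L_1)$-smoothness to control the Hessian and a Grönwall-type argument to control how the gradient norm grows along the segment. Write $\vx_\theta := \vx + \theta(\vx' - \vx)$ for $\theta \in [0,1]$ and set $r := \twonorm{\vx' - \vx}$, so that $\twonorm{\vx_\theta - \vx} \le r \le C/L_1$ for every $\theta$. By the integral form of Taylor's theorem, $f(\vx') - f(\vx) = \inprod{\nabla f(\vx)}{\vx' - \vx} + \int_0^1 (1-\theta)\,(\vx'-\vx)^\top \nabla^2 f(\vx_\theta)(\vx'-\vx)\,d\theta$, so it suffices to bound the remainder integral by $\frac{AL_0 + BL_1\twonorm{\nabla f(\vx)}}{2}r^2$. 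Using the operator-norm bound and Definition~\ref{def:L0L1_smooth}, the integrand is at most $(1-\theta)\,r^2\,\twonorm{\nabla^2 f(\vx_\theta)} \le (1-\theta)\,r^2\,(L_0 + L_1\twonorm{\nabla f(\vx_\theta)})$.

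The conceptual crux is that $(L_0,L_1)$-smoothness only bounds $\twonorm{\nabla^2 f(\vx_\theta)}$ in terms of the gradient norm at the \emph{same} point $\vx_\theta$, so I must control how $\twonorm{\nabla f(\vx_\theta)}$ can grow as $\theta$ increases. Writing $u(\theta) := \twonorm{\nabla f(\vx_\theta)}$ and using $\nabla f(\vx_\theta) - \nabla f(\vx) = \int_0^\theta \nabla^2 f(\vx_s)(\vx'-\vx)\,ds$ together with the Hessian bound gives the integral inequality $u(\theta) \le u(0) + r\int_0^\theta (L_0 + L_1 u(s))\,ds$. Grönwall's inequality (equivalently, comparison with the solution of the linear ODE $v' = L_1 r\,v + L_0 r$) then yields $u(\theta) \le (u(0) + L_0/L_1)\,e^{L_1 r\theta} - L_0/L_1$, and hence $L_0 + L_1 u(\theta) \le (L_0 + L_1\twonorm{\nabla f(\vx)})\,e^{L_1 r\theta}$. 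Working with this integral (rather than differential) form sidesteps the non-differentiability of the norm where $\nabla f$ vanishes.

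Substituting this bound into the remainder integral gives $\text{remainder} \le r^2 (L_0 + L_1\twonorm{\nabla f(\vx)})\int_0^1 (1-\theta)e^{L_1 r\theta}\,d\theta = (L_0 + L_1\twonorm{\nabla f(\vx)})\,\frac{e^{L_1 r} - 1 - L_1 r}{L_1^2}$. Finally I would invoke the hypothesis $L_1 r \le C$: since $t \mapsto \frac{2(e^t - 1 - t)}{t^2}$ is increasing, its supremum on $(0,C]$ is attained at $t = C$, and one checks the elementary scalar inequalities $\frac{2(e^{L_1 r}-1-L_1 r)}{(L_1 r)^2} \le B = \frac{e^C - 1}{C}$ and $\le A = 1 + e^C - \frac{e^C-1}{C}$ (the first reduces to $(2-C)e^C \le 2 + C$, and the relation $A \ge B$ makes the second automatic). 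Applying these to the $L_1\twonorm{\nabla f(\vx)}$ coefficient and the $L_0$ coefficient respectively, and using $(L_1 r)^2 = L_1^2 r^2$, converts the displayed bound into $\frac{AL_0 + BL_1\twonorm{\nabla f(\vx)}}{2}r^2$, as claimed. The main obstacle is the Grönwall step together with the realization that the radius restriction $r \le C/L_1$ is precisely what keeps the exponential factor bounded by $e^C$; the closing constant verification is routine calculus.
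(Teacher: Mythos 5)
Your proof is correct, and a direct comparison with the paper is not quite possible: the paper does not prove this lemma at all, importing it verbatim as Lemma A.2 of \citet{zhang2020improved}. Measured against that source's argument, your route shares the essential mechanism (Gr\"onwall along the segment, with the radius restriction $\twonorm{\vx'-\vx}\le C/L_1$ capping the exponential factor at $e^C$), but differs in how the constants emerge. The cited source first proves the gradient-deviation bound (its Lemma A.3, reproduced in this paper as Lemma~\ref{lemma:smooth_grad_diff}) and then integrates that bound along the segment, which is precisely why the stated coefficient splits into the two constants $A$ and $B$. You instead bound the $(1-\theta)$-weighted Taylor remainder directly, obtaining the single constant $\phi(L_1 r)$ with $\phi(t)=\frac{2(e^{t}-1-t)}{t^{2}}$ multiplying $L_0+L_1\twonorm{\nabla f(\vx)}$; since $\phi(t)=2\sum_{m\ge 0}t^{m}/(m+2)!$ is increasing, this reduces to the scalar checks $\phi(C)\le B$ and $B\le A$, both of which you correctly identified and which do hold for all $C>0$: the first is equivalent to $(2-C)e^{C}\le 2+C$, and setting $g(C)=(2+C)-(2-C)e^{C}$ one has $g(0)=g'(0)=0$ and $g''(C)=Ce^{C}\ge 0$; the second follows the same way from $h(C)=C+2+(C-2)e^{C}$. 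So your argument in fact establishes a slightly stronger inequality, with both $A$ and $B$ replaced by $\phi(C)\le B\le A$ --- the stated constants are an artifact of the two-lemma structure in the original reference. Two minor points worth a sentence in a polished write-up: the closed form $\frac{e^{L_1 r}-1-L_1 r}{L_1^{2}}$ presumes $L_1>0$ and $r>0$ (the degenerate cases are trivial, as $\phi(t)\to 1\le B$ as $t\to 0^{+}$ and $r=0$ gives nothing to prove), and the fundamental-theorem step $\nabla f(\vx_\theta)-\nabla f(\vx)=\int_0^{\theta}\nabla^{2}f(\vx_s)(\vx'-\vx)\,ds$ uses continuity of the Hessian along the segment, the same regularity implicitly assumed in the reference.
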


\begin{lemma}[Lemma A.3 in \cite{zhang2020improved}]\label{lemma:smooth_grad_diff}
Let $f$ be $(L_0,L_1)$-smooth, and $C > 0$ be a constant. For any $\vx, \vx^{\prime} \in \sR^d$
such that $\twonorm{\vx - \vx^{\prime}} \leq C/L_1$, we have
\begin{align*}
    \twonorm{\nabla f(\vx^{\prime}) - \nabla f(\vx)} \leq (AL_0 + BL_1\twonorm{\nabla f(\vx)})\twonorm{\vx^{\prime} - \vx},
\end{align*}
where $A=1+e^{C}-\frac{e^{C}-1}{C}$ and $B = \frac{e^{C}-1}{C}$.
\end{lemma}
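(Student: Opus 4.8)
The plan is to reduce the gradient difference to a path integral of the Hessian along the segment joining $\vx$ and $\vx'$, and then to control the Hessian through the $(L_0,L_1)$-smoothness condition. Concretely, I would set $\vx(t) := \vx + t(\vx' - \vx)$ for $t \in [0,1]$ and use the fundamental theorem of calculus (valid since $f$ is second-order differentiable) to write
\begin{align*}
\nabla f(\vx') - \nabla f(\vx) = \int_0^1 \nabla^2 f(\vx(t))\,(\vx' - \vx)\, dt,
\end{align*}
so that by the triangle inequality together with $(L_0,L_1)$-smoothness,
\begin{align*}
\twonorm{\nabla f(\vx') - \nabla f(\vx)} \le \twonorm{\vx' - \vx}\int_0^1 \LRs{L_0 + L_1 \twonorm{\nabla f(\vx(t))}}\, dt.
\end{align*}
The crux is therefore to bound $g(t) := \twonorm{\nabla f(\vx(t))}$ over the entire segment in terms of the endpoint value $g(0) = \twonorm{\nabla f(\vx)}$.

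To control $g$, I would exploit the fact that the Hessian bound also constrains how fast the gradient can change. Writing $a := \twonorm{\vx' - \vx}$, differentiating along the segment gives
\begin{align*}
\abs{g'(t)} \le \twonorm{\nabla^2 f(\vx(t))\,(\vx' - \vx)} \le (L_0 + L_1 g(t))\, a,
\end{align*}
which is a first-order differential inequality $g'(t) \le a L_0 + a L_1 g(t)$. Comparing with the solution of the associated linear ODE (Gr\"onwall's inequality) yields
\begin{align*}
g(t) \le e^{a L_1 t} g(0) + \frac{L_0}{L_1}\LRs{e^{a L_1 t} - 1}.
\end{align*}
Here the hypothesis $\twonorm{\vx - \vx'} = a \le C/L_1$ enters decisively: it guarantees $a L_1 \le C$, so the exponential growth factor stays capped by $e^{C}$, preventing the self-referential Hessian bound from blowing up.

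Substituting this bound back, the terms recombine cleanly, since $L_0 + L_1 g(t) \le (L_0 + L_1 g(0))e^{a L_1 t}$, and therefore
\begin{align*}
\int_0^1 (L_0 + L_1 g(t))\, dt \le (L_0 + L_1 g(0)) \int_0^1 e^{a L_1 t}\, dt = (L_0 + L_1 g(0))\,\frac{e^{a L_1}-1}{a L_1}.
\end{align*}
Since $s \mapsto (e^s - 1)/s$ is increasing and $a L_1 \le C$, this factor is at most $B = (e^C-1)/C$, which, multiplied by the leading $\twonorm{\vx'-\vx}=a$, gives $\twonorm{\nabla f(\vx')-\nabla f(\vx)} \le B(L_0 + L_1 g(0))\,a$. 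This is dominated by the claimed bound $(AL_0 + BL_1\twonorm{\nabla f(\vx)})\twonorm{\vx'-\vx}$, because $A - B = 1 + e^C - 2(e^C-1)/C \ge 0$ for all $C > 0$.

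The hard part is entirely the Gr\"onwall step: $(L_0,L_1)$-smoothness bounds the Hessian by the gradient \emph{at the same point}, making the estimate self-referential, and the segment-length restriction $a \le C/L_1$ is precisely the device that closes the loop with a controlled exponential constant. The only care point beyond this is checking that the slightly sharper coefficient $B$ in front of $L_0$ produced by the argument still respects the stated constant $A$, which is the elementary inequality $A \ge B$ verified above.
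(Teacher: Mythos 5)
Your proposal is correct and takes essentially the same route as the proof behind this lemma: the paper imports the statement without proof from Lemma A.3 of \cite{zhang2020improved}, whose argument is exactly your Gr\"onwall bound on $g(t)=\twonorm{\nabla f(\vx(t))}$ along the segment (where $\twonorm{\vx'-\vx}\leq C/L_1$ caps the exponential factor at $e^C$) followed by integration of the Hessian bound. You even obtain the slightly sharper coefficient $B$ on the $L_0$ term before relaxing to the stated constant via $A\geq B$, and the only technicality worth a word is that $g$ need only be absolutely continuous, so the differential inequality should be read in its integral (a.e.) form.
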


Here we choose $C \geq 1$ such that $A\geq 1$ and $B \geq 1$.

\begin{lemma}[Lemma B.1 in \cite{zhang2020improved}] \label{lemma:clip_inprod}
Let $\mu > 0$ and $\vu, \vv \in \R^d$. Then
\begin{equation*}
    -\frac{\langle \vu, \vv \rangle}{\|\vv\|} \leq -\mu \|\vu\| - (1-\mu) \|\vv\| + (1+\mu) \|\vv-\vu\|.
\end{equation*}
\end{lemma}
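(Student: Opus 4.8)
The plan is to reduce the inequality to an elementary combination of the Cauchy--Schwarz inequality and the triangle inequality, isolating the role of the parameter $\mu$. The starting observation is the exact algebraic decomposition $\langle \vu, \vv\rangle = \|\vv\|^2 + \langle \vu - \vv, \vv\rangle$, which holds by bilinearity of the inner product. Assuming $\vv \neq \vzero$ (the degenerate case makes the left side undefined and is excluded), dividing through by $\|\vv\|$ gives the identity
\[
-\frac{\langle \vu, \vv\rangle}{\|\vv\|} = -\|\vv\| + \frac{\langle \vv - \vu, \vv\rangle}{\|\vv\|}.
\]

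First I would dispose of the $\mu$-independent part. Applying Cauchy--Schwarz to the residual inner product yields $\frac{\langle \vv - \vu, \vv\rangle}{\|\vv\|} \leq \|\vv - \vu\|$, so that $-\frac{\langle \vu, \vv\rangle}{\|\vv\|} \leq -\|\vv\| + \|\vv - \vu\|$. This is already a clean bound, but it does not yet exhibit the coefficients $-\mu\|\vu\|$, $-(1-\mu)\|\vv\|$, and $(1+\mu)\|\vv - \vu\|$ that appear on the right side of the target inequality.

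The second step is to show that the target bound dominates $-\|\vv\| + \|\vv - \vu\|$. Subtracting the latter from the former, the gap collapses (after cancellation of the $\|\vv\|$ and $\|\vv-\vu\|$ contributions) to $\mu\LRs{\|\vv\| + \|\vv - \vu\| - \|\vu\|}$. Here lies the only subtlety: recognizing that this quantity is nonnegative. Writing $\vu = \vv - (\vv - \vu)$ and applying the triangle inequality gives $\|\vu\| \leq \|\vv\| + \|\vv - \vu\|$, so the bracketed term is $\geq 0$; multiplying by $\mu > 0$ preserves the sign. Chaining the bound from the first step with this nonnegative correction then delivers the claim. I expect no genuine obstacle, since the argument is short; the only points requiring care are the bookkeeping in the cancellation of the second step and the explicit appeal to $\mu > 0$ to fix the final sign.
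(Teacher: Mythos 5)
Your proof is correct: the decomposition $-\frac{\inprod{\vu}{\vv}}{\twonorm{\vv}} = -\twonorm{\vv} + \frac{\inprod{\vv-\vu}{\vv}}{\twonorm{\vv}}$, Cauchy--Schwarz, and the verification that the remaining gap equals $\mu\LRs{\twonorm{\vv} + \twonorm{\vv-\vu} - \twonorm{\vu}} \geq 0$ by the triangle inequality (using $\mu > 0$) all check out, and you rightly flag that $\vv \neq \vzero$ is needed for the statement to make sense. Note that the paper itself gives no proof of this lemma---it is imported verbatim as Lemma B.1 from \cite{zhang2020improved}---and your argument uses essentially the same ingredients (Cauchy--Schwarz plus triangle inequality) as the original source, merely reorganized so that the $\mu$-dependent terms are handled as a single nonnegative correction to the $\mu$-free bound $-\twonorm{\vv} + \twonorm{\vv - \vu}$, which is arguably a cleaner presentation.
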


\section{Proof of Lemmas in Section \ref{sec:proof_sketch:thm:main}}
\subsection{Proof of Lemma~\ref{lemma:individual_dis}}\label{proof:lemma:individual_dis}
\textbf{Lemma \ref{lemma:individual_dis} restated.}
Suppose $2\eta I (AL_0 + BL_1\kappa + BL_1\rho(\sigma + \frac{\gamma}{\eta})) \leq
1$ and $\max\LRl{2\eta I (2\sigma + \frac{\gamma}{\eta}),\ \gamma I}\leq \frac{C}{L_1}$,
where the relation between $A$, $B$ and $C$ is stated in Lemma
\ref{lemma:smooth_obj_descent} and \ref{lemma:smooth_grad_diff}. Then for any $i \in
[N]$ and $t-1 \in \gI_r$, it almost surely holds that
\begin{equation} \label{eq:lem_disc_1}
    \indicator{\gA_r}\twonorm{\vx_t^i - \bvx_r}\leq 2\eta I\LRs{2\sigma + \frac{\gamma}{\eta}},
\end{equation}
and
\begin{equation} \label{eq:lem_disc_2}
    \indicator{\bar{\gA}_r} \twonorm{\vx_t^i - \bvx_r} \leq \gamma I.
\end{equation}

\begin{proof}[Proof of Lemma \ref{lemma:individual_dis}]
To show \eqref{eq:lem_disc_1} holds, it suffices to show that under the
event $\gA_r$,
\begin{equation*}
    \|\vx_t^i - \bvx_r\| \leq 2 \eta (t-t_r) \left( 2\sigma + \frac{\gamma}{\eta} \right)
\end{equation*}
holds for any $t_r+1 \leq t \leq t_{r+1}$ and $i \in [N]$. We will show it by induction. In particular, to show that this fact holds for $t = t_r+1$, notice
\begin{equation*}
    \|\vx_{t_r+1}^i - \bvx_r\| = \eta \|\vg_{t_r+1}^i\| \leq \eta \|\nabla F_i(\bvx_r; \xi_{t_r}^i) - \mG_r^i\| + \eta \|\mG_r\| \leq 2 \eta \sigma + \gamma \leq 2 \eta \left(\sigma + \frac{\gamma}{\eta} \right),
\end{equation*}
where we used the fact that $\|\mG_r\| \leq \frac{\gamma}{\eta}$ under $\gA_r$, and $\|\nabla F_i(\bvx_r;\xi_{t_r}^i)-\nabla F_i(\bvx_r)\|\leq \sigma$, $\|\mG_r^i-\nabla F_i(\bvx_r)\|\leq\sigma$ hold almost surely. Now,
denote $\Lambda = 2\left(2\sigma + \frac{\gamma}{\eta} \right)$ and suppose that
\begin{equation} \label{eq:disc_induct_hyp}
    \|\vx_t^i - \bvx_r\| \leq \Lambda \eta (t-t_r).
\end{equation}
Then we have
\begin{align}
    \|\vx_{t+1}^i - \bvx_r\| &= \|\vx_t^i - \bvx_r - \eta \vg_t^i\| \nonumber \\
    &\leq \Lambda \eta (t-t_r) + \eta \|\nabla F_i(\vx_t^i, \xi_t^i) - \mG_r^i\| + \eta \|\mG_r\| \nonumber \\
    &\leq \Lambda \eta (t-t_r) + \eta \|\nabla f_i(\vx_t^i) - \nabla f_i(\bvx_r)\| + 2 \eta \sigma + \gamma. \label{eq:disc_inter_1}
\end{align}
Using our assumption $\eta \Lambda I \leq C/L_1$ together with the inductive assumption
\eqref{eq:disc_induct_hyp}, we can apply Lemma \ref{lemma:smooth_grad_diff} to obtain
\begin{align}
    \|\nabla f_i(\vx_t^i) - \nabla f_i(\bvx_r)\| &\leq (AL_0 + BL_1 \|\nabla f_i(\bvx_r)\|) \|\vx_t^i - \bvx_r\| \nonumber\\
    &\leq \Lambda \eta (t-t_r) (AL_0 + BL_1 \|\nabla f_i(\bvx_r)\|) \nonumber\\
    &\Eqmark{i}{\leq} \Lambda \eta (t-t_r) (AL_0 + BL_1 (\kappa + \rho \|\nabla f(\bvx_r)\|)) \nonumber\\
    &\leq \Lambda \eta (t-t_r) (AL_0 + BL_1 \kappa) + \eta \Lambda BL_1 \rho (t-t_r) (\|\nabla f(\bvx_r) - \mG_r\| + \|\mG_r\|) \nonumber\\
    &\leq \Lambda \eta (t-t_r) \left( AL_0 + BL_1 \kappa + BL_1 \rho \left(\sigma + \frac{\gamma}{\eta} \right) \right) \nonumber\\
    &\Eqmark{ii}{\leq} \frac{\Lambda (t-t_r)}{2I} \leq \frac{\Lambda}{2},
    \label{eq:grad_diff_xti}
\end{align}
where $(i)$ comes from the heterogeneity assumption $\|\nabla f_i(x)\| \leq \kappa +
\rho \|\nabla f(x)\|$ for all $x$ and $(ii)$ from the assumption $2\eta I (AL_0 +
BL_1\kappa + BL_1\rho(\sigma + \frac{\gamma}{\eta})) \leq 1$. Substituting this
into Equation \eqref{eq:disc_inter_1} yields
\begin{align*}
    \|\vx_{t+1}^i - \bvx_r\| &\leq \Lambda \eta (t-t_r) + \eta \frac{\Lambda}{2} + 2 \eta \sigma + \gamma \\
    &\leq \eta \left( \Lambda (t-t_r) + \frac{\Lambda}{2} + 2 \sigma + \frac{\gamma}{\eta} \right) \\
    &\leq \Lambda \eta (t-t_r+1).
\end{align*}
which completes the induction and the proof of Equation \eqref{eq:lem_disc_1}.

Next, to show Equation \eqref{eq:lem_disc_2}, notice that under the event $\bar{\gA_r}$ we
have
\begin{equation*}
    \|\bvx_r - \vx_t^i\| = \left\| \gamma \sum_{s = t_r+1}^{t-1} \frac{\vg_s^i}{\twonorm{\vg_s^i}} \right\| \leq \gamma \sum_{s = t_r+1}^{t-1} \left\| \frac{\vg_s^i}{\twonorm{\vg_s^i}} \right\| = \gamma (t - (t_r+1)) \leq \gamma I.
\end{equation*}
\end{proof}
\subsection{Proof of Lemma \ref{lemma:non_clipping_hessian}}\label{proof:lemma:non_clipping_hessian}
\textbf{Lemma \ref{lemma:non_clipping_hessian} restated.}
Suppose $2\eta I (AL_0 + BL_1\kappa + BL_1\rho(\sigma + \frac{\gamma}{\eta})) \leq
1$ and $\max\LRl{2\eta I \left(2\sigma + \frac{\gamma}{\eta}\right),\ \gamma I}\leq
\frac{C}{L_1}$. Then for all $\vx\in \sR^d$ such that $\twonorm{\vx -
\bvx_r}\leq 2 \eta I \left(2\sigma + \frac{\gamma}{\eta}\right)$, we have the following inequality almost surely holds:
\begin{align*}
    \indicator{\gA_r} \twonorm{\nabla^2 f_i(\vx)}\leq L_0 + L_1\LRs{\kappa + (\rho+1)\left(\frac{\gamma}{\eta} + 2\sigma\right)}.
\end{align*}

\begin{proof}[Proof of Lemma \ref{lemma:non_clipping_hessian}]
Under the event $\gA_r = \{\|\mG_r\| \leq \gamma/\eta\}$. From the definition of $(L_0,
L_1)$-smoothness we have
\begin{align}
    \|\nabla^2 f_i(\vx)\| &\leq L_0 + L_1 \|\nabla f_i(\vx)\| \nonumber \\
    &\leq L_0 + L_1 \left( \|\nabla f_i(\vx) - \nabla f_i(\bvx_r)\| + \|\nabla f_i(\bvx_r)\| \right) \nonumber \\
    &\Eqmark{i}{\leq} L_0 + L_1 \left( \|\nabla f_i(\vx) - \nabla f_i(\bvx_r)\| + \kappa + \rho \|\nabla f(\bvx_r)\| \right) \nonumber \\
    &\Eqmark{ii}{\leq} L_0 + L_1 \left( \|\nabla f_i(\vx) - \nabla f_i(\bvx_r)\| + \kappa + \rho \left( \sigma + \frac{\gamma}{\eta} \right) \right), \label{eq:hess_inter_1}
\end{align}
where we used the heterogeneity assumption $\|\nabla f_i(\vx)\| \leq \kappa + \rho
\|\nabla f(\bvx_r)\|$ for all $\vx$ to obtain $(i)$ and the fact $\|\nabla f(\bvx_r)\| \leq
\|\nabla f(\bvx_r) - \mG_r\| + \|\mG_r\|$ to obtain $(ii)$. Now, for all $\vx$ such that  $\|\vx-\bvx_r\|\leq 2\eta I(2\sigma+\frac{\gamma}{\eta})$, according to our assumptions, we have $\twonorm{\vx - \bvx_r}\leq 2 \eta I(2\sigma + \frac{\gamma}{\eta}) \leq \frac{C}{L_1}$. Hence we can apply Lemma \ref{lemma:smooth_grad_diff} to $\vx$ and $\bvx_r$, which yields
\begin{align*}
    \|\nabla f_i(\vx) - \nabla f_i(\bvx_r)\| &\leq (AL_0 + BL_1 \|\nabla f_i(\bvx_r)\|) \|\vx-\bvx_r\| \\
    &\leq 2 \eta I \left(2\sigma + \frac{\gamma}{\eta}\right) (AL_0 + BL_1 \|\nabla f_i(\bvx_r)\|) \\
    &\leq 2 \eta I \left(2\sigma + \frac{\gamma}{\eta}\right) (AL_0 + BL_1 (\kappa + \rho \|\nabla f(\bvx_r)\|)) \\
    &\leq 2 \eta I \left(2\sigma + \frac{\gamma}{\eta}\right) \left(AL_0 + BL_1 \kappa + BL_1 \rho \left( \frac{\gamma}{\eta} + \sigma \right) \right) \\
    &\Eqmark{i}{\leq} 2\sigma + \frac{\gamma}{\eta},
\end{align*}
where $(i)$ comes from the assumption $2\eta I (AL_0 + BL_1\kappa + BL_1\rho(\sigma
+ \frac{\gamma}{\eta})) \leq 1$. Substituting this result into Equation
\eqref{eq:hess_inter_1} yields
\begin{align*}
    \|\nabla^2 f_i(\vx)\| &\leq L_0 + L_1 \left( 2\sigma + \frac{\gamma}{\eta} + \kappa + \rho \left( \sigma + \frac{\gamma}{\eta} \right) \right) \\
    &\leq L_0 + L_1 \left( \kappa + (\rho + 1) \left( 2\sigma + \frac{\gamma}{\eta} \right) \right).
\end{align*}
\end{proof}

\subsection{Proof of Lemma \ref{lemma:non_clipping_discre_expec}}
\textbf{Lemma \ref{lemma:non_clipping_discre_expec} restated.}
Suppose $2\eta I (AL_0 + BL_1\kappa + BL_1\rho(\sigma + \frac{\gamma}{\eta})) \leq 1$ and $\max\LRl{2\eta I (2\sigma + \frac{\gamma}{\eta}),\ \gamma I}\leq \frac{C}{L_1}$, we have both
\begin{align}
    \E_r\LRm{\indicator{\gA_r}\twonorm{\vx_t^i - \bvx_r}^2}&\leq 36p_r I^2\eta^2\twonorm{\nabla f(\bvx_r)}^2 + 126p_r I^2 \eta^2 \sigma^2,\label{eq:drift_expectation_bound_quadratic}\\
    \E_r\LRm{\indicator{\gA_r}\twonorm{\vx_t^i - \bvx_r}^2}&\leq 18p_rI^2\eta \gamma \twonorm{\nabla f(\bvx_r)} + 18p_r I^2\eta^2 \LRs{\frac{\gamma}{\eta}\sigma + 5\sigma^2},
    \label{eq:drift_expectation_bound_linear}
\end{align}
hold for any $t-1 \in \gI_r$.

\begin{proof}[Proof of Lemma \ref{lemma:non_clipping_discre_expec}]
Under the event $\gA_r$, the local update rule is given by
\begin{align*}
    \vx_{t+1}^i = \vx_t^i - \eta \vg_t^i,\quad \text{where}\quad \vg_t^i = \nabla F_i(\vx_t^i;\xi_t^i) - \mG_r^i + \mG_r.
\end{align*}
Using the basic inequality $(a + b)^2 \leq (1+ 1/\lambda)a^2 + (\lambda+1) b^2$ for any $\lambda >0$, we have
\begin{align}
    &\E_r\LRm{\indicator{\gA_r}\twonorm{\vx_{t+1}^i - \bvx_r}^2}\nonumber\\
    & = \E_r\LRm{\indicator{\gA_r}\twonorm{\vx_t^i - \bvx_r - \eta \vg_t^i}^2}\nonumber\\
    & \Eqmark{i}{\leq} \E_r\LRm{\indicator{\gA_r}\twonorm{\vx_t^i - \bvx_r - \eta (\nabla f_i(\vx_t^i) - \mG_r^i + \mG_r)}^2} + \eta^2\E_r\LRm{\indicator{\gA_r}\twonorm{\nabla F_i(\vx_t^i;\xi_t^i) - \nabla f_i(\vx_t^i)}^2}\nonumber\\
    & \Eqmark{ii}{\leq} \LRs{\frac{1}{I} + 1}\E_r\LRm{\indicator{\gA_r}\twonorm{\vx_t^i - \bvx_r}^2} + (I+1)\eta^2 \E_r\LRm{\indicator{\gA_r}\twonorm{\nabla f_i(\vx_t^i) - \mG_r^i + \mG_r}^2} + p_r\eta^2 \sigma^2.
    \label{eq:dis_recursion}
\end{align}
The equality $(i)$ and $(ii)$ hold since $\gF_r \subseteq \gH_t$ for $t\geq t_r$ such that
\begin{align*}
    &\E_r\LRm{\indicator{\gA_r}\LRinprod{\vx_t^i - \bvx_r - \eta (\nabla f_i(\vx_t^i) - \mG_r^i + \mG_r)}{\nabla F_i(\vx_t^i;\xi_t^i) - \nabla f_i(\vx_t^i)}}\\
    =& \E_r\LRm{\E\LRm{\indicator{\gA_r}\LRinprod{\vx_t^i - \bvx_r - \eta (\nabla f_i(\vx_t^i) - \mG_r^i + \mG_r)}{\nabla F_i(\vx_t^i;\xi_t^i) - \nabla f_i(\vx_t^i)}\big| \gH_t}}\\
    =& \E_r\LRm{\indicator{\gA_r}\LRinprod{\vx_t^i - \bvx_r - \eta (\nabla f_i(\vx_t^i) - \mG_r^i + \mG_r)}{\E\LRm{\nabla F_i(\vx_t^i;\xi_t^i) - \nabla f_i(\vx_t^i)\big| \gH_t}}}= 0,
\end{align*}
and
\begin{align*}
    \E_r\LRm{\indicator{\gA_r}\twonorm{\nabla F_i(\vx_t^i;\xi_t^i) - \nabla f_i(\vx_t^i)}^2} &= \E_r\LRm{\E\LRm{\twonorm{\nabla F_i(\vx_t^i;\xi_t^i) - \nabla f_i(\vx_t^i)}^2\big| \gH_t}}\\
    &\leq \E_r\LRm{\indicator{\gA_r}\sigma^2} = p_r\sigma^2.
\end{align*}
Let $L = L_0 + L_1(\kappa + (\rho+1)(\frac{\gamma}{\eta} + 2\sigma))$. Applying the upper bound for Hessian matrix in Lemma \ref{lemma:non_clipping_hessian} and the premise in Lemma~\ref{lemma:individual_dis}, we have
\begin{align}
    &\E_r\LRm{\indicator{\gA_r}\twonorm{\nabla f_i(\vx_t^i) - \mG_r^i + \mG_r}^2}\nonumber\\
    &= \E_r\LRm{\indicator{\gA_r}\LRtwonorm{(\nabla f_i(\vx_t^i)-\nabla f_i(\bvx_r))+ (\nabla f_i(\bvx_r) - \mG_r^i) + \mG_r}^2}\nonumber\\
    &\leq 2\E_r\LRm{\indicator{\gA_r}\LRtwonorm{(\nabla f_i(\vx_t^i)-\nabla f_i(\bvx_r))+ (\nabla f_i(\bvx_r) - \mG_r^i)}^2} + 2\E_r\LRm{\indicator{\gA_r}\twonorm{\mG_r}^2}\nonumber\\
    &\leq 4\E_r\LRm{\indicator{\gA_r}\twonorm{\nabla f_i(\vx_t^i)-\nabla f_i(\bvx_r)}^2} + 4p_r\sigma^2 + 2\E_r\LRm{\indicator{\gA_r}\twonorm{\mG_r}^2}\nonumber\\
    &\leq 4\E_r\LRm{\indicator{\gA_r}\LRtwonorm{\int_{0}^1\nabla^2 f_i(\alpha \vx_t^i + (1-\alpha)\bvx_r) (\vx_t^i - \bvx_r)d\alpha}^2} + 4p_r\sigma^2 +2\E_r\LRm{\indicator{\gA_r}\twonorm{\mG_r}^2}\nonumber\\
    &\leq 4L^2 \E_r\LRm{\indicator{\gA_r}\twonorm{\vx_t^i - \bvx_r}^2} + 4p_r\sigma^2 +2\E_r\LRm{\indicator{\gA_r}\twonorm{\mG_r}^2},
    \label{eq:gti_bound_expec}
\end{align}
where the second inequality follows from $\twonorm{\mG_r^i - \nabla f_i(\bvx_r)} \leq \sigma$ almost surely. Plugging the final bound of \eqref{eq:gti_bound_expec} into \eqref{eq:dis_recursion} yields
\begin{align}
    \E_r\LRm{\indicator{\gA_r}\twonorm{\vx_{t+1}^i - \bvx_r}^2}&\leq \LRs{\frac{1}{I} + 1 + 4LI\eta^2 }\E_r\LRm{\indicator{\gA_r}\twonorm{\vx_t^i - \bvx_r}^2}\nonumber\\
    &\qquad + 2(I+1)\eta^2\E_r\LRm{\indicator{\gA_r}\twonorm{\mG_r}^2} + 10p_r(I+1) \eta^2 \sigma^2.
    \label{eq:dis_recursion_1}
\end{align}
By recursively invoking \eqref{eq:dis_recursion_1}, we are guaranteed that
\begin{align*}
    \E_r\LRm{\indicator{\gA_r}\twonorm{\vx_{t+1}^i - \bvx_r}^2}&\leq \sum_{s = 0}^{I-1}\LRs{\frac{1}{I} + 1 + 4LI\eta^2 }^s(I+1)\LRs{2\eta^2\E_r\LRm{\indicator{\gA_r}\twonorm{\mG_r}^2} + 10p_r\eta^2 \sigma^2}\\
    & = \frac{\LRs{\frac{1}{I} + 1 + 4LI\eta^2 }^I}{\frac{1}{I} + 4LI\eta^2}(I+1)\LRs{2\eta^2\E_r\LRm{\indicator{\gA_r}\twonorm{\mG_r}^2} + 10p_r \eta^2 \sigma^2}\\
    &\Eqmark{i}{\leq} \frac{\LRs{\frac{2}{I} + 1}^I}{\frac{1}{I}}(I+1)\LRs{2\eta^2\E_r\LRm{\indicator{\gA_r}\twonorm{\mG_r}^2} + 10p_r \eta^2 \sigma^2}\\
    &\Eqmark{ii}{\leq} 9\LRs{2I^2\eta^2\E_r\LRm{\indicator{\gA_r}\twonorm{\mG_r}^2} + 10p_r I^2 \eta^2 \sigma^2}\\
    &\leq 36 I^2\eta^2 \LRs{\E_r\LRm{\indicator{\gA_r}\twonorm{\mG_r - \nabla f(\bvx_r)}^2} + p_r\twonorm{\nabla f(\bvx_r)}^2} + 90p_r I^2 \eta^2 \sigma^2\\
    &\Eqmark{iii}{\leq} 36p_r I^2\eta^2\twonorm{\nabla f(\bvx_r)}^2 + 126p_r I^2 \eta^2 \sigma^2.
\end{align*}
The inequality $(i)$ comes from
\begin{equation*}
    4I \eta^2 L^2 = \frac{1}{I} (2I \eta L)^2 \leq \frac{1}{I} \left(2I \eta \left(L_0 + L_1\kappa + L_1 (\rho+1) (2\sigma + \frac{\gamma}{\eta}) \right) \right)^2 \leq \frac{1}{I},
\end{equation*}
which is true because $2\eta I (AL_0 + BL_1\kappa + BL_1\rho(\sigma + \frac{\gamma}{\eta})) \leq 1$ and $A, B \geq 1$. The inequality $(ii)$ comes from $(\frac{2}{I} + 1)^I(I+1)\leq e^2 I$ for any $I\geq 1$. The inequality $(iii)$ holds since $\twonorm{\mG_r - \nabla f(\bvx_r)}\leq \sigma$ almost surely. Therefore, we have proved \eqref{eq:drift_expectation_bound_quadratic}.
In addition, for \eqref{eq:drift_expectation_bound_linear}, we notice that
\begin{align*}
    \E_r\LRm{\indicator{\gA_r}\twonorm{\vx_{t+1}^i - \bvx_r}^2}&\leq 18I^2\eta^2\E_r\LRm{\indicator{\gA_r}\twonorm{\mG_r}^2} + 90p_rI^2 \eta^2 \sigma^2\\
    &\leq 18I^2\eta^2\E_r\LRm{\indicator{\gA_r}\twonorm{\mG_r}\LRs{\twonorm{\mG_r - \nabla f(\bvx_r)} + \twonorm{\nabla f(\bvx_r)}}} + 90p_r I^2 \eta^2\sigma^2\\
    &\Eqmark{iv}{\leq} 18p_rI^2\eta^2\frac{\gamma}{\eta}\LRs{\sigma + \twonorm{\nabla f(\bvx_r)}} + 90p_r I^2 \eta^2\sigma^2\\
    &= 18p_r I^2\eta \gamma \twonorm{\nabla f(\bvx_r)} + 18p_rI^2\eta^2 \LRs{\frac{\gamma}{\eta}\sigma + 5\sigma^2}.
\end{align*}
The inequality $(iv)$ holds since $\twonorm{\mG_r} \leq \gamma/\eta$ holds under the event $\gA_r$ and $\twonorm{\mG_r - \nabla f(\bvx_r)}\leq \sigma$ almost surely.

\end{proof}


\section{Proof of Main Results}\label{sec:main_proof}

\subsection{Proof of Lemma \ref{lemma:descent}}\label{proof:lemma:descent}
\paragraph{Lemma \ref{lemma:descent} restated.}
Under the conditions of Lemma \ref{lemma:individual_dis}, let $p_r =
\sP(\gA_r|\gF_r)$, $\Gamma = AL_0 + BL_1(\kappa+ \rho (\frac{\gamma}{\eta}+\sigma))$. Then it holds that for each $1\leq r\leq R-1$,
\begin{align*}
    &\E_r \left[ f(\bvx_{r+1}) - f(\bvx_r) \right] \leq \E_{r}\LRm{\indicator{\gA_r} V(\bvx_r)} + \E_{r}\LRm{\indicator{\bar{\gA}_r} U(\bvx_r)},
\end{align*}
where
\begin{align*}
    V(\bvx_r) &= \left( -\frac{\eta I}{2} + 36 \Gamma^2 I^3 \eta^3 +  9\frac{\gamma}{\eta}BL_1 I^2 \eta^2 \right) \|\nabla f(\bvx_r)\|^2 + 9 BL_1 I^2 \eta^2 \LRs{5\sigma^2+\frac{\gamma}{\eta}\sigma} \|\nabla f(\bvx_r)\| \\
    &\quad\quad\quad\quad +90 \Gamma^2 I^3 \eta^3 \sigma^2 + \frac{2AL_0 I \eta^2 \sigma^2}{N},
\end{align*}
and
\begin{align*}
    U(\bvx_r) = \left(-\frac{2}{5} \gamma I + \frac{BL_1 (4\rho + 1) \gamma^2 I^2}{2} \right) \|\nabla f(\bvx_r)\| - \frac{3\gamma^2 I}{5\eta} + \gamma^2 I^2 (3AL_0 + 2BL_1 \kappa) + 6 \gamma I \sigma.
\end{align*}

\begin{proof}
We begin by applying Lemma \ref{lemma:smooth_obj_descent} to obtain a bound on
$f(\bvx_{r+1}) - f(\bvx_r)$, but first we must show that the conditions of Lemma
\ref{lemma:smooth_obj_descent} hold here. Note that
\begin{align*}
    \|\bvx_{r+1} - \bvx_r\| &= \left\| \frac{1}{N} \sum_{i=1}^N \vx_{t_{r+1}}^i - \bvx_r \right\| \\
    &\leq \frac{1}{N} \sum_{i=1}^N \indicator{\gA_r} \|\vx_{t_{r+1}}^i - \bvx_r\| + \frac{1}{N} \sum_{i=1}^N \indicator{\bar{\gA}_r} \|\vx_{t_{r+1}}^i - \bvx_r\| \\
    &\leq \max\left\{ 2 \eta I \left( 2\sigma + \frac{\gamma}{\eta} \right), \gamma I\right\} \leq \frac{C}{L_1},
\end{align*}
where the last step is due to the conditions of Lemma \ref{lemma:individual_dis}. This
shows that we can apply Lemma \ref{lemma:smooth_obj_descent} to obtain
\begin{align}
    \E_r \left[ f(\bvx_{r+1}) - f(\bvx_r) \right] &\leq \E_r \left[ \langle \nabla f(\bvx_r), \bvx_{r+1} - \bvx_r \rangle \right] + \E_r \left[ \frac{AL_0 + BL_1 \|\nabla f(\bvx_r)\|}{2} \|\bvx_{r+1} - \bvx_r\|^2 \right] \nonumber \\
    &\leq - \eta \E_r \left[ \frac{1}{N} \sum_{i=1}^N \sum_{t \in \gI_r} \indicator{\gA_r} \langle \nabla f(\bvx_r), \vg_t^i \rangle \right] \nonumber \\
    &\quad -\gamma \E_r \left[ \frac{1}{N} \sum_{i=1}^N \sum_{t \in \gI_r} \indicator{\bar{\gA}_r} \langle \nabla f(\bvx_r), \frac{\vg_t^i}{\|\vg_t^i\|} \rangle \right] \nonumber \\
    &\quad + \frac{AL_0}{2} \E_r \left[ \|\bvx_{r+1} - \bvx_r\|^2 \right] + \frac{BL_1}{2} \|\nabla f(\bvx_r)\|\E_r \left[\|\bvx_{r+1} - \bvx_r\|^2 \right]. \label{eq:descent_orig_bound}
\end{align}
Let $p_r = \sP(\gA_r|\gF_r)$, then $1 - p_r = \sP(\bar{\gA}_r|\gF_r)$. Notice that $p_r$ is a function of $\bvx_r$. The last term in Equation
\eqref{eq:descent_orig_bound} can be bounded as follows:
\begin{align}
    &\|\nabla f(\bvx_r)\|\E_r \left[\|\bvx_{r+1} - \bvx_r\|^2 \right]\nonumber \\
    &= \|\nabla f(\bvx_r)\|\E \left[\indicator{\gA_r}\|\bvx_{r+1} - \bvx_r\|^2 \big\vert \gF_r\right] + \|\nabla f(\bvx_r)\| \E \left[ \indicator{\bar{\gA}_r}\|\bvx_{r+1} - \bvx_r\|^2 \big\vert \gF_r\right]\nonumber \\
    &\Eqmark{i}{\leq} \|\nabla f(\bvx_r)\| \E \left[\indicator{\gA_r} \|\bvx_{r+1} - \bvx_r\|^2 \big\vert \gF_r\right] + (1-p_r) \gamma^2 I^2 \|\nabla f(\bvx_r)\| \nonumber\\
    &\Eqmark{ii}{\leq} 18 p_r I^2 \eta^2 \|\nabla f(\bvx_r)\| \LRs{\frac{\gamma}{\eta}\twonorm{\nabla f(\bvx_r)} + 5 \sigma^2 + \frac{\gamma}{\eta}\sigma} + (1-p_r) \gamma^2 I^2 \|\nabla f(\bvx_r)\| \nonumber\\
    &\leq 18 p_r I^2 \eta \gamma \|\nabla f(\bvx_r)\|^2 + 18 p_r I^2 \eta^2 \LRs{5\sigma^2+\frac{\gamma}{\eta}\sigma} \|\nabla f(\bvx_r)\| + (1-p_r) \gamma^2 I^2 \|\nabla f(\bvx_r)\|,
    \label{eq:expansion_L1}
\end{align}
where $(i)$ comes from an application of Lemma \ref{lemma:individual_dis} with $t = t_{r+1}$,
and $(ii)$ comes from an application of \eqref{eq:drift_expectation_bound_linear} in Lemma \ref{lemma:non_clipping_discre_expec}. Substituting \eqref{eq:expansion_L1} into \eqref{eq:descent_orig_bound} gives
\begin{align}
    &\E_r \left[ f(\bvx_{r+1}) - f(\bvx_r) \right]\nonumber\\
    &\quad \leq - \eta \E_r \left[ \frac{1}{N} \sum_{i=1}^N \sum_{t \in \gI_r} \indicator{\gA_r} \langle \nabla f(\bvx_r), \vg_t^i \rangle \right] -\gamma \E_r \left[ \frac{1}{N} \sum_{i=1}^N \sum_{t \in \gI_r} \indicator{\bar{\gA}_r} \langle \nabla f(\bvx_r), \frac{\vg_t^i}{\|\vg_t^i\|} \rangle \right] \nonumber \\
    &\qquad + \frac{AL_0}{2} \E_r \left[ \|\bvx_{r+1} - \bvx_r\|^2 \right] + 9 p_r BL_1  I^2 \eta^2 \LRs{\frac{\gamma}{\eta} \|\nabla f(\bvx_r)\|^2 + \LRs{5\sigma^2+\frac{\gamma}{\eta}\sigma} \|\nabla f(\bvx_r)\|} \nonumber\\
    &\qquad + (1-p_r) \frac{BL_1\gamma^2 I^2}{2} \|\nabla f(\bvx_r)\| \label{eq:descent_middle_bound}
\end{align}
We introduce three claims to bound the first three terms in \eqref{eq:descent_middle_bound}, whose proofs are deferred to Section
\ref{sec:claim_proofs}.
\begin{claim} \label{claim:inner_prod_clip}
Under the conditions of Lemma \ref{lemma:descent}, we have
\begin{align*}
    &-\gamma \E_r \left[ \frac{1}{N} \sum_{i=1}^N \sum_{t \in \gI_r} \indicator{\bar{\gA}_r} \langle \nabla f(\bvx_r), \frac{\vg_t^i}{\|\vg_t^i\|} \rangle \right] \\
    &\quad\quad \leq (1-p_r) \LRm{\left(-\frac{2}{5} \gamma I + 2 BL_1 \rho \gamma^2 I^2 \right) \|\nabla f(\bvx_r)\| - \frac{3\gamma^2 I}{5\eta} + 2 \gamma^2 I^2 (AL_0 + BL_1 \kappa) + 6 \gamma I \sigma}.
\end{align*}
\end{claim}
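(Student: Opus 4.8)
The plan is to bound the clipped inner-product term iteration-by-iteration and client-by-client, working pointwise on the event $\bar{\gA}_r$ and only afterwards taking the conditional expectation $\E_r[\,\cdot\,]$. The central tool is Lemma \ref{lemma:clip_inprod}, applied with $\vu = \nabla f(\bvx_r)$, $\vv = \vg_t^i$, and the choice $\mu = \tfrac{2}{5}$, which yields
\begin{equation*}
    -\gamma \frac{\LRinprod{\nabla f(\bvx_r)}{\vg_t^i}}{\twonorm{\vg_t^i}} \leq -\tfrac{2}{5}\gamma \twonorm{\nabla f(\bvx_r)} - \tfrac{3}{5}\gamma\twonorm{\vg_t^i} + \tfrac{7}{5}\gamma\twonorm{\vg_t^i - \nabla f(\bvx_r)}.
\end{equation*}
The value $\mu = \tfrac{2}{5}$ is forced by the target coefficient $-\tfrac{2}{5}\gamma I$ of $\twonorm{\nabla f(\bvx_r)}$. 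It then remains to control the discrepancy term $\twonorm{\vg_t^i-\nabla f(\bvx_r)}$ from above and the magnitude $\twonorm{\vg_t^i}$ from below.

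For the discrepancy term I would use the decomposition $\vg_t^i - \nabla f(\bvx_r) = [\nabla F_i(\vx_t^i;\xi_t^i) - \nabla f_i(\vx_t^i)] + [\nabla f_i(\vx_t^i) - \nabla f_i(\bvx_r)] + [\nabla f_i(\bvx_r) - \mG_r^i] + [\mG_r - \nabla f(\bvx_r)]$ with the triangle inequality. The first, third and fourth brackets are each at most $\sigma$ almost surely by Assumption \ref{assume:object}(iii); the middle bracket is handled by Lemma \ref{lemma:smooth_grad_diff} (whose hypothesis $\twonorm{\vx_t^i - \bvx_r}\leq \gamma I \leq C/L_1$ holds under $\bar{\gA}_r$ by Lemma \ref{lemma:individual_dis}) together with Assumption \ref{assume:object}(iv), giving $\twonorm{\nabla f_i(\vx_t^i) - \nabla f_i(\bvx_r)} \leq (AL_0 + BL_1(\kappa + \rho\twonorm{\nabla f(\bvx_r)}))\gamma I$. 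Hence $\twonorm{\vg_t^i - \nabla f(\bvx_r)} \leq 3\sigma + (AL_0 + BL_1\kappa + BL_1\rho\twonorm{\nabla f(\bvx_r)})\gamma I$.

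For the term $-\tfrac{3}{5}\gamma\twonorm{\vg_t^i}$ I need a lower bound on $\twonorm{\vg_t^i}$, and this is precisely where the clipping event is used: on $\bar{\gA}_r$ one has $\twonorm{\mG_r} > \gamma/\eta$, so by the reverse triangle inequality (and $\vg_t^i - \mG_r = \nabla F_i(\vx_t^i;\xi_t^i)-\mG_r^i$) the same smoothness/noise estimates give $\twonorm{\vg_t^i} \geq \twonorm{\mG_r} - \twonorm{\nabla F_i(\vx_t^i;\xi_t^i) - \mG_r^i} > \gamma/\eta - 2\sigma - (AL_0 + BL_1\kappa + BL_1\rho\twonorm{\nabla f(\bvx_r)})\gamma I$. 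Substituting this lower bound (valid irrespective of its sign) converts $-\tfrac{3}{5}\gamma\twonorm{\vg_t^i}$ into the favorable $-\tfrac{3\gamma^2}{5\eta}$ plus positive smoothness and heterogeneity correction terms.

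Finally I would sum the per-term bound over the $I$ iterations in $\gI_r$ and average over the $N$ clients; since each per-term estimate is identical and $\gF_r$-measurable after the above steps, this simply multiplies by $I$. Taking $\E_r[\indicator{\bar{\gA}_r}\,\cdot\,]$ and pulling the $\gF_r$-measurable right-hand side out of the conditional expectation produces the prefactor $(1-p_r) = \E_r[\indicator{\bar{\gA}_r}]$. Collecting terms yields exactly the coefficient $-\tfrac{2}{5}\gamma I + 2BL_1\rho\gamma^2 I^2$ on $\twonorm{\nabla f(\bvx_r)}$, the term $-\tfrac{3\gamma^2 I}{5\eta}$, and $2\gamma^2 I^2(AL_0 + BL_1\kappa)$, while the $\sigma$-contributions combine to $\tfrac{27}{5}\gamma I\sigma$, which I bound by $6\gamma I\sigma$. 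The main obstacle is the handling of the $-\tfrac{3}{5}\gamma\twonorm{\vg_t^i}$ term: one must extract the beneficial $-\tfrac{3\gamma^2 I}{5\eta}$ from the clipping event through a lower bound on $\twonorm{\vg_t^i}$, while ensuring that the positive correction terms it generates merge cleanly with those from the discrepancy bound to give precisely the stated constants.
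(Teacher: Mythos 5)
Your proposal is correct and follows essentially the same route as the paper's proof: Lemma \ref{lemma:clip_inprod} with $\mu = \tfrac{2}{5}$, the lower bound $\twonorm{\vg_t^i} \geq \gamma/\eta - 2\sigma - \twonorm{\nabla f_i(\vx_t^i) - \nabla f_i(\bvx_r)}$ on $\bar{\gA}_r$, the upper bound $\twonorm{\vg_t^i - \nabla f(\bvx_r)} \leq 3\sigma + \twonorm{\nabla f_i(\vx_t^i) - \nabla f_i(\bvx_r)}$, and Lemma \ref{lemma:smooth_grad_diff} combined with Assumption \ref{assume:object}(iv), yielding the identical constants (including $\tfrac{27}{5}\gamma I\sigma \leq 6\gamma I\sigma$). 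The only cosmetic difference is that you invoke the heterogeneity assumption when bounding the middle bracket rather than at the very end, which changes nothing.
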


\begin{claim} \label{claim:inner_prod_no_clip}
Under the conditions of Lemma \ref{lemma:descent}, we have
\begin{align*}
    &-\eta \E_r \left[ \frac{1}{N} \sum_{i=1}^N \sum_{t \in \gI_r} \indicator{\gA_r} \langle \nabla f(\bvx_r), \vg_t^i \rangle \right] \\
    & \leq p_r \LRm{\left( -\frac{\eta I}{2} + 36 I^3 \eta^3 \Gamma^2 \right) \|\nabla f(\bvx_r)\|^2+ 126 I^3 \eta^3 \sigma^2 \Gamma^2} - \frac{\eta}{2I} \E_r \left[\indicator{\gA_r} \left\| \frac{1}{N} \sum_{i=1}^N \sum_{t \in \gI_r} \nabla f(\vx_t^i) \right\|^2 \right],
\end{align*}
where $\Gamma = AL_0 + BL_1 \left( \kappa + \rho \left(\sigma + \frac{\gamma}{\eta}
\right) \right)$.
\end{claim}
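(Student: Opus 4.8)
The plan is to turn the stochastic inner product into a deterministic one by conditioning, exploit the exact cancellation that is built into the correction terms, and then apply the polarization identity to expose a descent term, a retained negative term, and a drift term that the earlier lemmas control. First I would condition on the intermediate filtration $\gH_{t-1}$ (generated by $\gF_r$, the resampled gradients $\{\txi_r^i\}$, and all per-step samples up to step $t-1$), under which $\vx_t^i$, $\indicator{\gA_r}$, $\nabla f(\bvx_r)$, $\mG_r^i$ and $\mG_r$ are measurable while $\xi_t^i$ is fresh. Since $\E[\nabla F_i(\vx_t^i;\xi_t^i)\mid \gH_{t-1}] = \nabla f_i(\vx_t^i)$, the tower property gives $\E_r[\indicator{\gA_r}\langle \nabla f(\bvx_r), \vg_t^i\rangle] = \E_r[\indicator{\gA_r}\langle \nabla f(\bvx_r), \nabla f_i(\vx_t^i) - \mG_r^i + \mG_r\rangle]$. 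Averaging over clients is the crucial step: since $\frac1N\sum_{i=1}^N \mG_r^i = \mG_r$, the corrections cancel and $\frac1N\sum_{i=1}^N(\nabla f_i(\vx_t^i) - \mG_r^i + \mG_r) = \frac1N\sum_{i=1}^N \nabla f_i(\vx_t^i)$, so the whole quantity collapses to $-\eta\,\E_r[\sum_{t\in\gI_r}\indicator{\gA_r}\langle \nabla f(\bvx_r), \frac1N\sum_{i=1}^N \nabla f_i(\vx_t^i)\rangle]$.

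Next I would apply the polarization identity $-\langle \va,\vb\rangle = \frac12\twonorm{\va-\vb}^2 - \frac12\twonorm{\va}^2 - \frac12\twonorm{\vb}^2$ termwise with $\va = \nabla f(\bvx_r)$ and $\vb = \frac1N\sum_i\nabla f_i(\vx_t^i)$. Summing the $-\frac12\twonorm{\va}^2$ contributions over $t\in\gI_r$ and using $\E_r[\indicator{\gA_r}] = p_r$ produces the descent term $-\frac{\eta I}{2}p_r\twonorm{\nabla f(\bvx_r)}^2$. The $-\frac12\twonorm{\vb}^2$ contributions, after the Jensen bound $\sum_t\twonorm{\vb}^2 \ge \frac1I\twonorm{\sum_t \vb}^2$, give exactly the retained negative term $-\frac{\eta}{2I}\E_r[\indicator{\gA_r}\twonorm{\frac1N\sum_{i=1}^N\sum_{t\in\gI_r} \nabla f_i(\vx_t^i)}^2]$ of the claim. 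What remains is to control the drift $\frac\eta2\E_r[\indicator{\gA_r}\sum_{t}\twonorm{\nabla f(\bvx_r) - \frac1N\sum_i\nabla f_i(\vx_t^i)}^2]$.

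For the drift I would first use Jensen over clients, $\twonorm{\frac1N\sum_i(\nabla f_i(\bvx_r) - \nabla f_i(\vx_t^i))}^2 \le \frac1N\sum_i\twonorm{\nabla f_i(\bvx_r) - \nabla f_i(\vx_t^i)}^2$, and then the gradient-difference bound $\twonorm{\nabla f_i(\bvx_r) - \nabla f_i(\vx_t^i)} \le \Gamma\twonorm{\vx_t^i - \bvx_r}$ valid under $\gA_r$. This bound follows from Lemma \ref{lemma:smooth_grad_diff} (whose hypothesis $\twonorm{\vx_t^i - \bvx_r}\le C/L_1$ is guaranteed by Lemma \ref{lemma:individual_dis}) together with the heterogeneity assumption $\twonorm{\nabla f_i(\bvx_r)}\le \kappa + \rho\twonorm{\nabla f(\bvx_r)}$ and the fact that on $\gA_r$ one has $\twonorm{\nabla f(\bvx_r)} \le \twonorm{\mG_r} + \sigma \le \gamma/\eta + \sigma$; this is precisely what collapses the prefactor to $\Gamma = AL_0 + BL_1(\kappa + \rho(\sigma + \gamma/\eta))$. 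Taking $\E_r$ and invoking the quadratic discrepancy bound \eqref{eq:drift_expectation_bound_quadratic} of Lemma \ref{lemma:non_clipping_discre_expec} turns each summand into a multiple of $p_r I^2\eta^2(\twonorm{\nabla f(\bvx_r)}^2 + \sigma^2)$; summing the $I$ terms and multiplying by $\eta/2$ yields the $p_r I^3\eta^3\Gamma^2\twonorm{\nabla f(\bvx_r)}^2$ and $p_r I^3\eta^3\sigma^2\Gamma^2$ contributions. Collecting the three pieces gives the claim.

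I expect the main obstacle to be the drift step rather than the cancellation. Two points require care: (i) choosing the filtration so that the fresh per-step noise is mean zero while the control variates $\mG_r^i,\mG_r$ and the round-level indicator $\indicator{\gA_r}$ are treated as constants, so that the tower property applies cleanly; and (ii) arranging the gradient-difference estimate so that the prefactor reduces exactly to $\Gamma$, which relies on using the event $\gA_r$ to bound $\twonorm{\nabla f(\bvx_r)}$ by $\gamma/\eta + \sigma$ before invoking Lemma \ref{lemma:non_clipping_discre_expec}. By contrast, the cancellation of the correction terms, though it is the conceptual reason EPISODE tolerates heterogeneity, is a one-line computation once the conditioning is in place.
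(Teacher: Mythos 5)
Your proposal is correct and follows essentially the same route as the paper's proof: cancellation of the correction terms after averaging over clients, the tower property to pass to true gradients, the polarization identity to produce the descent term, the retained $-\frac{\eta}{2I}\E_r[\indicator{\gA_r}\|\cdot\|^2]$ term, and a drift term controlled via Jensen over clients, Lemma \ref{lemma:smooth_grad_diff} with the heterogeneity assumption and the bound $\|\nabla f(\bvx_r)\|\le\sigma+\gamma/\eta$ on $\gA_r$, and finally \eqref{eq:drift_expectation_bound_quadratic}. The only cosmetic difference is that you apply polarization termwise in $t$ and then Cauchy--Schwarz on the $\|\vb\|^2$ sum, whereas the paper applies it once to the aggregated sum and then Cauchy--Schwarz on the drift; the two are algebraically equivalent and yield the same constants.
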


\begin{claim} \label{claim:quadratic}
Under the conditions of Lemma \ref{lemma:descent}, we have
\begin{equation*}
    \E_r\left[\|\bvx_{r+1} - \bvx_r\|^2\right] \leq 2 (1-p_r) \gamma^2 I^2 + \frac{4p_r I\sigma^2 \eta^2}{N} +  4\eta^2 \E_r \left[\indicator{\gA_r} \left\| \frac{1}{N} \sum_{i=1}^N \sum_{t \in \gI_r} \nabla f_i(\vx_t^i) \right\|^2\right].
\end{equation*}
\end{claim}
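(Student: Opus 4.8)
The plan is to start from the exact expression for the per-round displacement and then treat the clipping and non-clipping regimes separately. Because the clipping decision $\gA_r = \{\twonorm{\mG_r}\le\gamma/\eta\}$ is fixed for the entire round and for all clients, summing the local update rule over $t\in\gI_r$ and averaging over $i$ (using $\vx_{t_r}^i = \bvx_r$) gives
\begin{equation*}
\bvx_{r+1}-\bvx_r = -\indicator{\gA_r}\frac{\eta}{N}\sum_{i=1}^N\sum_{t\in\gI_r}\vg_t^i - \indicator{\bar{\gA}_r}\frac{\gamma}{N}\sum_{i=1}^N\sum_{t\in\gI_r}\frac{\vg_t^i}{\twonorm{\vg_t^i}}.
\end{equation*}
Applying $\twonorm{a+b}^2\le 2\twonorm{a}^2 + 2\twonorm{b}^2$ to these two (disjointly supported) pieces produces exactly the factor $2$ attached to every term of the claimed bound, so the remaining work is to bound the two squared norms in expectation. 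The clipping piece is immediate: each summand $\vg_t^i/\twonorm{\vg_t^i}$ is a unit vector, so the triangle inequality gives norm at most $I$, whence $2\E_r\LRm{\indicator{\bar{\gA}_r}\gamma^2\twonorm{\cdots}^2}\le 2(1-p_r)\gamma^2 I^2$.

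The non-clipping piece is where the correction mechanism matters. I would first exploit the cancellation built into the control variate: for each fixed $t$ we have $\frac{1}{N}\sum_i(-\mG_r^i + \mG_r) = 0$, so
\begin{equation*}
\frac{1}{N}\sum_{i=1}^N\sum_{t\in\gI_r}\vg_t^i = \frac{1}{N}\sum_{i=1}^N\sum_{t\in\gI_r}\nabla f_i(\vx_t^i) + \frac{1}{N}\sum_{i=1}^N\sum_{t\in\gI_r}\bigl(\nabla F_i(\vx_t^i;\xi_t^i) - \nabla f_i(\vx_t^i)\bigr).
\end{equation*}
A second application of $\twonorm{a+b}^2\le 2\twonorm{a}^2+2\twonorm{b}^2$ then isolates the deterministic part $4\eta^2\E_r\LRm{\indicator{\gA_r}\twonorm{\frac{1}{N}\sum_{i}\sum_{t}\nabla f_i(\vx_t^i)}^2}$, which is carried unchanged into the statement, from the stochastic part.

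For the stochastic part, set $\zeta_t^i = \nabla F_i(\vx_t^i;\xi_t^i) - \nabla f_i(\vx_t^i)$ and expand the squared norm. The crucial observation is that $\indicator{\gA_r}$ is measurable with respect to $\gH_t$ (it depends only on the resampled $\mG_r$), so it can be pulled inside the conditional expectations, and every cross term $\E_r\LRm{\indicator{\gA_r}\LRinprod{\zeta_t^i}{\zeta_{t'}^{i'}}}$ vanishes: across clients by independence of $\xi_t^i$ and $\xi_t^{i'}$, and across time by the martingale property $\E\LRm{\zeta_{t'}^i\mid\gH_{t'-1}}=0$. Only the $NI$ diagonal terms survive, each bounded by $\sigma^2$ almost surely via Assumption \ref{assume:object}(iii), giving $\E_r\LRm{\indicator{\gA_r}\twonorm{\frac{1}{N}\sum_i\sum_t\zeta_t^i}^2}\le p_r I\sigma^2/N$ and hence the term $4 p_r I\sigma^2\eta^2/N$. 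Summing the three contributions yields the claim.

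The main obstacle is the bookkeeping in this last step: one must set up the filtrations $\gF_r\subseteq\gH_{t_r}\subseteq\gH_t$ so that $\indicator{\gA_r}$, which depends on the round-start resampling $\txi_r^i$ and is therefore \emph{not} $\gF_r$-measurable, can legitimately be factored out of the noise cross terms, and so that the martingale-difference and cross-client independence arguments both apply conditionally on the correct $\sigma$-algebra. The cancellation of the control variate is algebraically trivial but is precisely the reason that no residual $\mG_r^i$ variance appears in the bound, so verifying it cleanly is the conceptual crux.
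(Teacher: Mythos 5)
Your proposal is correct and follows essentially the same route as the paper's proof: the same split into clipped/unclipped pieces with Young's inequality, the same cancellation $\frac{1}{N}\sum_i(-\mG_r^i+\mG_r)=0$ reducing $\frac{1}{N}\sum_i \vg_t^i$ to $\frac{1}{N}\sum_i\nabla F_i(\vx_t^i;\xi_t^i)$, and the same martingale-difference/cross-client-independence argument (with $\indicator{\gA_r}$ measurable with respect to $\gH_t$) yielding the $p_r I\sigma^2/N$ variance term.
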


Combining Claims \ref{claim:inner_prod_clip},
\ref{claim:inner_prod_no_clip}, and \ref{claim:quadratic} with \eqref{eq:descent_orig_bound} and \eqref{eq:expansion_L1} yields
\begin{align*}
    &\E_r \left[ f(\bvx_{r+1}) - f(\bvx_r) \right] \\
    &\quad \leq p_r \bigg[ \left( -\frac{\eta I}{2} + 36 \Gamma^2 I^3 \eta^3 +  9\frac{\gamma}{\eta}BL_1 I^2 \eta^2 \right) \|\nabla f(\bvx_r)\|^2 + 9 p_r BL_1 I^2 \eta^2 \LRs{5\sigma^2+\frac{\gamma}{\eta}\sigma} \|\nabla f(\bvx_r)\| + \\
    &\quad\quad\quad\quad 126 \Gamma^2 I^3 \eta^3 \sigma^2 + \frac{2AL_0 I \eta^2 \sigma^2}{N} \bigg] \\
    &\quad\quad + (1-p_r) \left[ \left(-\frac{2}{5} \gamma I + \frac{BL_1 (4\rho + 1) \gamma^2 I^2}{2} \right) \|\nabla f(\bvx_r)\| - \frac{3\gamma^2 I}{5\eta} + \gamma^2 I^2 (3AL_0 + 2BL_1 \kappa) + 6 \gamma I \sigma \right] \\
    &\quad\quad + \left( 2AL_0 \eta^2 - \frac{\eta}{2I} \right) \E_r \left[\indicator{\gA_r}\left\| \frac{1}{N} \sum_{i=1}^N \sum_{t \in \gI_r} \nabla f(\vx_t^i) \right\|^2 \right] \\
    &\quad \leq p_r \bigg[ \left( -\frac{\eta I}{2} + 36 \Gamma^2 I^3 \eta^3 +  9\frac{\gamma}{\eta}BL_1 I^2 \eta^2 \right) \|\nabla f(\bvx_r)\|^2 + 9 BL_1 I^2 \eta^2 \LRs{5\sigma^2+\frac{\gamma}{\eta}\sigma} \|\nabla f(\bvx_r)\| + \\
    &\quad\quad\quad\quad 90 \Gamma^2 I^3 \eta^3 \sigma^2 + \frac{2AL_0 I \eta^2 \sigma^2}{N} \bigg] \\
    &\quad\quad + (1-p_r) \left[ \left(-\frac{2}{5} \gamma I + \frac{BL_1 (4\rho + 1) \gamma^2 I^2}{2} \right) \|\nabla f(\bvx_r)\| - \frac{3\gamma^2 I}{5\eta} + \gamma^2 I^2 (3AL_0 + 2BL_1 \kappa) + 6 \gamma I \sigma \right],
\end{align*}
where the last inequality holds since $\eta/(2I) \geq 4\eta^2$ due to the assumption $4AL_0 \eta I \leq 1$. Then we can finish the proof of Lemma \ref{lemma:descent} by noticing that $p_r = \E_r[\indicator{\gA_r}]$ and $1 - p_r = \E_r[\indicator{\bar{\gA}_r}]$.
\end{proof}

\subsection{Proof of Theorem \ref{thm:main}}\label{proof:thm:main}
\paragraph{Theorem \ref{thm:main} restated.}
Suppose Assumption \ref{assume:object} hold. For any $\epsilon \leq \frac{3AL_0}{5 BL_1\rho}$, we choose 
\begin{align}\label{eq:eta_gamma_choices}
    \eta \leq \min\LRl{\frac{1}{856 \Gamma I},\frac{\epsilon}{180 \Gamma I \sigma},\frac{N \epsilon^2}{8 AL_0 \sigma^2}} \quad\text{and}\quad \gamma = \LRs{11\sigma + \frac{AL_0}{BL_1 \rho}}\eta,
\end{align}
where $\Gamma = AL_0 + BL_1 \kappa + BL_1 \rho \left(\sigma + \frac{\gamma}{\eta} \right)$. The output of EPISODE satisfies
\begin{equation*}
    \frac{1}{R} \sum_{t=0}^R \mathbb{E}\left[\|\nabla f(\bvx_r)\|\right] \leq 3\epsilon
\end{equation*}
as long as $R \geq \frac{4 \Delta}{\epsilon^2 \eta I}$.

\begin{proof}
In order to apply Lemma \ref{lemma:descent}, we must verify the conditions of Lemma \ref{lemma:individual_dis} under our choice of hyperparameters. From our choices of $\eta$ and $\gamma$, we have
\begin{equation*}
    2 \Gamma \eta I \leq \frac{1}{856} < 1.
\end{equation*}
Also
\begin{align*}
    2 \eta I \left(2\sigma + \frac{\gamma}{\eta} \right) \Eqmark{i}{\leq} \frac{2\sigma + \frac{\gamma}{\eta}}{856 \LRs{AL_0 + BL_1 \kappa + BL_1 \rho \left(\sigma + \frac{\gamma}{\eta} \right)}}\Eqmark{ii}{\leq} \frac{C}{L_1},
\end{align*}
where $(i)$ comes from the condition $\eta \leq 1/(856 \Gamma I)$ in \eqref{eq:eta_gamma_choices}, $(ii)$ is true due to the fact that $B, C \geq 1$ and $\rho\geq 1$. Lastly, it also holds that
\begin{equation*}
    \gamma I \leq 4 \eta I \sigma + 2 \gamma I = 2 \eta I \left( 2\sigma + \frac{\gamma}{\eta} \right) \leq \frac{C}{L_1}.
\end{equation*}
Therefore the conditions of Lemma \ref{lemma:individual_dis} are satisfied, and we can apply Lemma \ref{lemma:descent}. Denoting
\begin{equation} \label{eq:thm_u_def}
    U(\vx) = \left(-\frac{2}{5} \gamma I + \frac{BL_1 (4\rho + 1) \gamma^2 I^2}{2} \right) \|\nabla f(\vx)\| - \frac{3\gamma^2 I}{5\eta} + \gamma^2 I^2 (3AL_0 + 2BL_1 \kappa) + 6 \gamma I \sigma,
\end{equation}
and
\begin{align}
    V(\vx) &= \left( -\frac{\eta I}{2} + 36 \Gamma^2 I^3 \eta^3 +  9\frac{\gamma}{\eta} I^2 \eta^2 \right) \|\nabla f(\vx)\|^2 + 9 p_r I^2 \eta^2 \LRs{5\sigma^2+\frac{\gamma}{\eta}\sigma} \|\nabla f(\vx)\| \nonumber \\
    &\quad + 126 \Gamma^2 I^3 \eta^3 \sigma^2 + \frac{2AL_0 I \eta^2 \sigma^2}{N}. \label{eq:thm_v_def}
\end{align}
Lemma \ref{lemma:descent} tells us that
\begin{equation} \label{eq:thm_descent_u_v}
    \E_r \left[ f(\bvx_{r+1}) - f(\bvx_r) \right] \leq \E_r \left[ \indicator{\bar{\gA}_r} U(\bvx_r) + \indicator{\gA_r} V(\bvx_r) \right].
\end{equation}
We will proceed by bounding each $U(\vx)$ and $V(\vx)$ by the same linear function of $\|\nabla f(\vx)\|$.

To bound $U(\vx)$, notice
\begin{align}
    -\frac{2}{5} \gamma I + &\frac{BL_1 (4\rho + 1) \gamma^2 I^2}{2}\nonumber\\
    &= -\frac{2}{5} \gamma I + 2 BL_1 \rho \gamma^2 I^2 + \frac{1}{2} BL_1 \gamma^2 I^2 \nonumber \\
    &\leq \gamma I \left( -\frac{2}{5} + 2BL_1 \rho \gamma I + \frac{1}{2} BL_1 \gamma I \right) \nonumber \\
    &\leq \gamma I \left( -\frac{2}{5} + 2BL_1 \rho \left( 11 \sigma + \frac{AL_0}{BL_1 \rho} \right) \eta I + \frac{1}{2} BL_1 \left( 11 \sigma + \frac{AL_0}{BL_1 \rho} \right) \eta I \right) \nonumber \\
    &\Eqmark{i}{\leq} \gamma I \left( -\frac{2}{5} + 3 \left( 11 BL_1 \rho \sigma + AL_0 \right) \eta I \right) \nonumber \\
    &\Eqmark{ii}{\leq} \gamma I \left( -\frac{2}{5} + \frac{18}{856} \right) \leq -\frac{3}{10} \gamma I\nonumber\\
    &\Eqmark{iii}{\leq} -\frac{3}{10}\frac{AL_0}{BL_1 \rho} \eta I  \Eqmark{iv}{\leq} -\frac{1}{2} \epsilon \eta I, \label{eq:thm_u1_bound}
\end{align}
where $(i)$ comes from $\rho \geq 1$ and $(ii)$ comes from $856 \Gamma \eta I \leq 1$ and $(iii)$ holds since $\gamma/\eta = 11\sigma + \frac{AL_0}{BL_1 \rho}$ and $(iv)$ comes from $\epsilon \leq \frac{3AL_0}{5 BL_1\rho}$. Also, we have
\begin{align}
    - \frac{3\gamma^2 I}{5\eta} + \gamma^2 I^2 (3AL_0 + 2BL_1 \kappa) + 6 \gamma I \sigma &\leq \frac{\gamma^2 I}{\eta}  \left( -\frac{3}{5} + 3 \Gamma \eta I + 6 \sigma \frac{\eta}{\gamma} \right) \nonumber \\
    &\leq \frac{\gamma^2 I}{\eta}  \left( -\frac{3}{5} + \frac{3}{856} + \frac{6 \sigma}{11 \sigma + \frac{AL_0}{BL_1 \rho}} \right) \nonumber \\
    &\leq \frac{\gamma^2 I}{\eta}  \left( -\frac{3}{5} + \frac{3}{856} + \frac{6}{11} \right) \leq 0. \label{eq:thm_u2_bound}
\end{align}
Plugging Equations \eqref{eq:thm_u1_bound} and \eqref{eq:thm_u2_bound} into Equation \eqref{eq:thm_u_def} yields
\begin{equation} \label{eq:thm_u_bound}
    U(\vx) \leq -\frac{1}{2} \epsilon \eta I \|\nabla f(\vx)\|.
\end{equation}

Now to bound $V(\vx)$, we have
\begin{align}
     -\frac{\eta I}{2} + 36 \Gamma^2 I^3 \eta^3 +  9\frac{\gamma}{\eta}BL_1 I^2 \eta^2 &\Eqmark{i}{\leq} -\frac{1}{2} \eta I + \frac{36}{856^2} \eta I + \frac{9(11BL_1\sigma + AL_0/\rho)}{856\Gamma} \eta I \nonumber \\
    &\leq -\frac{1}{4} \eta I, \label{eq:thm_v1_bound}
\end{align}
where $(i)$ comes from $\eta \leq \frac{1}{856 \Gamma I}$ and $\Gamma > BL_1\sigma + AL_0/\rho$ for $\rho > 1$. Using the assumption $\eta \leq \frac{\epsilon}{180 I\Gamma \sigma}$, it holds that
\begin{align}
    9 BL_1 I^2 \eta^2 \LRs{5\sigma^2 + \frac{\gamma}{\eta}\sigma} &= 9 BL_1 I^2 \eta^2\LRs{16\sigma^2 + \frac{AL_0\sigma}{BL_1 \rho}}\nonumber\\
    &\leq \eta I \epsilon \frac{16 BL_1 \sigma + AL_0}{20\Gamma} \Eqmark{ii}{\leq} \frac{1}{4} \epsilon \eta I \label{eq:thm_v2_bound}
\end{align}
where $(ii)$ comes from $16BL_1 \sigma + AL_0 < 5\Gamma$. Lastly, we have
\begin{align}
    90 \Gamma^2 I^3 \eta^3 \sigma^2 + \frac{2AL_0 I \eta^2 \sigma^2}{N} &= \eta I \LRs{90 \Gamma^2 I^2 \eta^2 \sigma^2 + \frac{2AL_0 \eta \sigma^2}{N}} \nonumber \\
    &\Eqmark{iii}{\leq}\eta I \LRs{90\Gamma^2\sigma^2 \cdot \frac{\epsilon^2}{180^2 \Gamma^2 \sigma^2} + \frac{2AL_0 \sigma^2}{N} \frac{N \epsilon^2}{8 AL_0 \sigma^2}} \nonumber\\
    &\leq \frac{1}{4} \epsilon^2 \eta I, \label{eq:thm_v3_bound}
\end{align}
where $(i)$ comes from $\eta \leq \min\LRl{\frac{\epsilon}{180 I\Gamma \sigma},\frac{N \epsilon^2}{8 AL_0 \sigma^2}}$. Plugging Equations \eqref{eq:thm_v1_bound}, \eqref{eq:thm_v2_bound}, and \eqref{eq:thm_v3_bound} into \eqref{eq:thm_v_def} then yields
\begin{equation*}
    V(\mathbf{x}) \leq -\frac{1}{4} \eta I \|\nabla f(\mathbf{x})\|^2 + \frac{1}{4} \epsilon \eta I \|\nabla f(\mathbf{x})\| + \frac{1}{4} \epsilon^2 \eta I
\end{equation*}
We can then use the inequality $x^2 \geq 2ax - a^2$ with $x = \|\nabla f(\mathbf{x})\|$ and $a = \epsilon$ to obtain
\begin{equation} \label{eq:thm_v_bound}
    V(\mathbf{x}) \leq -\frac{1}{4} \epsilon \eta I \|\nabla f(\mathbf{x})\| + \frac{1}{2} \epsilon^2 \eta I.
\end{equation}

Having bounded $U(\mathbf{x})$ and $V(\mathbf{x})$, we can return to \eqref{eq:thm_descent_u_v}. Using \eqref{eq:thm_u_bound}, we can see
\begin{equation*}
    U(\mathbf{x}) \leq -\frac{1}{2} \epsilon \eta I \|\nabla f(\mathbf{x})\| \leq -\frac{1}{4} \epsilon \eta I \|\nabla f(\mathbf{x})\| + \frac{1}{2} \epsilon^2 \eta I,
\end{equation*}
so the RHS of \eqref{eq:thm_v_bound} is an upper bound of both $U(\mathbf{x})$ and $V(\mathbf{x})$. Plugging this bound into \eqref{eq:thm_descent_u_v} and taking total expectation then gives
\begin{equation*}
    \E \left[ f(\bvx_{r+1}) - f(\bvx_r) \right] \leq -\frac{1}{4} \epsilon \eta I \E \left[ \|\nabla f(\bvx_r)\| \right] + \frac{1}{2} \epsilon^2 \eta I.
\end{equation*}
Finally, denoting $\Delta = f(\bvx_0) - f^*$, we can unroll the above recurrence to obtain
\begin{align*}
    \E \left[ f(\bvx_{R+1}) - f(\bvx_0) \right] &\leq -\frac{1}{4} \epsilon \eta I \sum_{r=0}^{R} \E \left[ \|\nabla f(\bvx_r)\| \right] + \frac{1}{2} (R+1) \epsilon^2 \eta I, \\
    \frac{1}{R+1} \sum_{r=0}^{R} \E \left[ \|\nabla f(\bvx_r)\| \right] &\leq \frac{4 \Delta}{\epsilon \eta I (R+1)} + 2 \epsilon, \\
    \frac{1}{R+1} \sum_{r=0}^{R} \E \left[ \|\nabla f(\bvx_r)\| \right] &\leq 3 \epsilon,
\end{align*}
where the last inequality comes from our choice of $R \geq \frac{4 \Delta}{\epsilon^2 \eta I}$.
\end{proof}

\section{Deferred Proofs of Section \ref{sec:main_proof}}\label{sec:claim_proofs}
\subsection{Proof of Claim \ref{claim:inner_prod_clip}}
\begin{proof}
Starting from Lemma \ref{lemma:clip_inprod} with $\vu = \nabla f(\bvx_r)$ and $\vv = \vg_t^i$,
we have
\begin{equation} \label{eq:clip_inprod_1}
    -\frac{\langle \nabla f(\bvx_r), \vg_t^i \rangle}{\|\vg_t^i\|} \leq -\mu \|\nabla f(\bvx_r)\| - (1-\mu) \|\vg_t^i\| + (1+\mu) \|\vg_t^i - \nabla f(\bvx_r)\|.
\end{equation}
Under $\bar{\gA}_r = \{\|\mG_r\| > \frac{\gamma}{\eta}\}$, note that $\vg_t^i=\nabla F_i(\vx_t^i;\xi_t^i)-\mG_r^i+\mG_r$, and we have
\begin{align*}
    \|\vg_t^i\| &\geq \|\mG_r\| - \|\nabla F_i(\vx_t^i, \xi_t^i) - \mG_r^i\| \\
    &\geq \frac{\gamma}{\eta} - \|\nabla F_i(\vx_t^i, \xi_t^i) - \nabla f_i(\vx_t^i)\| - \|\nabla f_i(\vx_t^i) - \nabla f_i(\bvx_r)\| - \|\nabla f_i(\bvx_r) - \mG_r^i\| \\
    &\geq \frac{\gamma}{\eta} - 2\sigma - \|\nabla f_i(\vx_t^i) - \nabla f_i(\bvx_r)\|
\end{align*}
and
\begin{align*}
    \|\vg_t^i - \nabla f(\bvx_r)\| &\leq \|\nabla F_i(\vx_t^i, \xi_t^i) - \nabla f_i(\vx_t^i)\| + \|\nabla f_i(\vx_t^i) - \nabla f_i(\bvx_r)\|\\
    &\quad + \|\nabla f_i(\bvx_r) - \mG_r^i\| + \|\mG_r - \nabla f(\bvx_r)\| \\
    &\leq 3\sigma + \|\nabla f_i(\vx_t^i) - \nabla f_i(\bvx_r)\|.
\end{align*}
Plugging these two inequalities into \eqref{eq:clip_inprod_1} yields
\begin{equation*}
    -\frac{\langle \nabla f(\bvx_r), \vg_t^i \rangle}{\|\vg_t^i\|} \leq -\mu \|\nabla f(\bvx_r)\| - (1-\mu) \frac{\gamma}{\eta} + (5+\mu) \sigma + 2 \|\nabla f_i(\vx_t^i) - \nabla f_i(\bvx_r)\|.
\end{equation*}
Under $\bar{\gA}_r$, we know $\|\vx_t^i - \bvx_r\| \leq \gamma I$, and $\gamma I \leq
\frac{C}{L_1}$ by assumption. Therefore we can apply Lemma \ref{lemma:smooth_grad_diff}
to obtain
\begin{equation*}
    \|\nabla f_i(\vx_t^i) - \nabla f_i(\bvx_r)\| \leq (AL_0 + BL_1 \|\nabla f_i(\bvx_r)\|) \|\vx_t^i - \bvx_r\| \leq \gamma I (AL_0 + BL_1 \|\nabla f_i(\bvx_r)\|).
\end{equation*}
This implies that
\begin{equation*}
    -\frac{\langle \nabla f(\bvx_r), \vg_t^i \rangle}{\|\vg_t^i\|} \leq -\mu \|\nabla f(\bvx_r)\| - (1-\mu) \frac{\gamma}{\eta} + (5+\mu) \sigma + 2 AL_0 \gamma I + 2 BL_1 \gamma I \|\nabla f_i(\bvx_r)\|.
\end{equation*}
Combining this with the choice $\mu = 2/5$, we have the final bound:
\begin{align*}
    &-\gamma \E_r \left[ \frac{1}{N} \sum_{i=1}^N \sum_{t \in \gI_r} \indicator{\bar{\gA}_r} \langle \nabla f(\bvx_r), \frac{\vg_t^i}{\|\vg_t^i\|} \rangle \right] \\
    &\quad\quad \leq \frac{1}{N} \sum_{i=1}^N (1-p_r) \left( -\frac{2}{5} \gamma I \|\nabla f(\bvx_r)\| - \frac{3\gamma^2 I}{5\eta} + 6 \gamma I \sigma + 2 AL_0 \gamma^2 I^2 + 2 BL_1 \gamma^2 I^2 \|\nabla f_i(\bvx_r)\| \right) \\
    &\quad\quad \leq (1-p_r) \left( \left(-\frac{2}{5} \gamma I + 2 BL_1 \rho \gamma^2 I^2 \right) \|\nabla f(\bvx_r)\| - \frac{3\gamma^2 I}{5\eta} + 2 \gamma^2 I^2 (AL_0 + BL_1 \kappa) + 6 \gamma I \sigma \right)
\end{align*}
where we used the heterogeneity assumption $\|\nabla f_i(\bvx_r)\| \leq \kappa + \rho
\|\nabla f(\bvx_r)\|$.
\end{proof}

\subsection{Proof of Claim \ref{claim:inner_prod_no_clip}}
\begin{proof}
Recall the event $\gA_r = \{\|\mG_r\| \leq \gamma / \eta\}$, we have
\begin{align}
    &I \E_r \left[ \frac{1}{N} \sum_{i=1}^N \sum_{t \in \gI_r} \indicator{\gA_r} \langle \nabla f(\bvx_r), \vg_t^i \rangle \right] = \E_r\LRm{\indicator{\gA_r}\left\langle I \nabla f(\bvx_r), \sum_{t \in \gI_r} \frac{1}{N} \sum_{i=1}^N\vg_t^i \right\rangle} \nonumber\\
    &\quad\Eqmark{i}{=} \E_r\LRm{\indicator{\gA_r}\left\langle I \nabla f(\bvx_r), \sum_{t \in \gI_r} \frac{1}{N} \sum_{i=1}^N \nabla F_i(\vx_t^i;\xi_t^i) \right\rangle} \nonumber \\
    &\quad\Eqmark{ii}= \E_r\LRm{\indicator{\gA_r}\left\langle I \nabla f(\bvx_r), \sum_{t \in \gI_r} \frac{1}{N} \sum_{i=1}^N \nabla f_i(\vx_t^i) \right\rangle} \nonumber \\
    &\quad\Eqmark{iii}{=} \frac{p_r I^2}{2} \|\nabla f(\bvx_r)\|^2 + \frac{1}{2} \E_r \left[\indicator{\gA_r} \left\| \frac{1}{N} \sum_{i=1}^N \sum_{t \in \gI_r} \nabla f(\vx_t^i) \right\|^2 \right]\nonumber\\
    &\quad\quad - \frac{1}{2} \E_r \left[\indicator{\gA_r}\left\| \sum_{t \in \gI_r} \left( \frac{1}{N} \sum_{i=1}^N \nabla f_i(\vx_t^i) - \nabla f(\bvx_r) \right) \right\|^2\right]. \label{eq:claim_inner2_1}
\end{align}
The equality $(i)$ is obtained from the fact that $\frac{1}{N} \sum_{i=1}^N \vg_t^i = \frac{1}{N}
\sum_{i=1}^N \nabla F_i(\vx_t^i, \xi_t^i) - \mG_r^i + \mG_r = \frac{1}{N}
\sum_{i=1}^N \nabla F_i(\vx_t^i, \xi_t^i)$. The equality $(ii)$ holds due to the tower property such that for $t> t_r$
\begin{align*}
    \E\LRm{\indicator{\gA_r}\nabla F_i(\vx_t^i, \xi_t^i) \big| \gF_r} = \E\LRm{\indicator{\gA_r}\E\LRm{\nabla F_i(\vx_t^i, \xi_t^i) \big| \gH_t} \big| \gF_r} =\E\LRm{\indicator{\gA_r}\nabla f_i(\vx_t^i) \big| \gF_r};
\end{align*}
for $ t = t_r$
\begin{align*}
    \E\LRm{\indicator{\gA_r}\nabla F_i(\bvx_r, \xi_{t_r}^i) \big| \gF_r} = \E\LRm{\indicator{\gA_r}|\gF_r}\E\LRm{\nabla F_i(\bvx_r, \xi_{t_r}^i) \big| \gF_r} = \E\LRm{\indicator{\gA_r}\nabla f_i(\bvx_r) \big| \gF_r},
\end{align*}
which is true since $\mG_r = \frac{1}{N}\sum_{i=1}^N \nabla F_i(\bvx_r;\txi_{r}^i)$ is independent of $\nabla F_i(\bvx_r, \xi_{t_r}^i)$ given $\gF_r$, and $(iii)$ holds because 
 $2 \langle a, b \rangle = \|a\|^2 + \|b\|^2 - \|a-b\|^2$. 

Let $\Gamma = AL_0 + BL_1 \left(\kappa + \rho \left(\sigma + \frac{\gamma}{\eta} \right)\right)$. Notice that we can apply the relaxed smoothness in Lemma \ref{lemma:smooth_grad_diff} to obtain 
\begin{align*}
    &\E_r \left[\indicator{\gA_r} \|\nabla f_i(\vx_t^i) - \nabla f_i(\bvx_r)\|^2\right]\\
    &\qquad\leq \E_r \left[\indicator{\gA_r}(AL_0 + BL_1 \|\nabla f_i(\bvx_r)\|)^2 \|\vx_t^i - \bvx_r\|^2 \right] \\
    &\qquad\leq \E_r \left[\indicator{\gA_r}(AL_0 + BL_1 (\kappa + \rho \|\nabla f(\bvx_r)\|))^2 \|\vx_t^i - \bvx_r\|^2 \right] \\
    &\qquad\Eqmark{i}{\leq} \Gamma^2 \E_r \left[\indicator{\gA_r} \|\vx_t^i - \bvx_r\|^2 \right] \\
    &\qquad\Eqmark{ii}{\leq} 18p_r I^2 \eta^2 \Gamma^2 \LRs{2\twonorm{\nabla f(\bvx_r)}^2 +  7\sigma^2}.
\end{align*}
The inequality $(i)$ holds since $\twonorm{\nabla f(\bvx_r)} \leq \twonorm{\nabla f(\bvx_r) - \mG_r} + \twonorm{\mG_r} \leq \sigma + \gamma/\eta$ almost surely under the event $\gA_r$. The inequality $(ii)$ follows from the bound \eqref{eq:drift_expectation_bound_quadratic} in Lemma \ref{lemma:non_clipping_discre_expec}.
Therefore, we are guaranteed that
\begin{align}
    &\E_r \left[\indicator{\gA_r} \left\| \sum_{t \in \gI_r} \left( \frac{1}{N} \sum_{i=1}^N \nabla f_i(\vx_t^i) - \nabla f(\bvx_r) \right) \right\|^2 \right] \nonumber \\
    &\quad\quad \leq I \sum_{t \in \gI_r} \frac{1}{N} \sum_{i=1}^N \E_r \left[\indicator{\gA_r} \left\| \nabla f_i(\vx_t^i) - \nabla f(\bvx_r) \right\|^2 \right] \nonumber \\
    &\quad\quad \leq I \sum_{t \in \gI_r} \frac{1}{N} \sum_{i=1}^N 18p_r I^2 \eta^2 \Gamma^2 \LRs{2\twonorm{\nabla f(\bvx_r)}^2 + 7 \sigma^2} \nonumber \\
    &\quad\quad \leq 18 p_r I^4 \eta^2 \Gamma^2 \LRs{2\twonorm{\nabla f(\bvx_r)}^2 + 7 \sigma^2}.\label{eq:claim_inner2_2}
\end{align}
Multiplying both sides of \eqref{eq:claim_inner2_1} by $-\eta/I$ and substituting
\eqref{eq:claim_inner2_2} then yields
\begin{align*}
    &-\eta \E_r \left[ \frac{1}{N} \sum_{i=1}^N \sum_{t \in \gI_r} \indicator{\gA_r} \langle \nabla f(\bvx_r), \vg_t^i \rangle \right] \\
    &\leq -\frac{p_r \eta I}{2} \|\nabla f(\bvx_r)\|^2 - \frac{\eta}{2I} \E_r \left[\indicator{\gA_r} \left\| \frac{1}{N} \sum_{i=1}^N \sum_{t \in \gI_r} \nabla f(\vx_t^i) \right\|^2 \right] \\
    &\quad\quad+ \frac{p_r \eta}{2I} \E_r \left[ \left\| \sum_{t \in \gI_r} \left( \frac{1}{N} \sum_{i=1}^N \nabla f_i(\vx_t^i) - \nabla f(\bvx_r) \right) \right\|^2 \right] \\
    &\leq p_r \LRm{\left( -\frac{\eta I}{2} + 36 I^3 \eta^3 \Gamma^2 \right) \|\nabla f(\bvx_r)\|^2+ 126 I^3 \eta^3 \sigma^2 \Gamma^2} - \frac{\eta}{2I} \E_r \left[\indicator{\gA_r} \left\| \frac{1}{N} \sum_{i=1}^N \sum_{t \in \gI_r} \nabla f(\vx_t^i) \right\|^2 \right].
\end{align*}
\end{proof}

\subsection{Proof of Claim \ref{claim:quadratic}}
\begin{proof}
From the definition of $\bvx_{r+1}$, we have
\begin{align}
    &\E_r\left[\|\bvx_{r+1} - \bvx_r\|^2\right]\nonumber\\
    &\qquad\leq 2 \eta^2 \E_r \left[ \indicator{\gA_r} \left\| \frac{1}{N} \sum_{i=1}^N \sum_{t \in \gI_r} \vg_t^i \right\|^2 \right] + 2 \gamma^2 \E_r \left[ \indicator{\bar{\gA}_r} \left\| \frac{1}{N} \sum_{i=1}^N \sum_{t \in \gI_r} \frac{\vg_t^i}{\|\vg_t^i\|} \right\|^2 \right] \nonumber \\
    &\qquad\Eqmark{i}{\leq} 2\eta^2 \E_r \left[ \indicator{\gA_r}\left\| \frac{1}{N} \sum_{i=1}^N \sum_{t \in \gI_r} \nabla F_i(\vx_t^i, \xi_t^i) \right\|^2 \right] + 2(1-p_r) \gamma^2 I^2 \nonumber \\
    &\qquad\leq 4\eta^2 \E_r \left[\indicator{\gA_r}\left\| \frac{1}{N} \sum_{i=1}^N \sum_{t \in \gI_r} \nabla f_i(\vx_t^i) \right\|^2 \right] \nonumber \\
    &\qquad\quad + 4p_r \eta^2 \E_r \left[ \indicator{\gA_r}\left\| \frac{1}{N} \sum_{i=1}^N \sum_{t \in \gI_r} \nabla F_i(\vx_t^i;\xi_t^i) - \nabla f_i(\vx_t^i) \right\|^2 \right]+ 2 (1-p_r) \gamma^2 I^2 \nonumber \\
    &\qquad\Eqmark{ii}{\leq} 4 \eta^2 \E_r \left[\indicator{\gA_r} \left\| \frac{1}{N} \sum_{i=1}^N \sum_{t \in \gI_r} \nabla f_i(\vx_t^i) \right\|^2  \right] \nonumber \\
    &\qquad\quad + 4 \eta^2 \frac{1}{N^2} \sum_{i=1}^N \E_r \left[\indicator{\gA_r}\left\| \sum_{t \in \gI_r} \nabla F_i(\vx_t^i;\xi_t^i) - \nabla f_i(\vx_t^i) \right\|^2  \right] + 2 (1-p_r) \gamma^2 I^2, \label{eq:claim_quad_1}
\end{align}
where $(i)$ is obtained by noticing that $\frac{1}{N} \sum_{i=1}^N \vg_t^i = \frac{1}{N}
\sum_{i=1}^N \nabla F_i(\vx_t^i, \xi_t^i)$, and $(ii)$ holds by the fact that each client's
stochastic gradients $\nabla F_i(\vx_t^i, \xi_t^i)$ are sampled independently from one
another. Similarly, let $s \in \gI_r$ with $s > t$.,
we can see that
\begin{align*}
    &\E_r \left[\indicator{\gA_r}\langle \nabla F_i(\vx_t^i;\xi_t^i) - \nabla f_i(\vx_t^i), \nabla F_i(\vx_s^i;\xi_s^i) - \nabla f_i(\vx_s^i) \rangle \right] \\
    &\quad = \E_r \left[\indicator{\gA_r} \E_r \left[ \langle \nabla F_i(\vx_t^i;\xi_t^i) - \nabla f_i(\vx_t^i), \nabla F_i(\vx_s^i;\xi_s^i) - \nabla f_i(\vx_s^i) \rangle \bigg| \gH_s \right] \right] \\
    &\quad = \E_r \left[ \indicator{\gA_r}\langle \nabla F_i(\vx_t^i;\xi_t^i) - \nabla f_i(\vx_t^i), \E_r \left[ \nabla F_i(\vx_s^i;\xi_s^i) \bigg| \gH_s \right] - \nabla f_i(\vx_s^i) \rangle  \right] \\
    &\quad = 0.
\end{align*}
Therefore, we have
\begin{align}
    &\frac{1}{N^2} \sum_{i=1}^N \E_r \left[\indicator{\gA_r} \left\| \sum_{t \in \gI_r} \nabla F_i(\vx_t^i;\xi_t^i) - \nabla f_i(\vx_t^i) \right\|^2  \right]\nonumber\\
    &\quad = \frac{1}{N^2} \sum_{i=1}^N \sum_{t \in \gI_r} \E_r \left[\indicator{\gA_r} \left\| \nabla F_i(\vx_t^i;\xi_t^i) - \nabla f_i(\vx_t^i) \right\|^2 \right] \nonumber \\
    &\quad = \frac{1}{N^2} \sum_{i=1}^N \sum_{t \in \gI_r} \E_r \left[\indicator{\gA_r} \E_r\LRm{\left\| \nabla F_i(\vx_t^i;\xi_t^i) - \nabla f_i(\vx_t^i) \right\|^2\big| \gH_t} \right]\nonumber\\
    &\quad\leq \frac{p_r I\sigma^2}{N}. \label{eq:claim_quad_2}
\end{align}
And the desired result is obtained by plugging \eqref{eq:claim_quad_2} into
\eqref{eq:claim_quad_1}.
\end{proof}

\section{Additional Experimental Results} \label{appen:experiments}
\subsection{Proof of Proposition \ref{proposition:H_kappa}}\label{proof:proposition:H_kappa}
\begin{proof}
Recall the definition of $f_1(x)$ and $f_2(x)$,
\begin{align*}
    f_1(x) = x^4 - 3x^3 + Hx^2 + x, \quad f_2(x) = x^4 - 3x^3 - 2Hx^2 + x,
\end{align*}
which means
\begin{align*}
    \nabla f(x) = 4x^3 - 9x^2 - H x+1.
\end{align*}
and
\begin{align*}
    \nabla f_1(x) - \nabla f(x) = 3H x,\quad \nabla f_2(x) - \nabla f(x) = -3H x,
\end{align*}
It follows that
\begin{align}
    \twonorm{\nabla f_i(x)} &\leq \twonorm{\nabla f_i(x) - \nabla f(x)} + \twonorm{\nabla f(x)}\nonumber\\
    &\leq 3H |x| + \twonorm{\nabla f(x)}\nonumber\\
    &\leq 3H |x| - \LRabs{4x^3 - 9x^2 - H x+1} + 2\twonorm{\nabla f(x)}\nonumber\\
    &\leq 4H|x| - \LRabs{4x^3 - 9x^2 + 1} + 2\twonorm{\nabla f(x)}\nonumber\\
    &\leq 10H|x| - \LRabs{4x^3 - 9x^2} + 1 + 2\twonorm{\nabla f(x)}.
    \label{eq:synthetic_H_bound}
\end{align}
Let $g(x) = 10H|x| - \LRabs{4x^3 - 9x^2}$, next we will characterize $g(x)$ in different region.

(i) When $x \in (-\infty, 0)$, $g(x) = 4x^3 - 9x^2 - 10Hx$. The root for the derivative of $g(x)$ in this region is
\begin{align*}
    12x^2 - 18x - 10H = 0 \Longrightarrow x = x_1:= \frac{18-\sqrt{18^2 + 480H}}{24}.
\end{align*}
It follows that
\begin{align}\label{eq:gx_maximum_1}
    g(x) &\leq 4x_1^3 - 9x_1^2-10H x_1\nonumber\\
    &\leq 10H \LRs{\frac{\sqrt{18^2 + 480H}-18}{24}}\nonumber\\
    &\leq 10H \LRs{\frac{20H}{24}} \leq \frac{25H^2}{3}.
\end{align}
where the last inequality follows from $x_1 \leq 0$.

(ii) When $x \in (0,\frac{9}{4})$, $g(x) = 4x^3 - 9x^2 + 10Hx$. The derivative of $g(x)$ is greater than 0 in this case since $18^2 - 480 H \leq 0$ for $H \geq 1$. Then we have
\begin{align}\label{eq:gx_maximum_2}
    g(x) \leq 10H \cdot \frac{9}{4} = \frac{45H}{2}.
\end{align}

(iii) When $x \in (\frac{9}{4}, +\infty)$, $g(x) = -4x^3 + 9x^2 + 10H x$. The root for the derivative of $g(x)$ is 
\begin{align*}
    -12x^2 + 18x + 10H = 0 \Longrightarrow x = x_2 := \frac{-18+\sqrt{18^2 + 480H}}{24}.
\end{align*}
Then we have
\begin{align}\label{eq:gx_maximum_3}
    g(x) &\leq \max\LRl{-4x_2^3 + 9x_2^2 + 10 H x_2, -4\LRs{\frac{9}{4}}^3 + 9\LRs{\frac{9}{4}}^2 + \frac{45H}{2}}\nonumber\\
    &\leq 9x_2^2 + 10 H x_2 + 9\LRs{\frac{9}{4}}^2 + \frac{45H}{2}.
\end{align}
Combining \eqref{eq:gx_maximum_1}, \eqref{eq:gx_maximum_2} and \eqref{eq:gx_maximum_3}, we are guaranteed that
\begin{align*}
    g(x)+1 &\leq 9\LRs{\frac{-18+\sqrt{18^2 + 480H}}{24}}^2 + 10 H\LRs{\frac{-18+\sqrt{18^2 + 480H}}{24}}\\
    &\qquad + \frac{25H^2}{3} + 45H + 100\\
    &:= \kappa(H).
\end{align*}
Substituting this bound into \eqref{eq:synthetic_H_bound}, we get
\begin{align*}
    \twonorm{\nabla f_i(x)} &\leq 2\twonorm{\nabla f(x)} + g(x) + 1\\
    &\leq 2\twonorm{\nabla f(x)} + \kappa(H).
\end{align*}
And $\kappa(H) < \infty$ is an increasing function of $H$.
\end{proof}

\subsection{Synthetic task}\label{appen:synthetic}
For two algorithms, we inject uniform noise over $[-1, 1]$ into the gradient at each step, and tune $\gamma / \eta \in \{5, 10, 15\}$ and tune $\eta \in \{0.1, 0.01, 0.001\}$. We run each algorithm for 500 communication rounds and the length of each communication round is $I = 8$. The results are showed in Figure~\ref{fig:synthetic_trajectory}.

\begin{figure}[h]
    \centering
    \subfigure[$H=1$]{\includegraphics[width=0.3\linewidth]{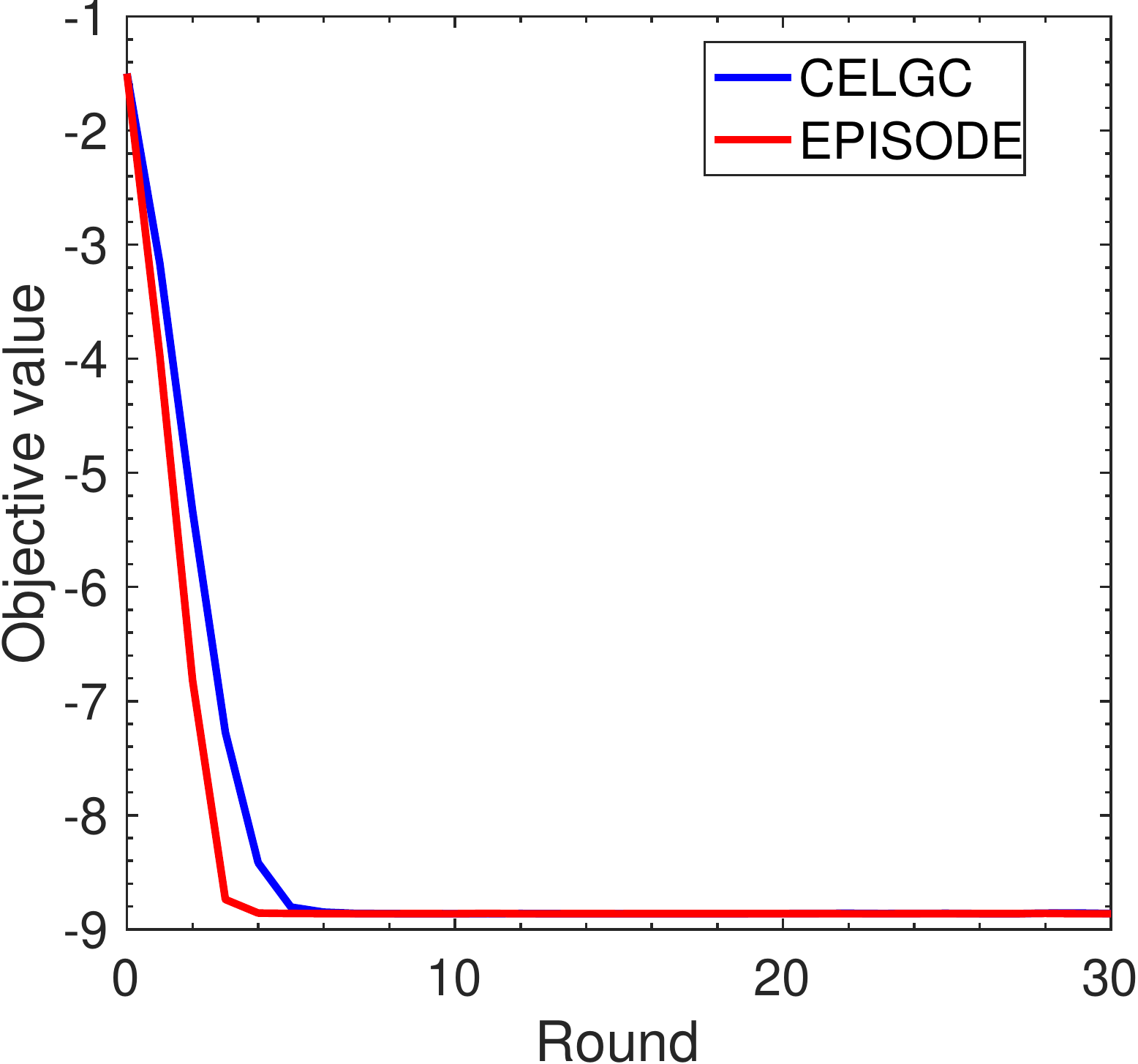}
    \includegraphics[width=0.3\linewidth]{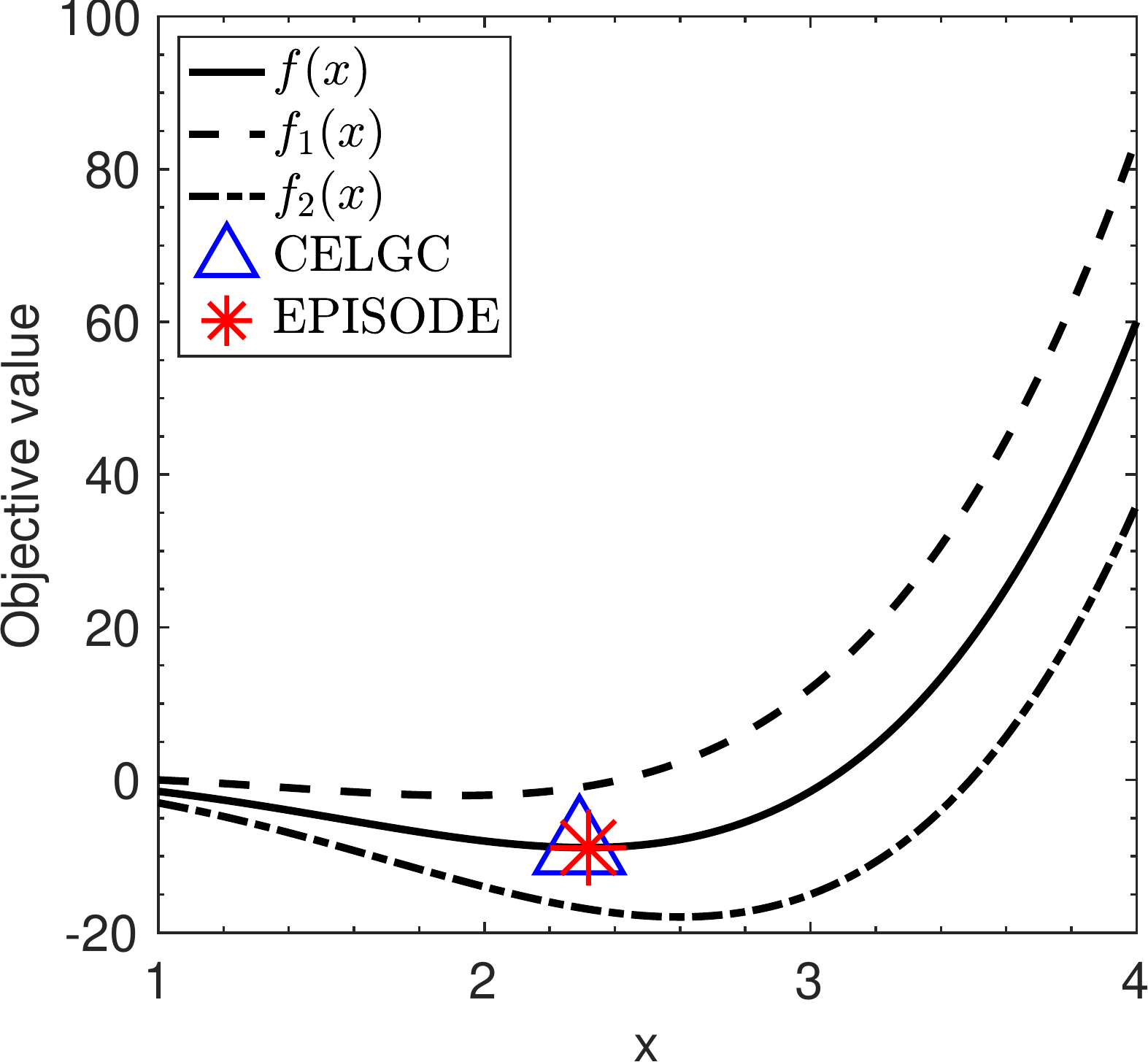}
    }
    \subfigure[$H=2$]{\includegraphics[width=0.3\linewidth]{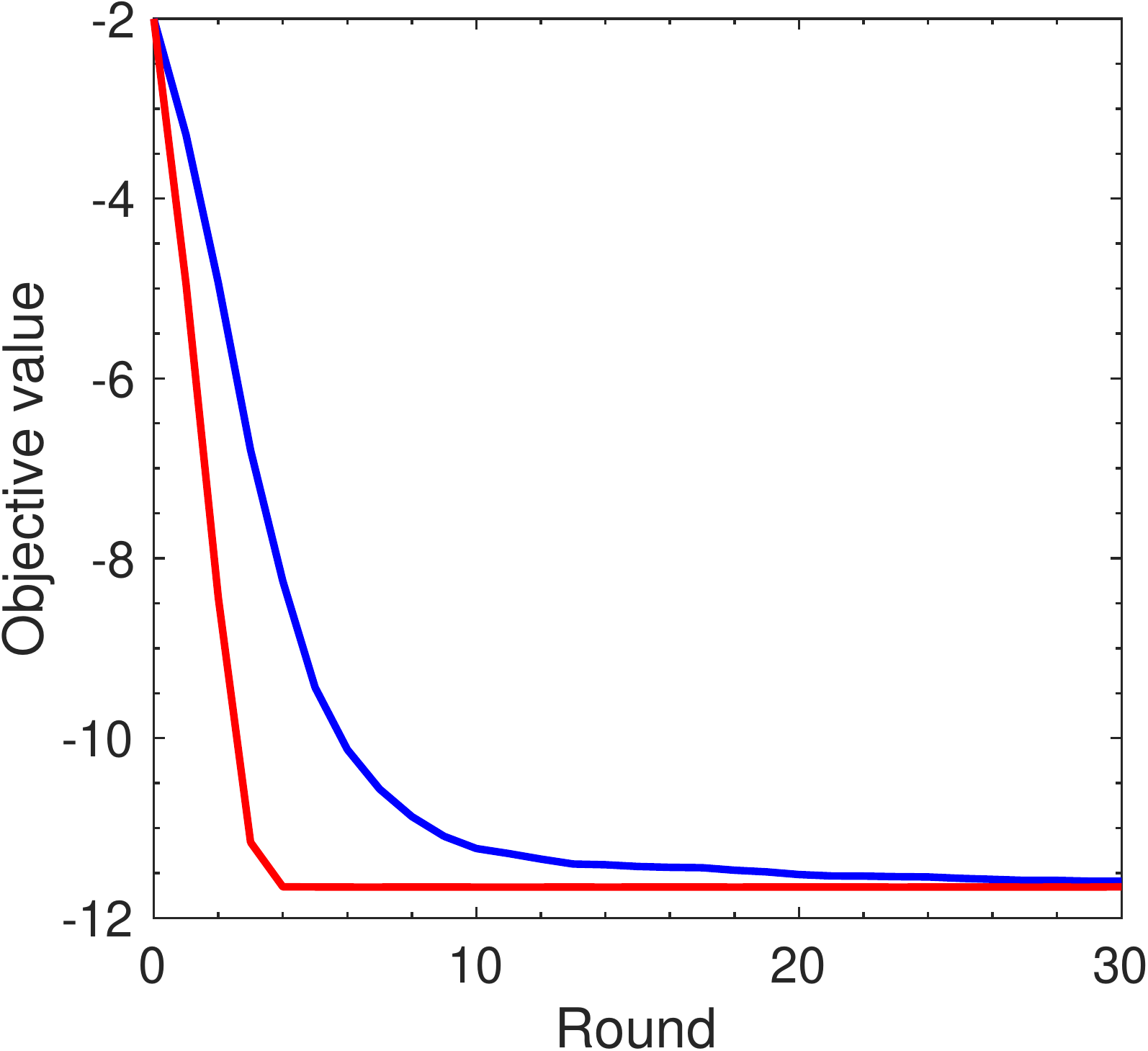}
    \includegraphics[width=0.3\linewidth]{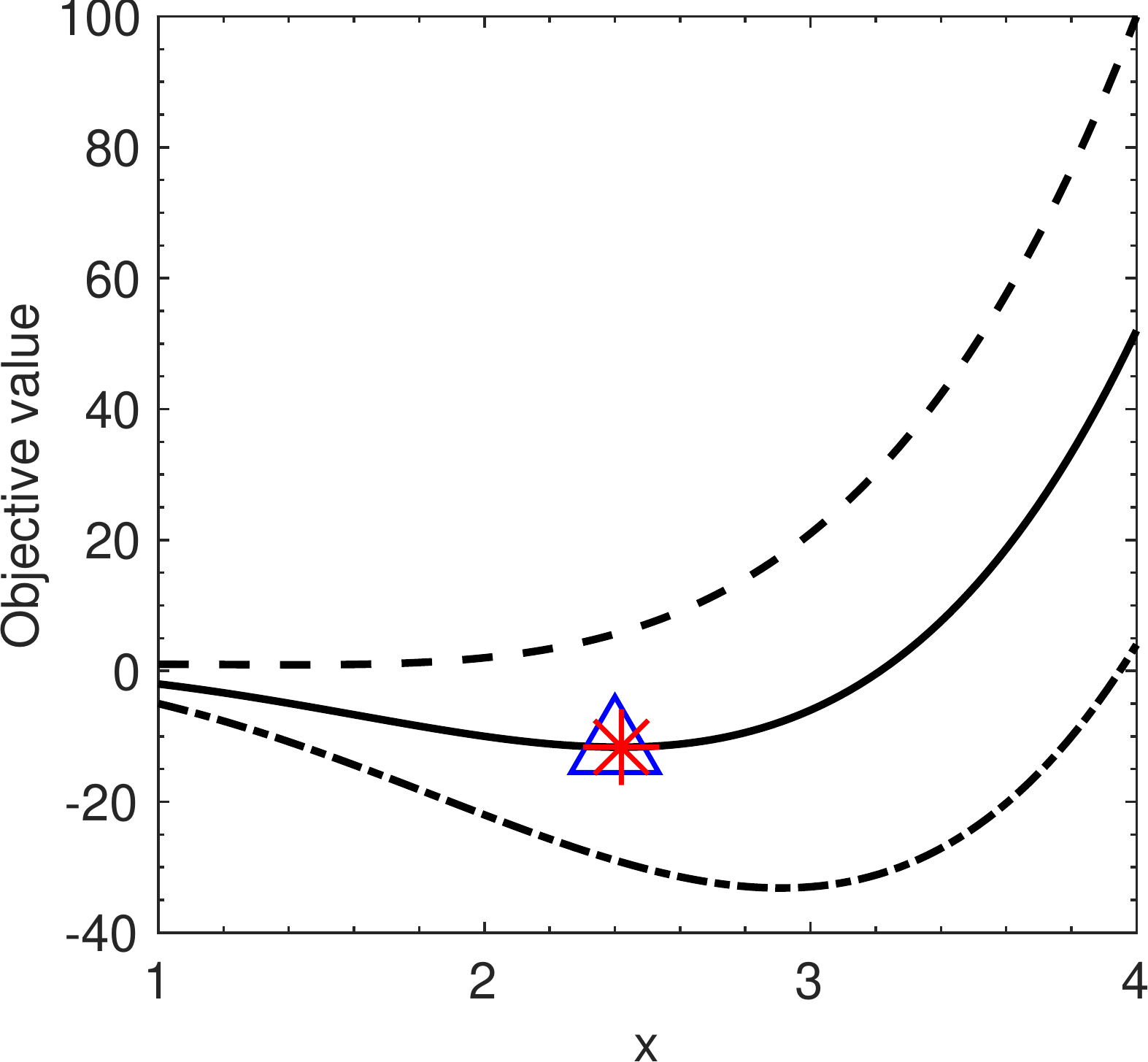}
    }
    \subfigure[$H=4$]{\includegraphics[width=0.3\linewidth]{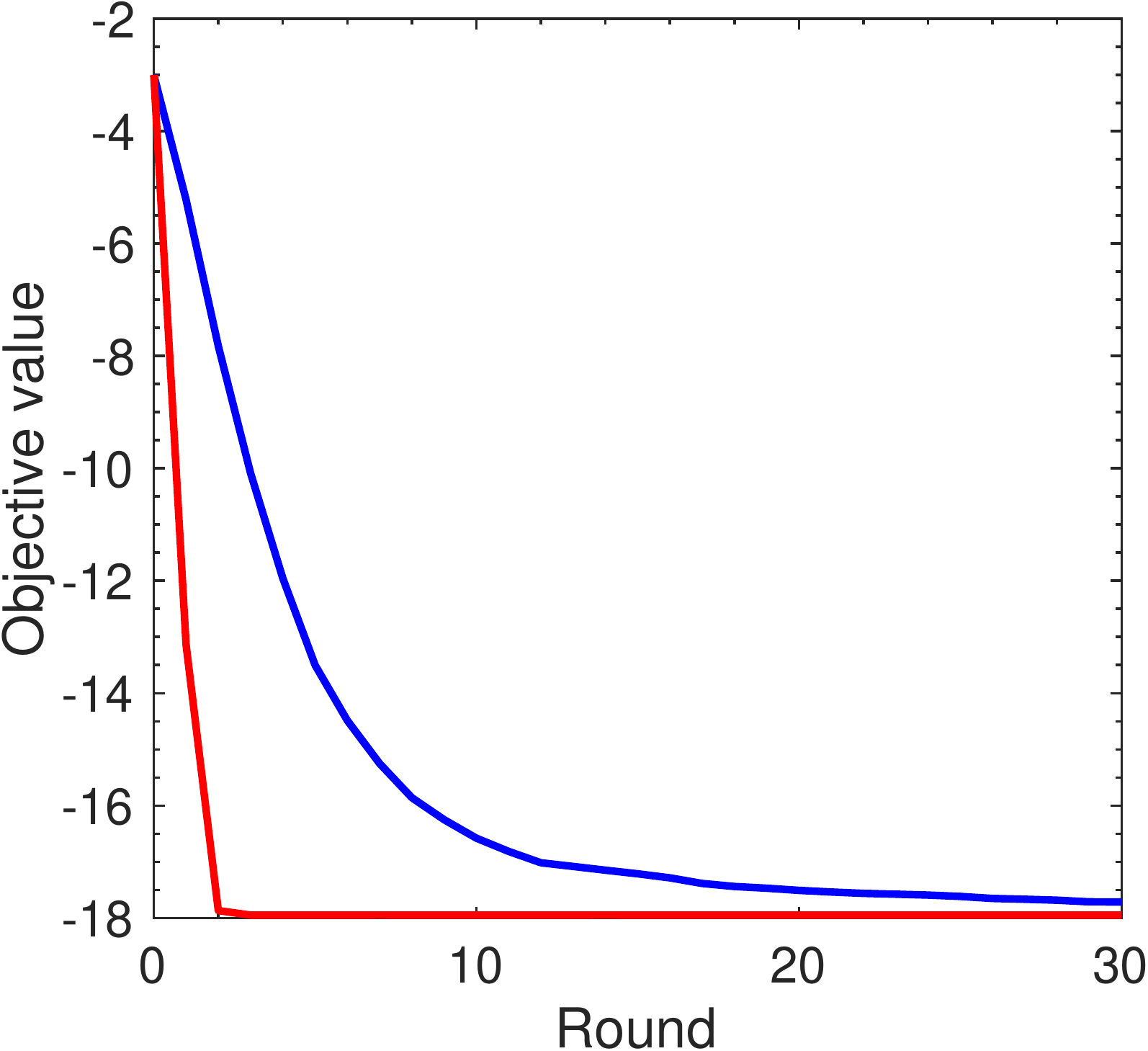}
    \includegraphics[width=0.3\linewidth]{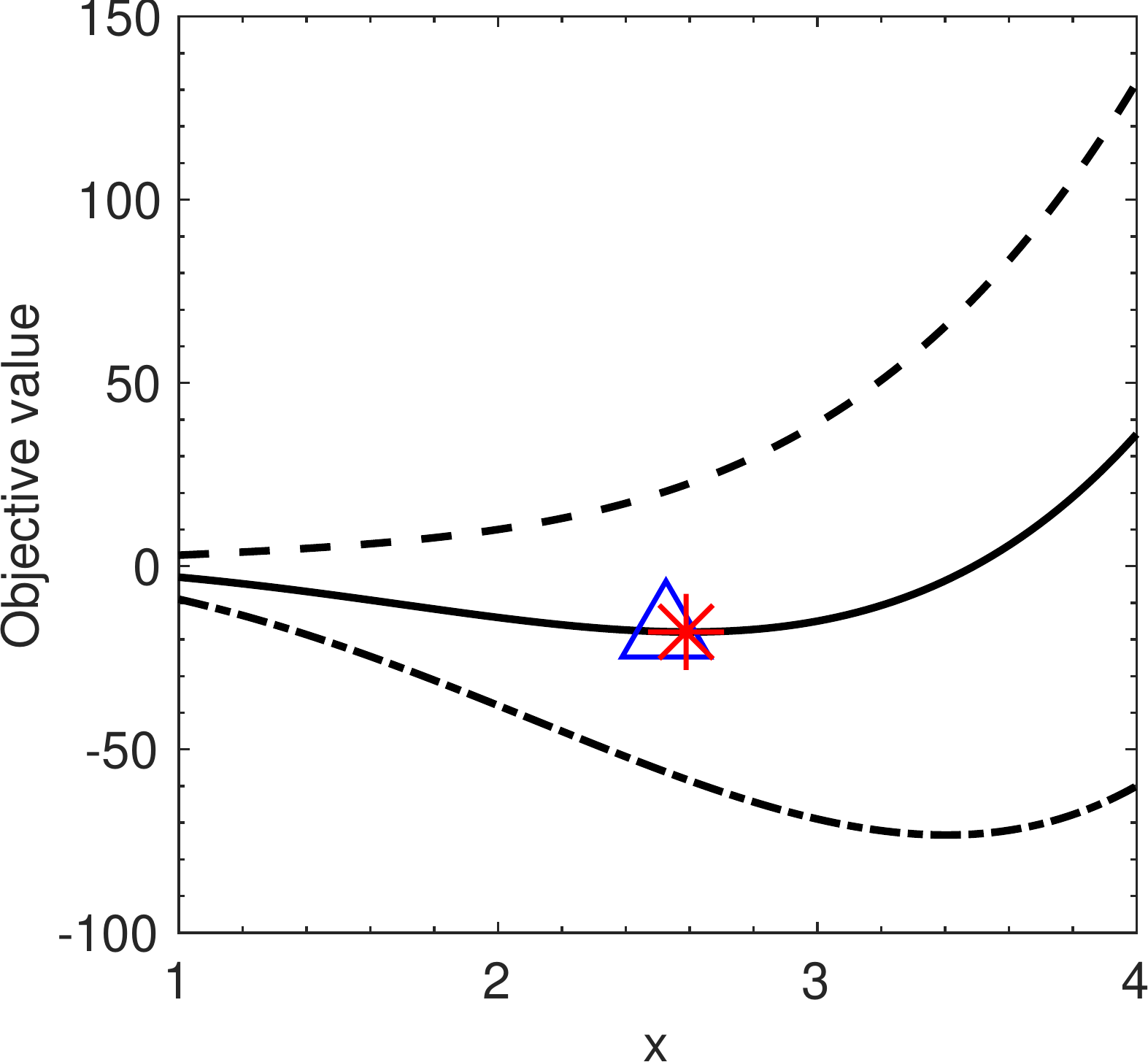}
    }
    \subfigure[$H=8$]{\includegraphics[width=0.3\linewidth]{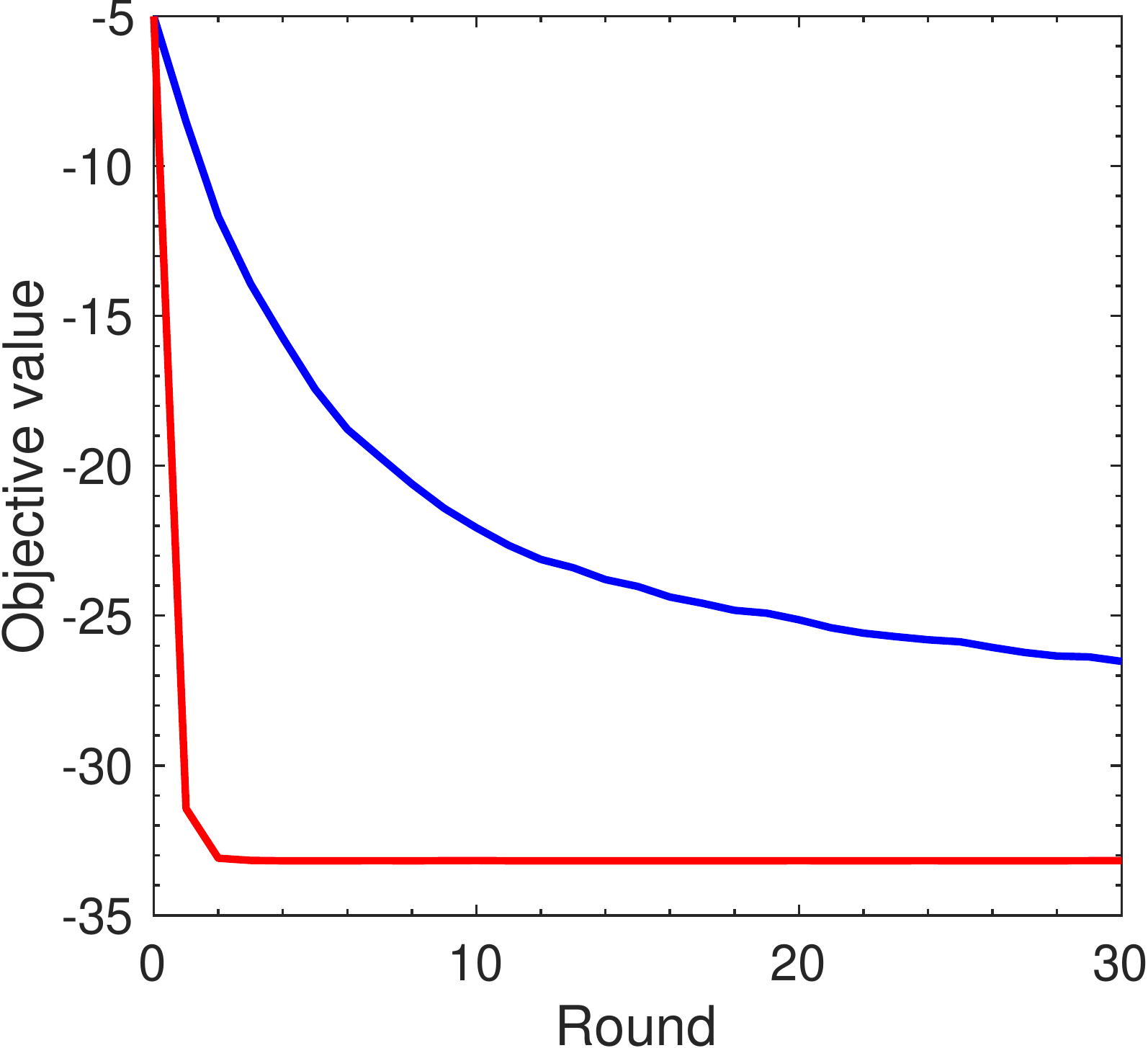}
    \includegraphics[width=0.3\linewidth]{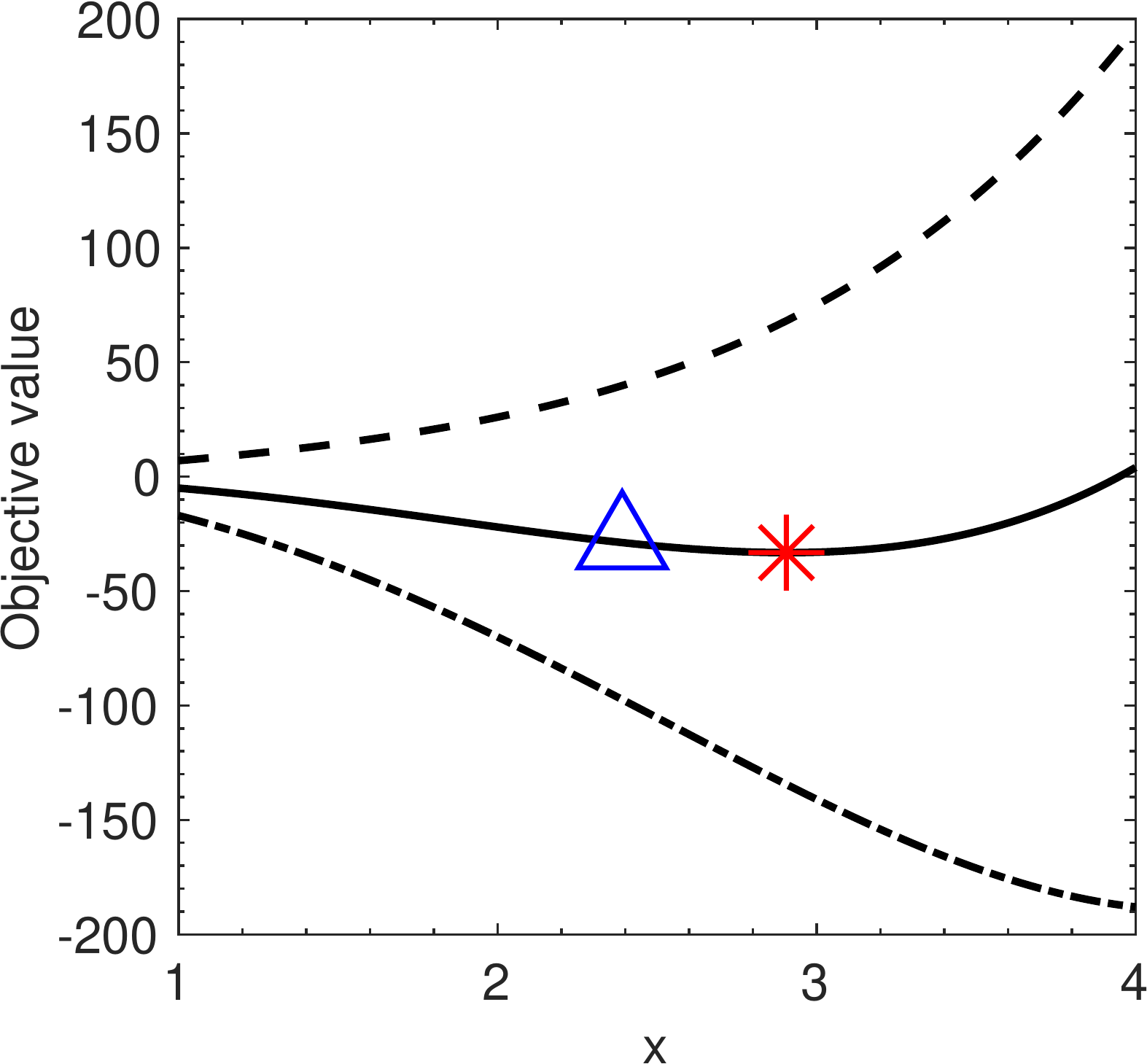}
    }
    \caption{The loss trajectories and converged solutions of CELGC and EPISODE on synthetic task.}
    \label{fig:synthetic_trajectory}
\end{figure}

\subsection{SNLI} \label{appen:SNLI}
The learning rate $\eta$ and the clipping parameter $\gamma$ are tuned with search in the following way: we vary $\gamma \in \{0.01, 0.03, 0.1\}$ and for each $\gamma$ we vary $\eta$ so that the clipping threshold $\gamma / \eta$ varies over $\{0.1, 0.333, 1.0, 3.333, 10.0\}$, leading to 15 pairs $(\eta, \gamma)$. We decay both $\eta$ and $\gamma$ by a factor of $0.5$ at epochs $15$ and $20$. We choose the best pair $(\eta, \gamma)$ according to the performance on a validation set, and the corresponding model is evaluated on a held-out test set. Note that we do not tune $(\gamma, \eta)$ separately for each algorithm. Instead, due to computational constraints, we tune the hyperparameters for the baseline CELGC under the setting $I = 4$, $s = 50\%$ and re-use the tuned values for the rest of the settings.

\begin{figure}[tb]
    \centering
    \subfigure[$I=8$, $s=50\%$]{
        \includegraphics[width=0.38\linewidth]{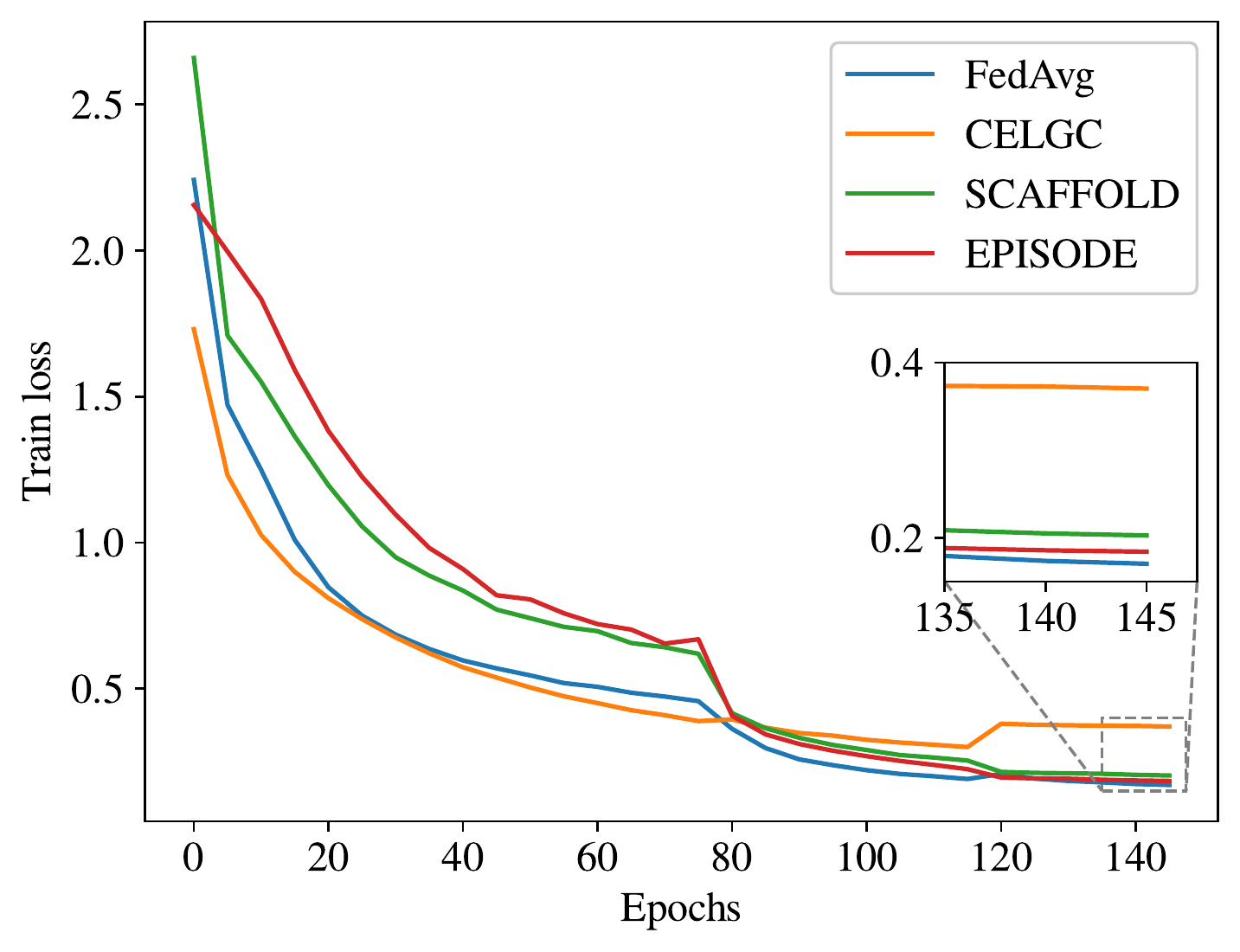}
        \includegraphics[width=0.38\linewidth]{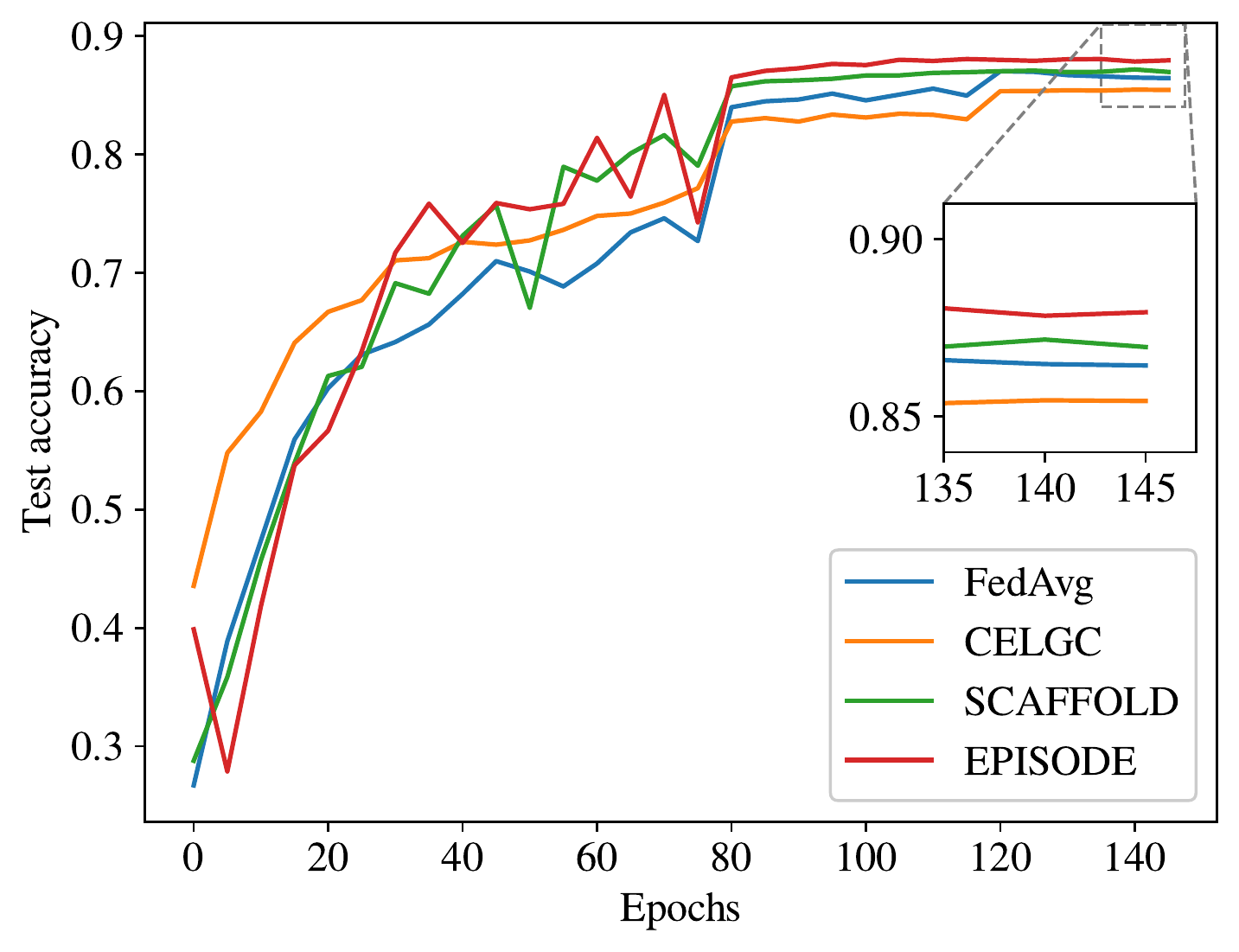}
    }
    \subfigure[$I=8$, $s=70\%$]{
        \includegraphics[width=0.38\linewidth]{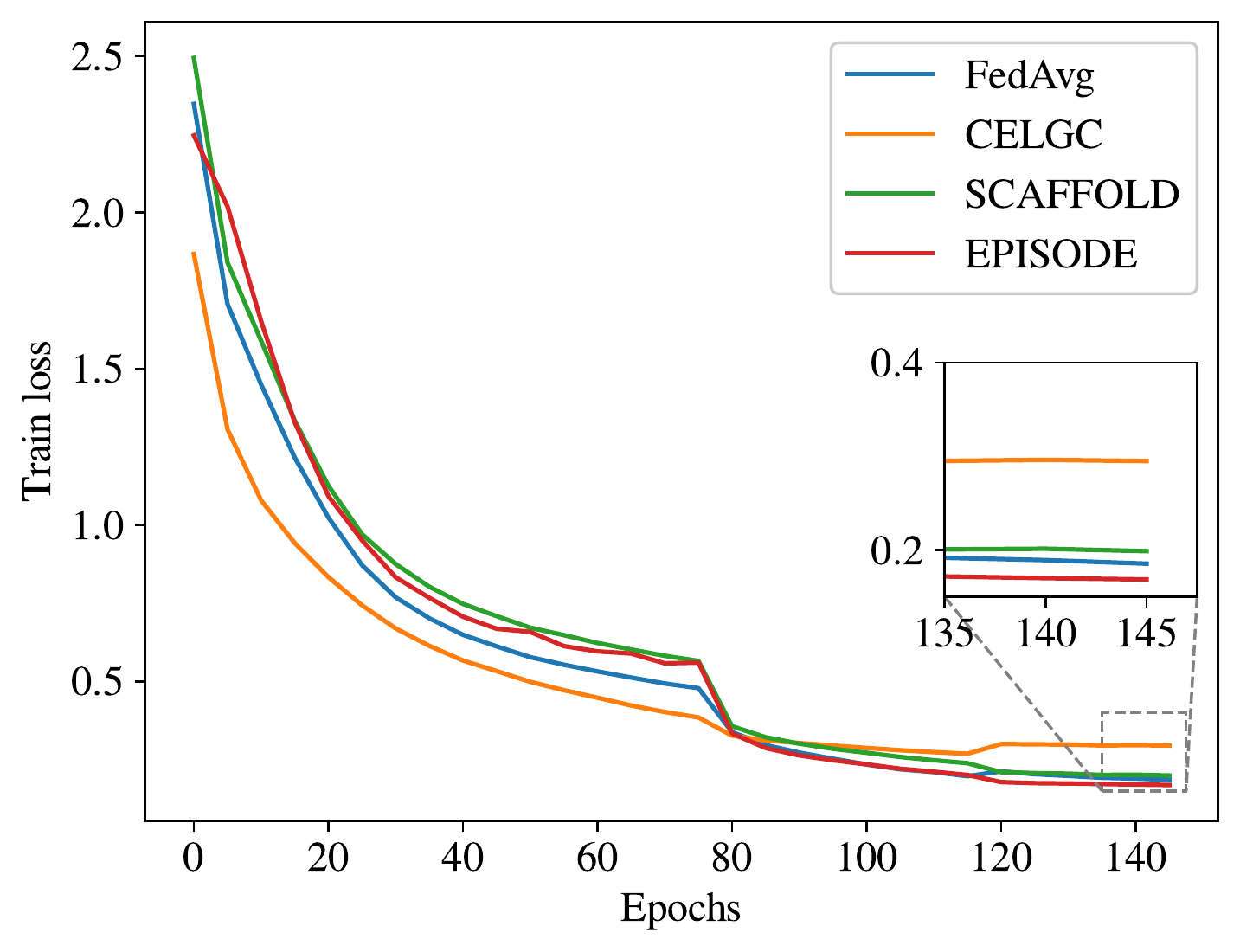}
        \includegraphics[width=0.38\linewidth]{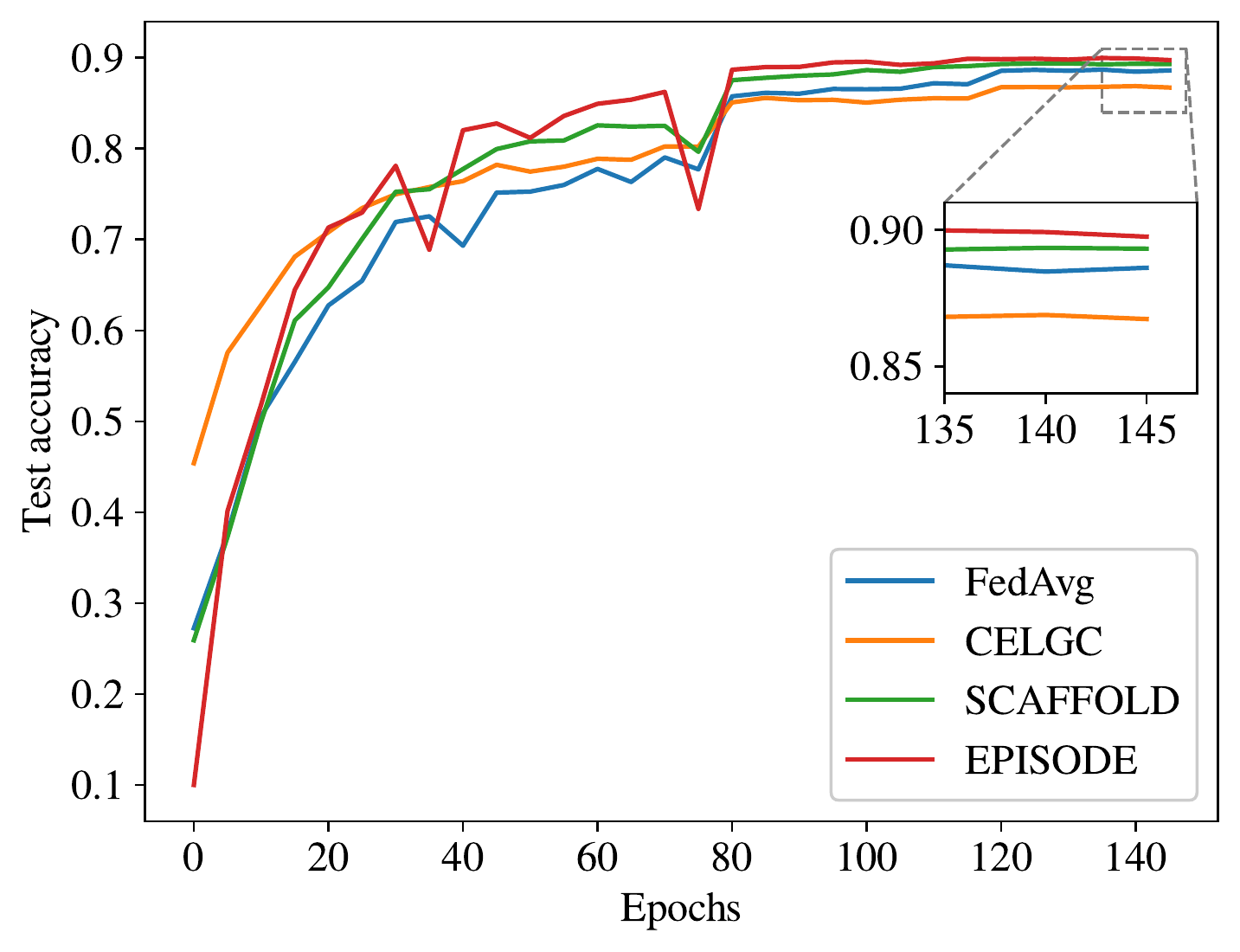}
    }
    \caption{Training curves for CIFAR-10 experiments.}
    \label{fig:cifar_learning_curves}
\end{figure}

\subsection{CIFAR-10} \label{appen:cifar}

\subsubsection{Setup}
We train a ResNet-50 \citep{he2016deep} for 150 epochs using the cross-entropy loss and a batch size of 64 for each worker. Starting from an initial learning rate $\eta_0 = 1.0$ and clipping parameter $\gamma = 0.5$, we decay the learning rate by a factor of 0.5 at epochs 80 and 120. In this setting, we decay the clipping parameter $\gamma$ with the learning rate $\eta$, so that the clipping threshold $\frac{\gamma}{\eta}$ remains constant during training. We present results for $I = 8$ and $s \in \{50\%, 70\%\}$. We include the same baselines as the experiments of the main text, comparing EPISODE to FedAvg, SCAFFOLD, and CELGC.

\subsubsection{Results}
Training loss and testing accuracy during training are shown below in Figure \ref{fig:cifar_learning_curves}. In both settings, EPISODE is superior in terms of testing accuracy and nearly the best in terms of training loss.

\subsection{ImageNet} \label{appen:imagenet}
The training curves (training and testing loss) for each ImageNet setting are shown below in Figure \ref{fig:imagenet_learning_curves}.

\begin{figure}
    \centering
    \subfigure[$I=64$, $s=70\%$]{\includegraphics[width=0.45\linewidth]{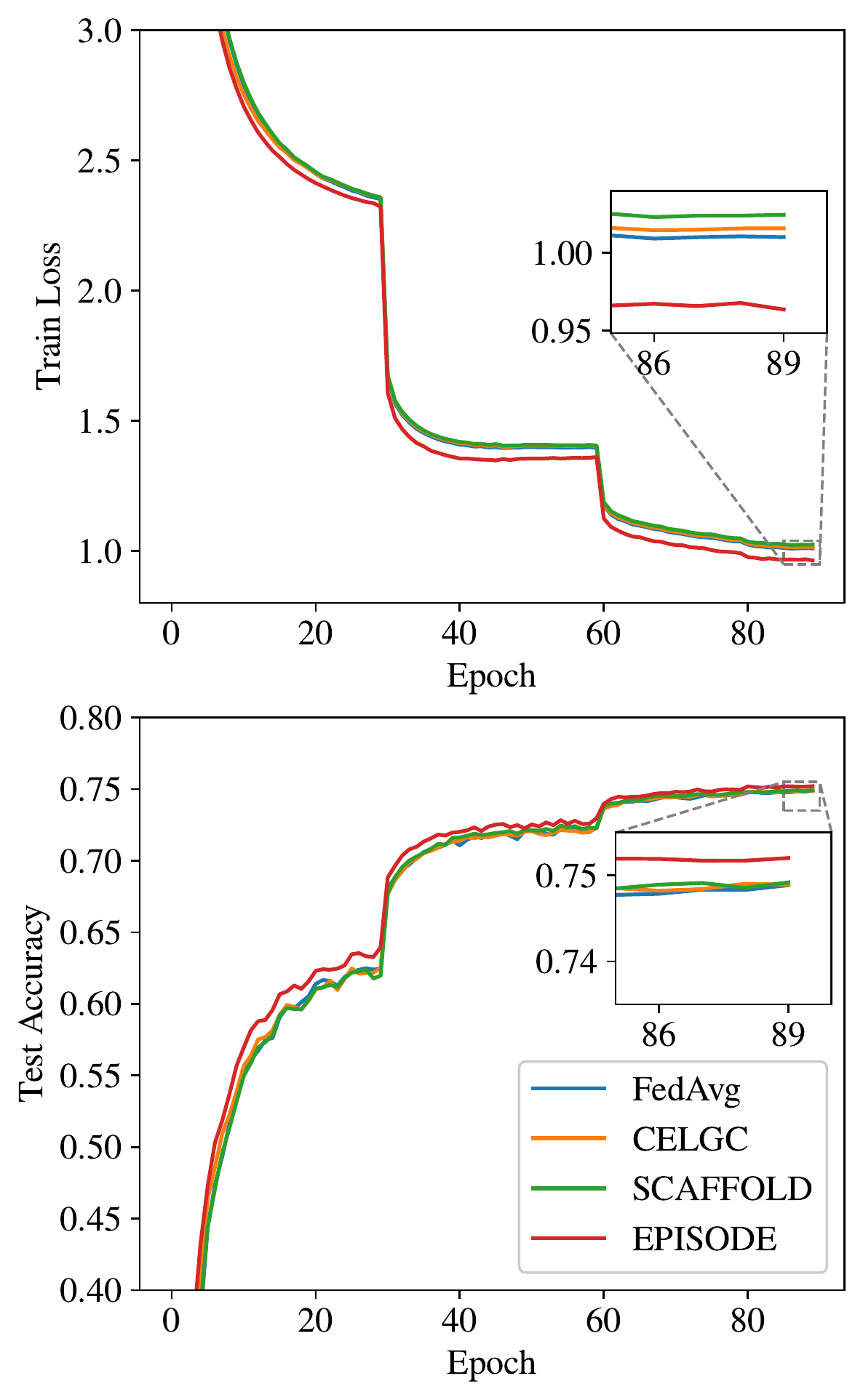}}
    \subfigure[$I=64$, $s=60\%$]{\includegraphics[width=0.45\linewidth]{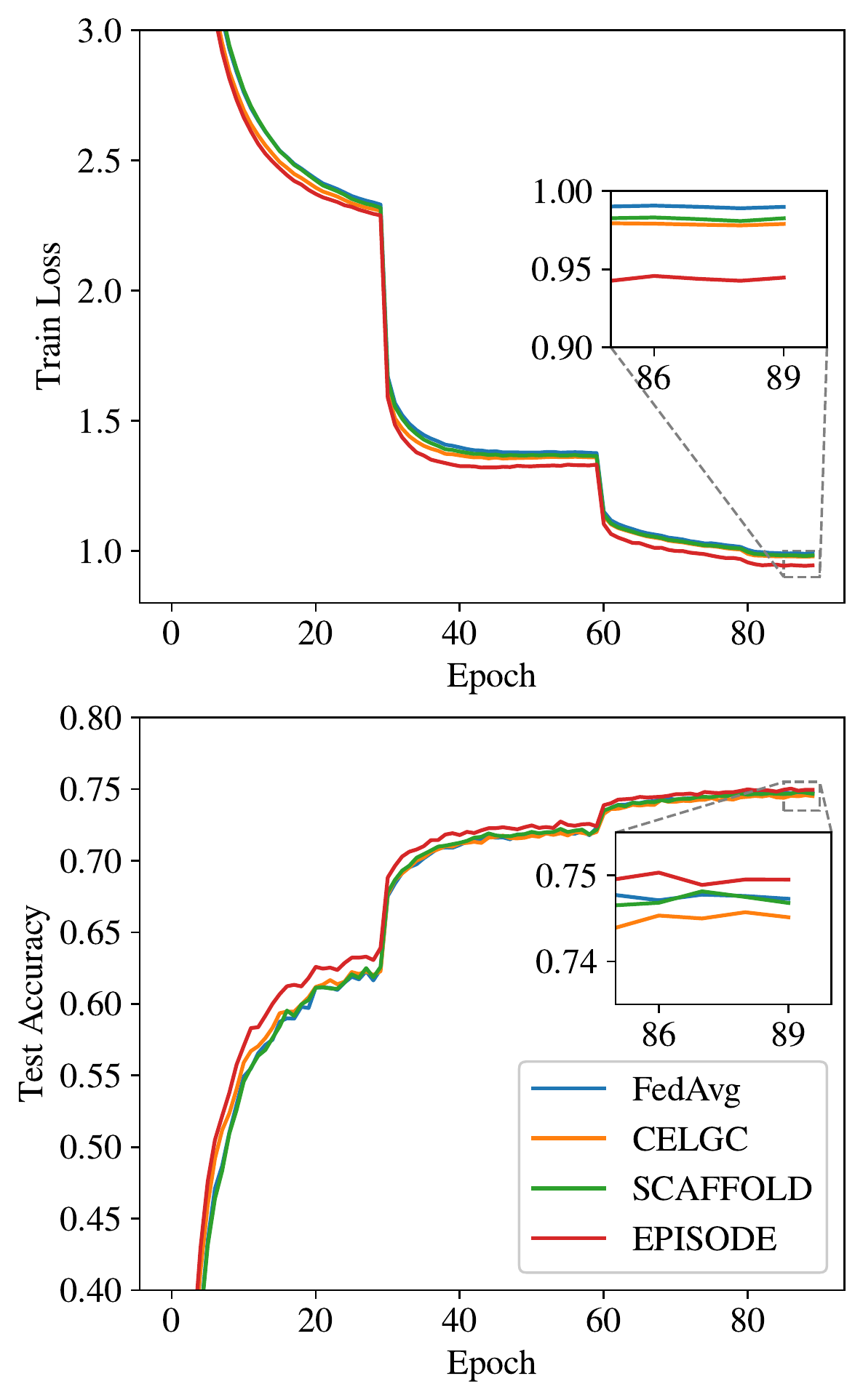}}
    \subfigure[$I=64$, $s=50\%$]{\includegraphics[width=0.45\linewidth]{plots/imagenet_64_50.pdf}}
    \subfigure[$I=128$, $s=60\%$]{\includegraphics[width=0.45\linewidth]{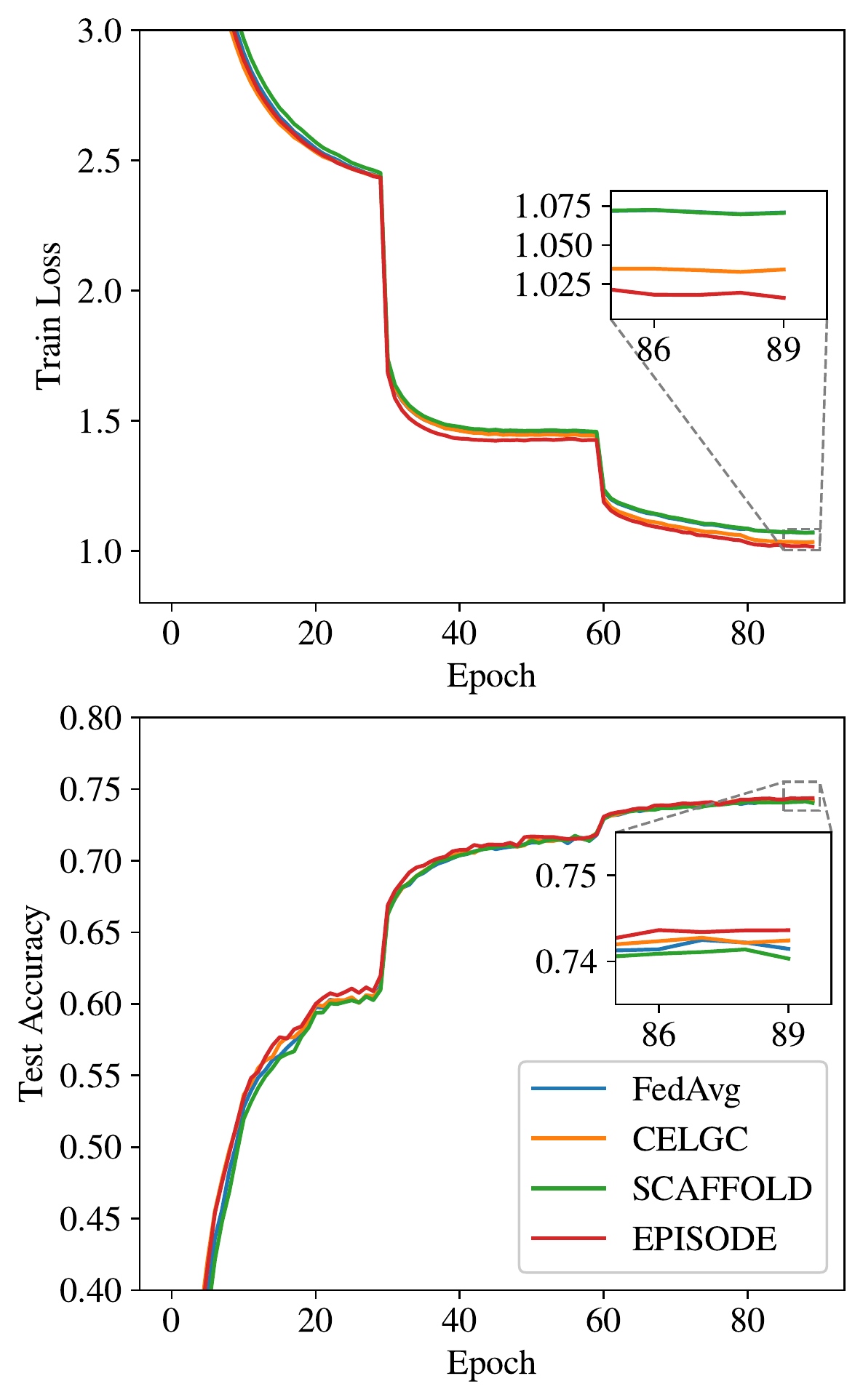}}
    \caption{Training curves for all ImageNet experiments.}
    \label{fig:imagenet_learning_curves}
\end{figure}

\begin{table}[tb]
\centering
{\begin{tabular}{@{}llllll@{}}
    \toprule
    Interval & Similarity & Algorithm & 70\% & 75\% & 80\% \\
    \midrule
    1 & 100\% & NaiveParallelClip & 37.30 & 59.69 & 118.45 \\
    \midrule
    2 & 30\% & CELGC & 33.57 & 63.98 & N/A \\
     & & EPISODE & \textbf{27.20} & \textbf{38.07} & \textbf{70.60} \\
    \midrule
    4 & 30\% & CELGC & 23.84 & 42.51 & N/A \\
     & & EPISODE & \textbf{18.34} & \textbf{25.73} & \textbf{55.15} \\
    \midrule
    8 & 30\% & CELGC & 20.37 & 34.06 & N/A \\
     & & EPISODE & \textbf{13.98} & \textbf{22.43} & \textbf{53.43} \\
    \midrule
    16 & 30\% & CELGC & \textbf{16.57} & \textbf{27.00} & N/A \\
     & & EPISODE & 21.26 & 28.39 & N/A \\
    \midrule
    4 & 50\% & CELGC & 18.52 & 31.86 & N/A \\
     & & EPISODE & \textbf{18.37} & \textbf{25.71} & \textbf{47.76} \\
    \midrule
    4 & 10\% & CELGC & 39.75 & N/A & N/A \\
     & & EPISODE & \textbf{18.46} & \textbf{29.71} & \textbf{55.92} \\
    \bottomrule
\end{tabular}}
\caption{Running time (in minutes) for each algorithm to reach test accuracy of $70\%$, $75\%$, and $80\%$ on SNLI dataset. We use N/A to denote when an algorithm did not reach the corresponding level of accuracy over the course of training.}
\label{tab:snli_runtime}
\end{table}

\begin{figure}[tb]
    \centering
    \subfigure[Effect of $I$]{
        \includegraphics[width=0.45\linewidth]{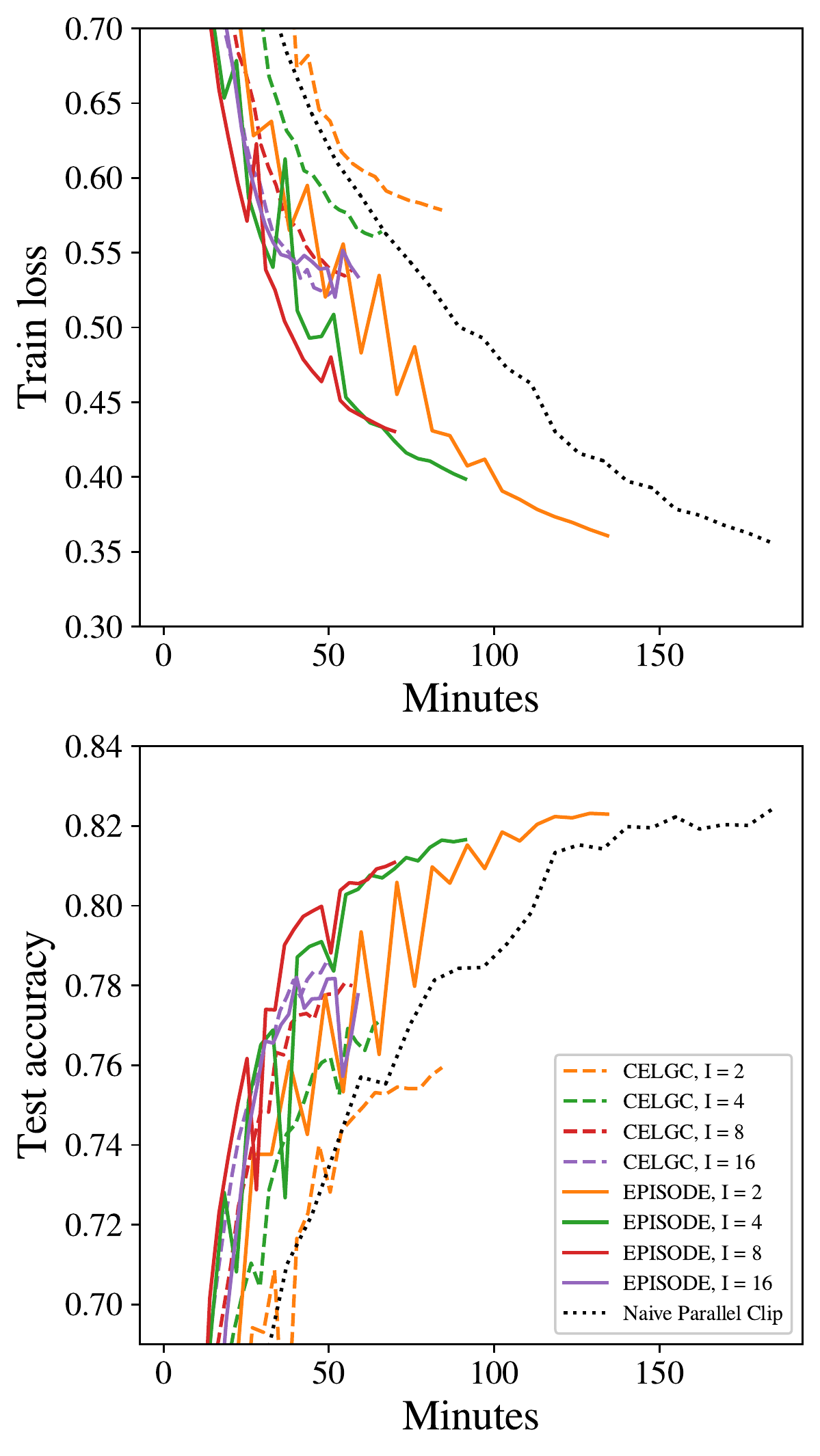}
    }
    \subfigure[Effect of $\kappa$]{
        \includegraphics[width=0.45\linewidth]{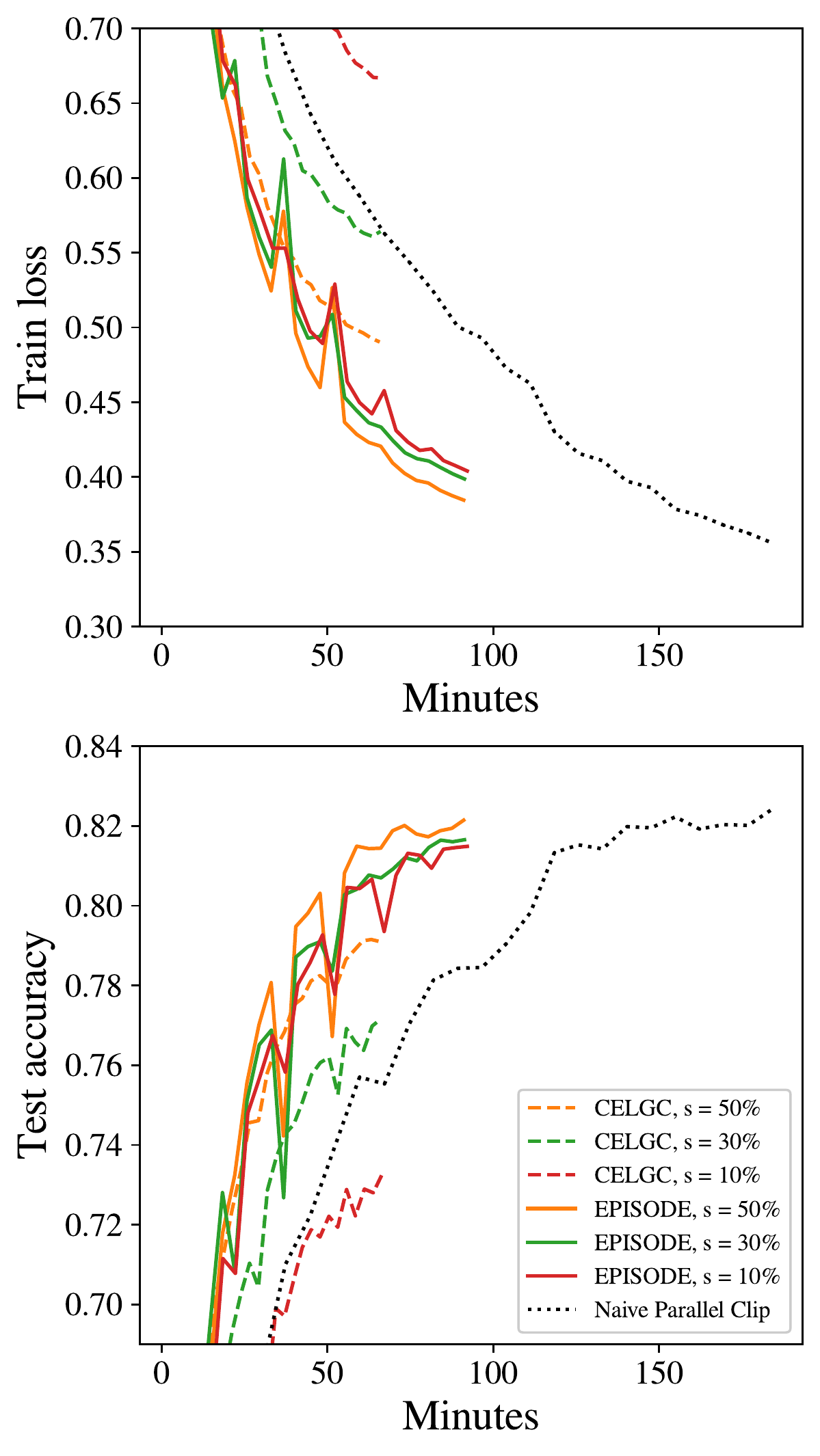}
    }
    \caption{
        Training loss and testing accuracy on SNLI against running time. \textbf{(a)} Various values of communication intervals $I \in \{2, 4, 8, 16\}$ with fixed data similarity $s = 30\%$. \textbf{(b)} Various values of data similarity $s \in \{10\%, 30\%, 50\%\}$ with fixed $I=4$.
    }\label{fig:snli_running_time}
\end{figure}

\section{Running Time Results} \label{appen:running_time}
To demonstrate the utility of EPISODE for federated learning in practical settings, we also provide a comparison of the running time of each algorithm on the SNLI dataset. Our experiments were run on eight NVIDIA Tesla V100 GPUs distributed on two machines. The training loss and testing accuracy of each algorithm (under the settings described above) are plotted against running time below. Note that these are the same results as shown in Figure \ref{fig:snli}, plotted against time instead of epochs or communication rounds.

On the SNLI dataset, EPISODE reaches a lower training loss and higher testing accuracy with respect to time, compared with CELGC and NaiveParallelClip. Table \ref{tab:snli_runtime} shows that, when $I \leq 8$, EPISODE requires significantly less running time to reach high testing accuracy compared with both CELGC and NaiveParallelClip. When $I = 16$, CELGC and NaiveParallelClip nearly match, indicating that $I = 16$ may be close to the theoretical upper bound on $I$ for which fast convergence can be guaranteed. Also, as the client data similarity decreases, the running time requirement of EPISODE to reach high test accuracy stays nearly constant (e.g., when $I=4$), while the running time required by CELGC steadily increases. This demonstrates the resilience of EPISODE's convergence speed to heterogeneity. Training curves for the same experiment are shown in Figure \ref{fig:snli_running_time}.

\section{Ablation Study} \label{append:ablation}
In this section, we introduce an ablation study which disentangles the role of the two components of EPISODE's algorithm design: periodic resampled corrections and episodic clipping. Using the SNLI dataset, we have evaluated several variants of the EPISODE algorithm constructed by removing one algorithmic component at a time, and we compare the performance against EPISODE along with variants of the baselines mentioned in the paper. Our ablation study shows that both components of EPISODE's algorithm design (periodically resampled corrections and episodic clipping) contribute to the improved performance over previous work.

Our ablation experiments follow the same setting as the SNLI experiments in the main text. The network architecture, hyperparameters, and dataset are all identical to the SNLI experiments described in the main text. In this ablation study, we additionally evaluate multiple variants of EPISODE and baselines, which are described below:
\begin{itemize}
    \item SCAFFOLD (clipped): The SCAFFOLD algorithm~\citep{karimireddy2020scaffold} with gradient clipping applied at each iteration. This algorithm, as a variant of CELGC, determines the gradient clipping operation based on the corrected gradient at every iteration on each machine.
    \item EPISODE (unclipped): The EPISODE algorithm with clipping operation removed.
    \item FedAvg: The FedAvg algorithm~\citep{mcmahan2016communication}. We include this to show that clipping in some form is crucial for optimization in the relaxed smoothness setting.
    \item SCAFFOLD: The SCAFFOLD algorithm \citep{karimireddy2020scaffold}. We include this to show that SCAFFOLD-style corrections are not sufficient for optimization in the relaxed smoothness setting.
\end{itemize}
We compare these four algorithm variations against the algorithms discussed in the main text, which include EPISODE, CELGC, and NaiveParallelClip.

Following the protocol outlined in the main text, we train each one of these algorithms while varying the communication interval $I$ and the client data similarity parameter $s$. Specifically, we evaluate six settings formed by first fixing $s = 30\%$ and varying $I \in \{2, 4, 8, 16\}$, then fixing $I = 4$ and varying $s \in \{10\%, 30\%, 50\%\}$. Note that the results of NaiveParallelClip are unaffected by $I$ and $s$, since NaiveParallelClip communicates at every iteration. For each of these six settings, we provide the training loss and testing accuracy reached by each algorithm at the end of training. Final results for all settings are given in Table \ref{tab:snli_ablation}, and training curves for the setting $I = 4, s = 30\%$ are shown in Figure \ref{fig:snli_ablation_curves}.

\begin{table}[tb]
\centering
{\begin{tabular}{@{}lllll@{}}
    \toprule
    Interval & Similarity & Algorithm & Train Loss & Test Acc. \\
    \midrule
    1 & 100\% & NaiveParallelClip & 0.357 & 82.4\% \\
    \midrule
    2 & 30\% & CELGC & 0.579 & 75.9\% \\
     & & EPISODE & \textbf{0.361} & \textbf{82.3\%} \\
     & & SCAFFOLD (clipped) & 0.445 & 80.5\% \\
     & & EPISODE (unclipped) & 4.51 & 33.3\% \\
     & & FedAvg & 1.56 & 32.8\% \\
     & & SCAFFOLD & 1.23 & 34.1\% \\
    \midrule
    4 & 30\% & CELGC & 0.564 & 77.2\% \\
     & & EPISODE & \textbf{0.399} & \textbf{81.7\%} \\
     & & SCAFFOLD (clipped) & 0.440 & 80.7\% \\
     & & EPISODE (unclipped) & 9.82 & 33.0\% \\
     & & FedAvg & 1.14 & 32.8\% \\
     & & SCAFFOLD & 4.39 & 32.8\% \\
    \midrule
    8 & 30\% & CELGC & 0.539 & 78.0\% \\
     & & EPISODE & \textbf{0.431} & \textbf{81.1\%} \\
     & & SCAFFOLD (clipped) & 0.512 & 77.1\% \\
     & & EPISODE (unclipped) & 8.02 & 34.3\% \\
     & & FedAvg & 1.25 & 32.7\% \\
     & & SCAFFOLD & 10.86 & 32.8\% \\
    \midrule
    16 & 30\% & CELGC & 0.525 & 78.3\% \\
     & & EPISODE & \textbf{0.534} & \textbf{77.8\%} \\
     & & SCAFFOLD (clipped) & 0.597 & 75.7\% \\
     & & EPISODE (unclipped) & 4.71 & 33.0\% \\
     & & FedAvg & 3.45 & 32.7\% \\
     & & SCAFFOLD & 4.87 & 32.7\% \\
    \midrule
    4 & 50\% & CELGC & 0.490 & 79.1\% \\
     & & EPISODE & \textbf{0.385} & \textbf{82.1\%} \\
     & & SCAFFOLD (clipped) & 0.436 & 80.7\% \\
     & & EPISODE (unclipped) & 9.08 & 34.3\% \\
     & & FedAvg & 4.81 & 32.8\% \\
     & & SCAFFOLD & 2.40 & 32.9\% \\
    \midrule
    4 & 10\% & CELGC & 0.667 & 73.3\% \\
     & & EPISODE & \textbf{0.404} & \textbf{81.5\%} \\
     & & SCAFFOLD (clipped) & 0.438 & 80.7\% \\
     & & EPISODE (unclipped) & 8.54 & 33.0\% \\
     & & FedAvg & 1.89 & 34.3\% \\
     & & SCAFFOLD & 5.61 & 34.3\% \\
    \bottomrule
\end{tabular}}
\caption{Results for ablation study of EPISODE on SNLI dataset.}
\label{tab:snli_ablation}
\end{table}

\begin{figure}[tb]
    \centering
    \subfigure{\includegraphics[width=0.45\linewidth]{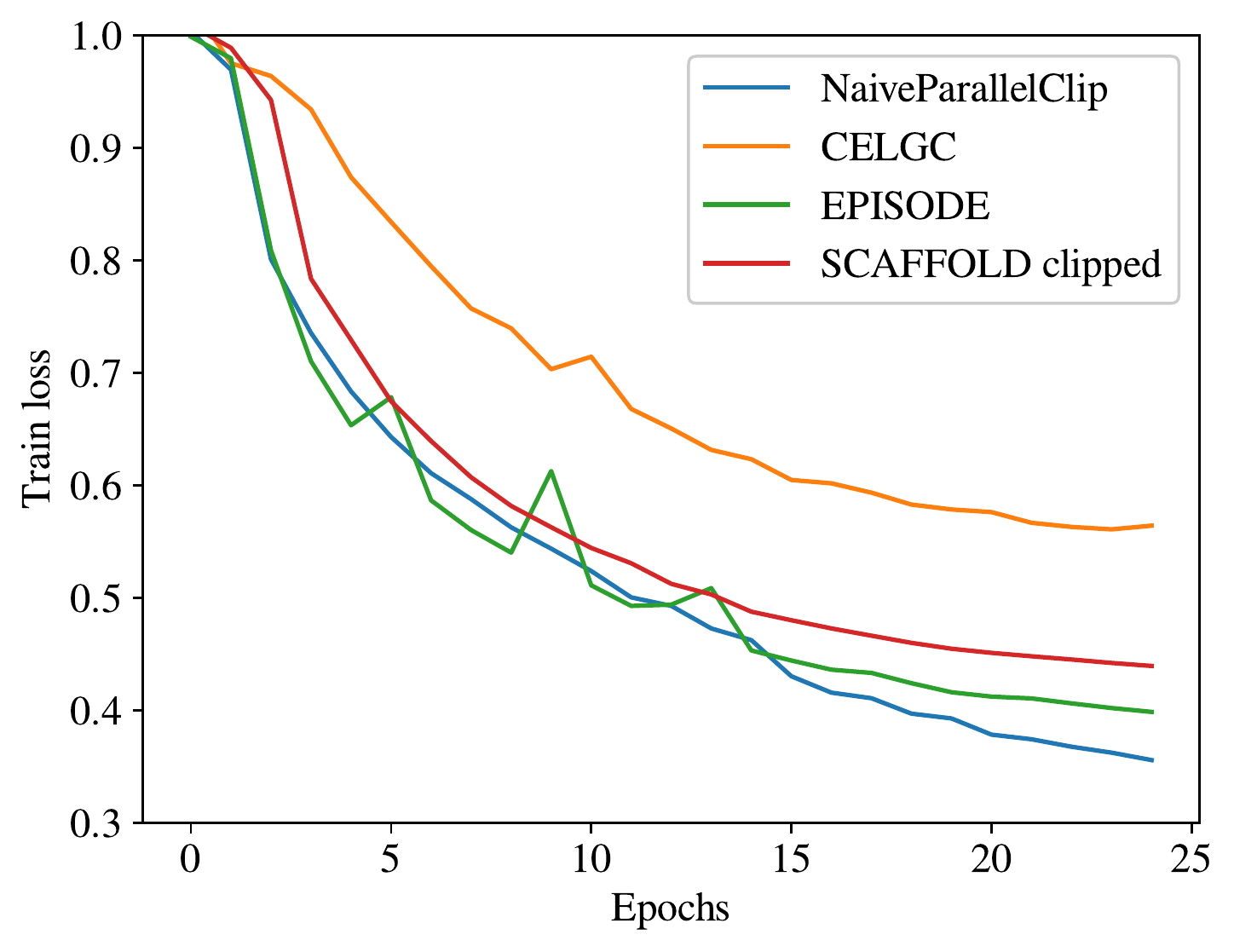}}
    \subfigure{\includegraphics[width=0.45\linewidth]{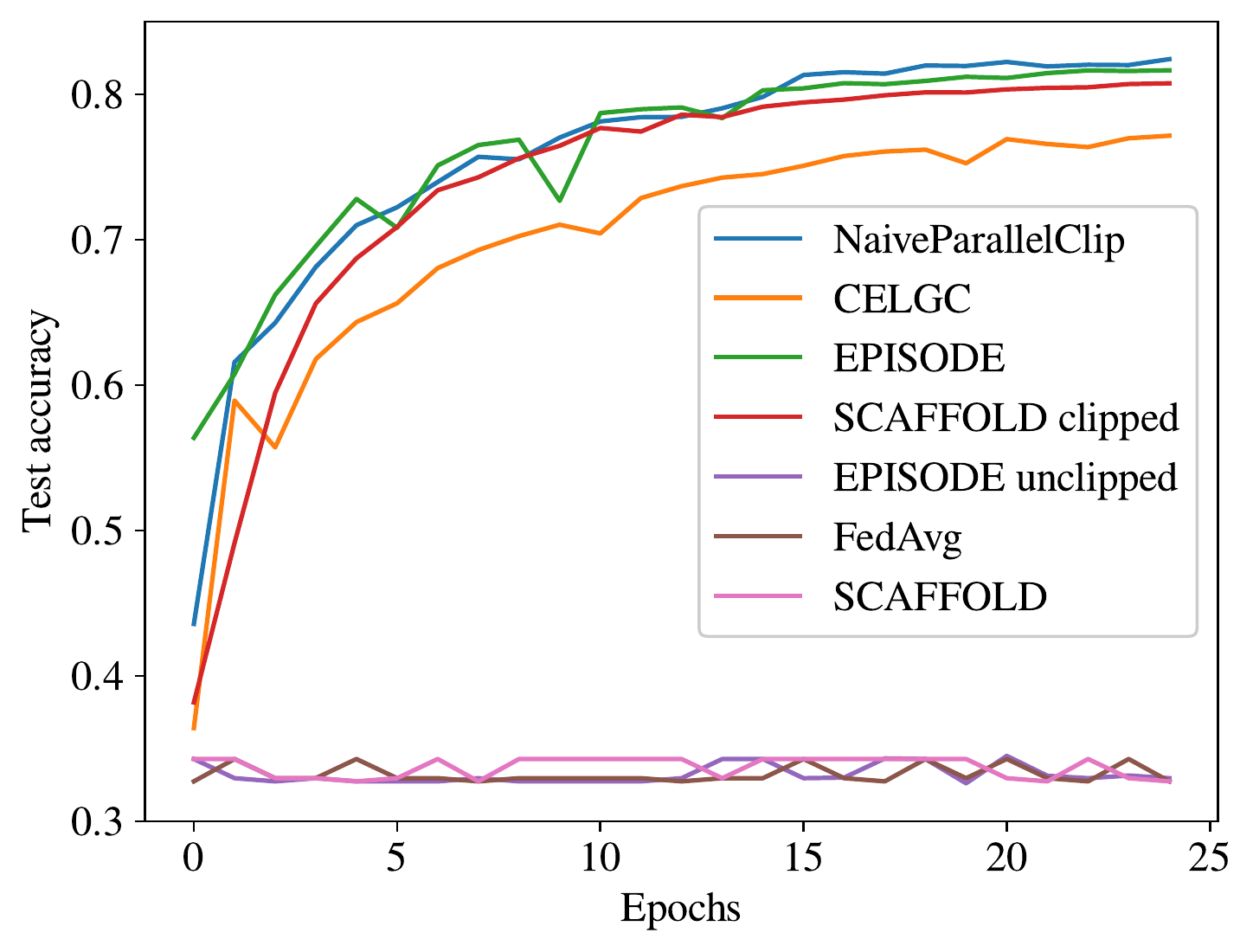}}
    \caption{Training curves SNLI ablation study under the setting $I = 4$ and $s = 30\%$. Note that the training losses of EPISODE (unclipped), FedAvg, and SCAFFOLD are not visible, since they are orders of magnitude larger than the other algorithms.}
    \label{fig:snli_ablation_curves}
\end{figure}

From these results, we can conclude that both components of EPISODE (periodic resampled corrections and episodic clipping) contribute to EPISODE's improved performance.
\begin{itemize}
    \item Replacing periodic resampled corrections with SCAFFOLD-style corrections yields the variant SCAFFOLD (clipped). In every setting, SCAFFOLD (clipped) performs slightly better than CELGC, but still worse than EPISODE. This corroborates the intuition that SCAFFOLD-style corrections use slightly outdated information compared to that of EPISODE, and this information lag caused worse performance in this ablation study.
    \item On the other hand, clipping is essential for EPISODE to avoid divergence. By removing clipping from EPISODE, we obtain the variant EPISODE (unclipped), which fails to learn entirely. EPISODE (unclipped) never reached a test accuracy higher than $35\%$, which is barely higher than random guessing, since SNLI is a 3-way classification problem. In summary, both periodic resampled corrections and episodic clipping contribute to the improved performance of EPISODE over baselines.
\end{itemize}

In addition, FedAvg and SCAFFOLD show similar \emph{divergence} behavior as EPISODE (unclipped). None of these three algorithms employ any clipping or normalization in updates, and consequently none of these algorithms are able to surpass random performance on SNLI. Finally, although NaiveParallelClip appears to be the best performing algorithm from this table, it requires more wall-clock time than any other algorithms due to its frequent communication. For a comparison of the running time results, see Table \ref{tab:snli_runtime} in Appendix \ref{appen:running_time}.

\section{New Experiments on Federated Learning Benchmark: Sentiment140 Dataset}
\label{app:LEAF}
To evaluate EPISODE on a real-world federated dataset, we provide additional experiments on the Sentiment140 benchmark from the LEAF benchmark \citep{caldas2018leaf}. Sentiment140 is a sentiment classification problem on a dataset of tweets, where each tweet is labeled as positive or negative. For this setting, we follow the experimental setup of \cite{li2020federatedprox}: training a 2-layer LSTM network with 256 hidden units on the cross-entropy classification loss. We also follow their data preprocessing steps to eliminate users with a small number of data points and split into training and testing sets. We perform an additional step to simulate the cross-silo federated environment \citep{kairouz2019advances} by partitioning the original Sentiment140 users into eight groups (i.e., eight machines). To simulate heterogeneity between silos, we partition the users based on a non-i.i.d. sampling scheme similar to that of our SNLI experiments. Specifically, given a silo similarity parameter $s$, each silo is allocated $s\%$ of its users by uniform sampling, and $(100-s)\%$ of its users from a pool of users which are sorted by the proportion of positive tweets in their local dataset. This way, when $s$ is small, different silos will have a very different proportion of positive/negative samples in their respective datasets. We evaluate NaiveParallelClip, CELGC, and EPISODE in this cross-silo environment with $I=4$ and $s \in \{0, 10, 20\}$. We tuned the learning rate $\eta$, and the clipping parameter $\gamma$ with grid search over the values $\eta \in \{0.01, 0.03, 0.1, 0.3, 1.0\}$ and $\gamma \in \{0.01, 0.03, 0.1, 0.3, 1.0\}$. Results are plotted in Figures \ref{fig:sent140_learning_curves_steps} and \ref{fig:sent140_learning_curves_time}.

\begin{figure}[tb]
    \centering
    \subfigure[$I=4$, $s=20\%$]{\includegraphics[width=0.32\linewidth]{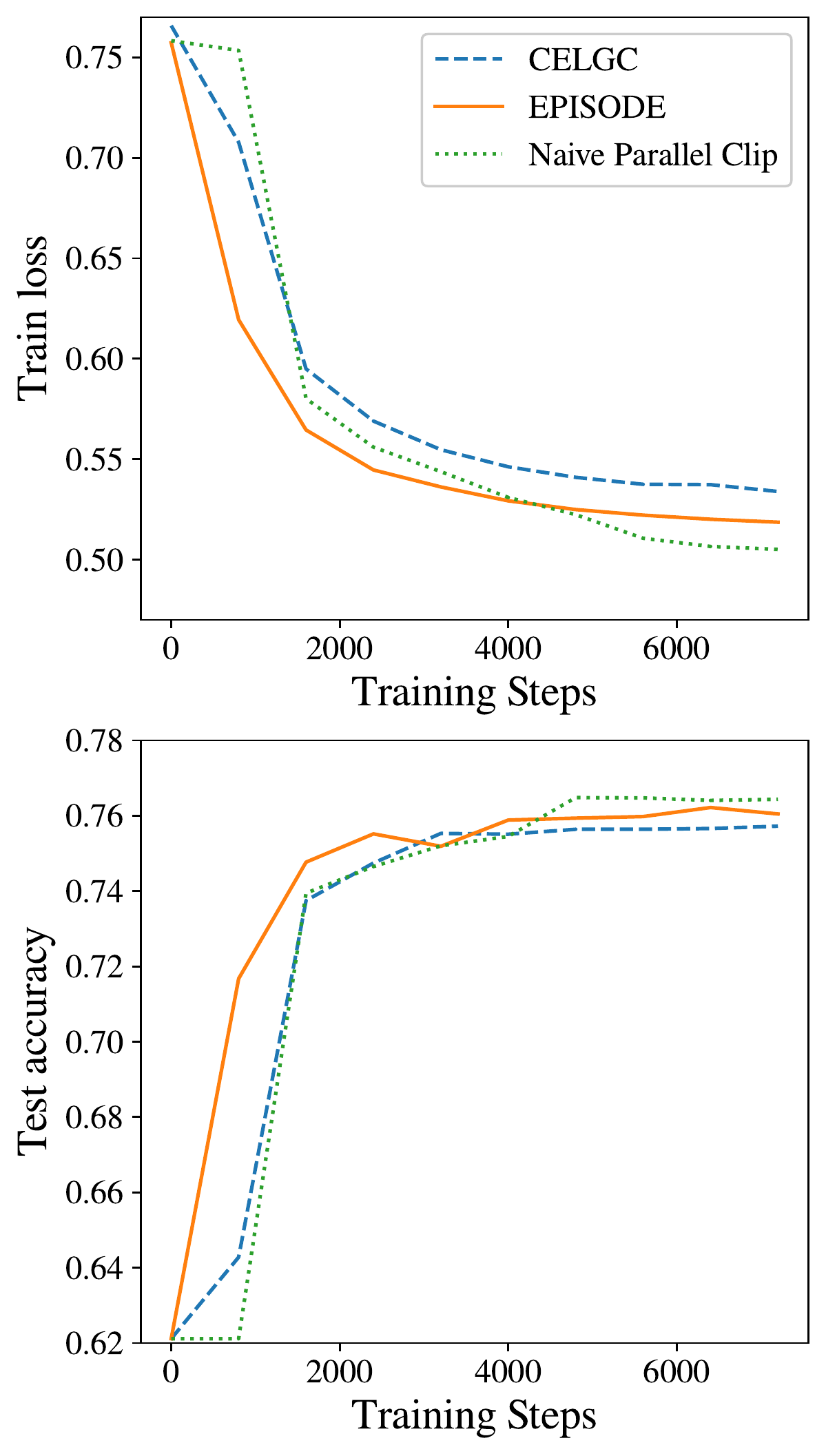}}
    \subfigure[$I=4$, $s=10\%$]{\includegraphics[width=0.32\linewidth]{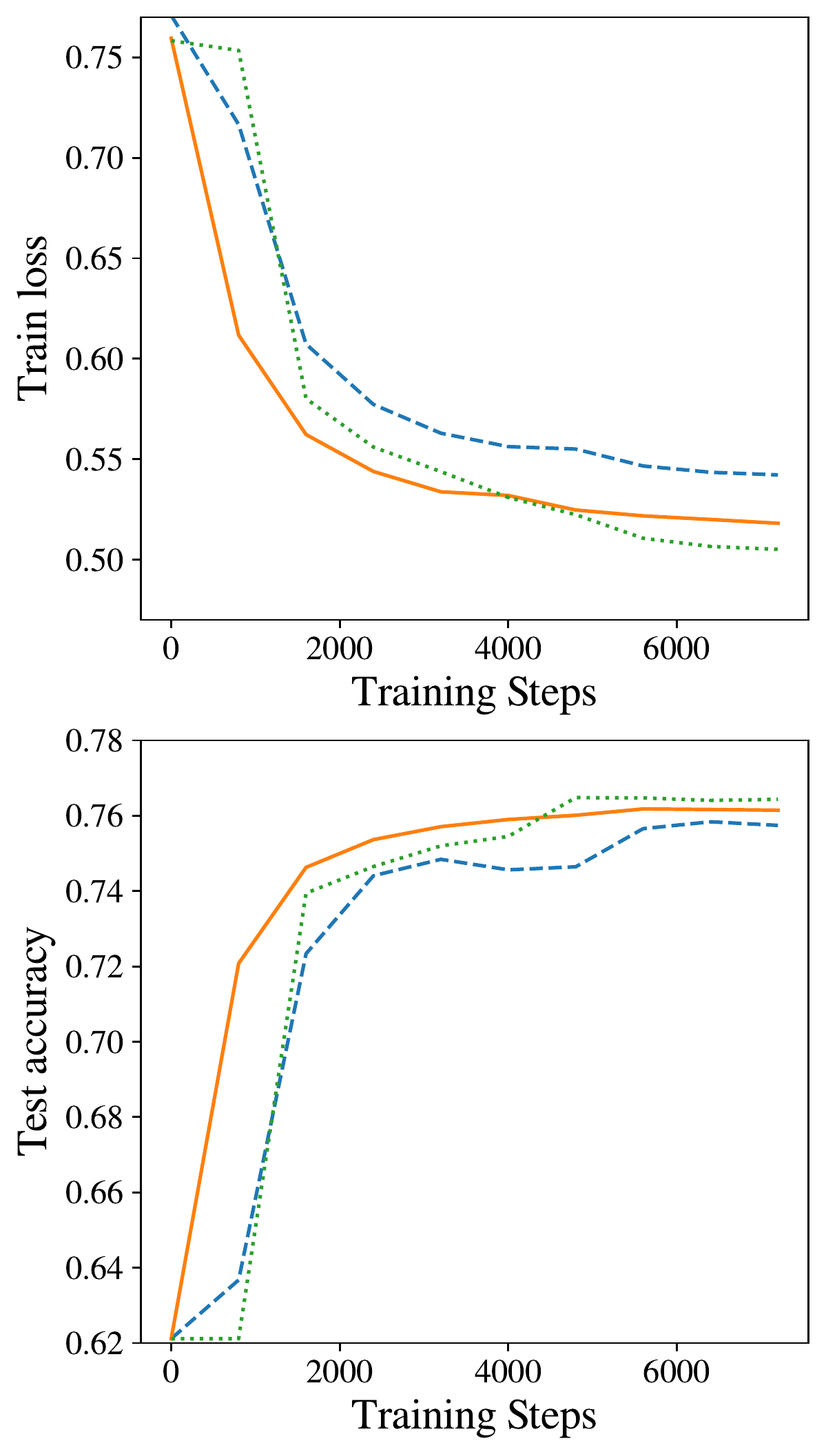}}
    \subfigure[$I=4$, $s=0\%$]{\includegraphics[width=0.32\linewidth]{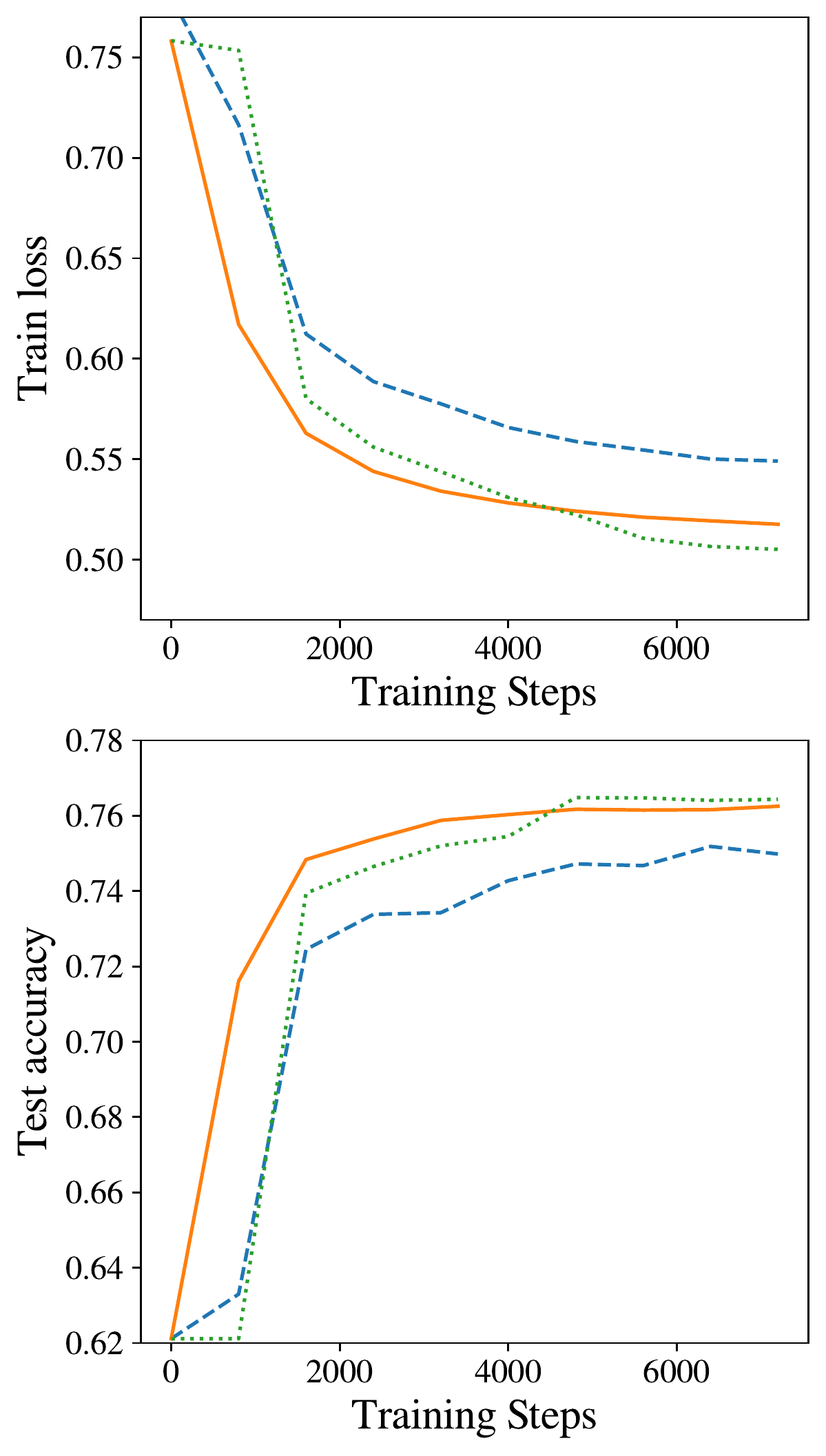}}
    \caption{Training curves for all Sentiment140 experiments over training steps.}
    \label{fig:sent140_learning_curves_steps}
\end{figure}

\begin{figure}[tb]
    \centering
    \subfigure[$I=4$, $s=20\%$]{\includegraphics[width=0.32\linewidth]{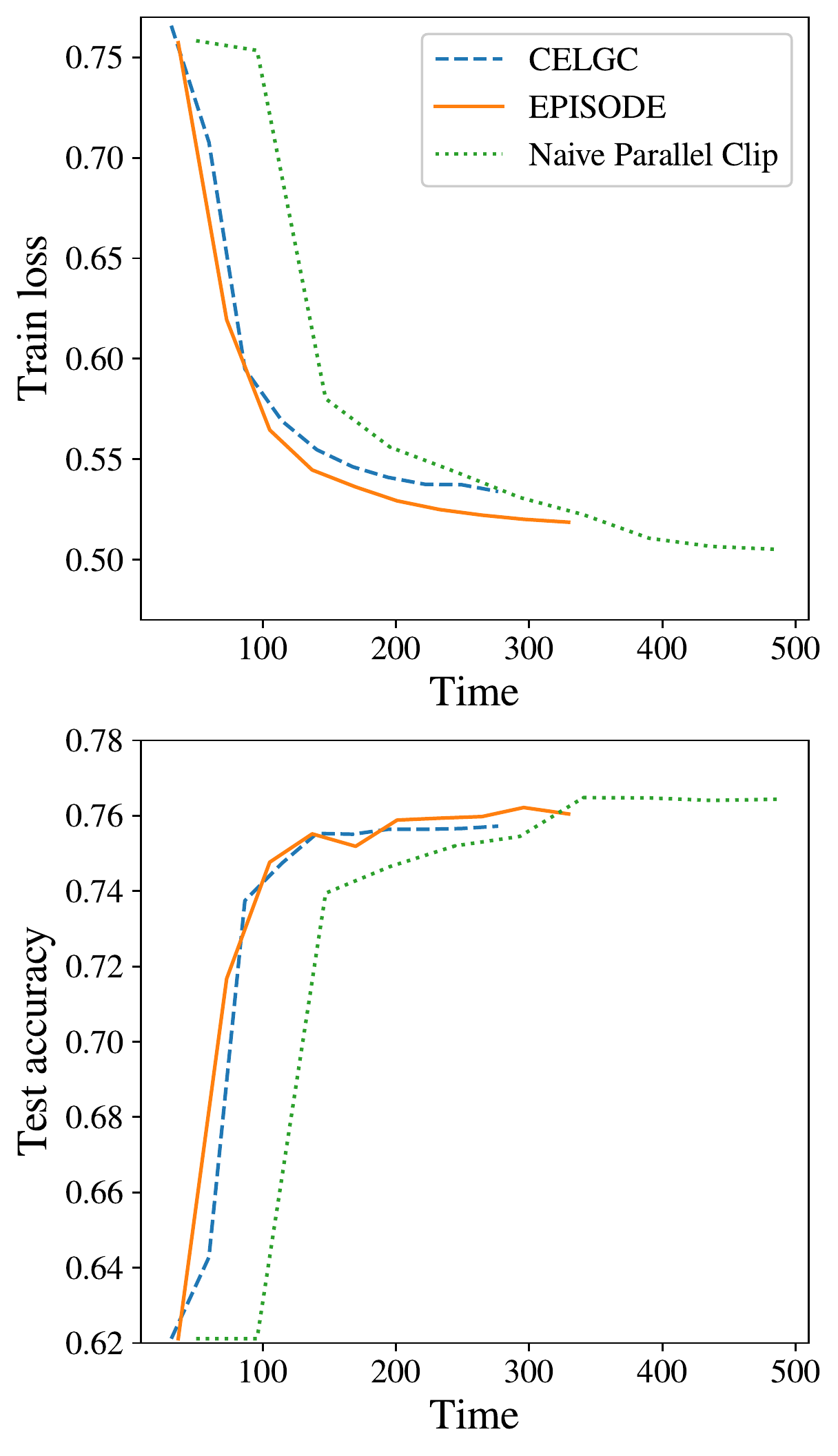}}
    \subfigure[$I=4$, $s=10\%$]{\includegraphics[width=0.32\linewidth]{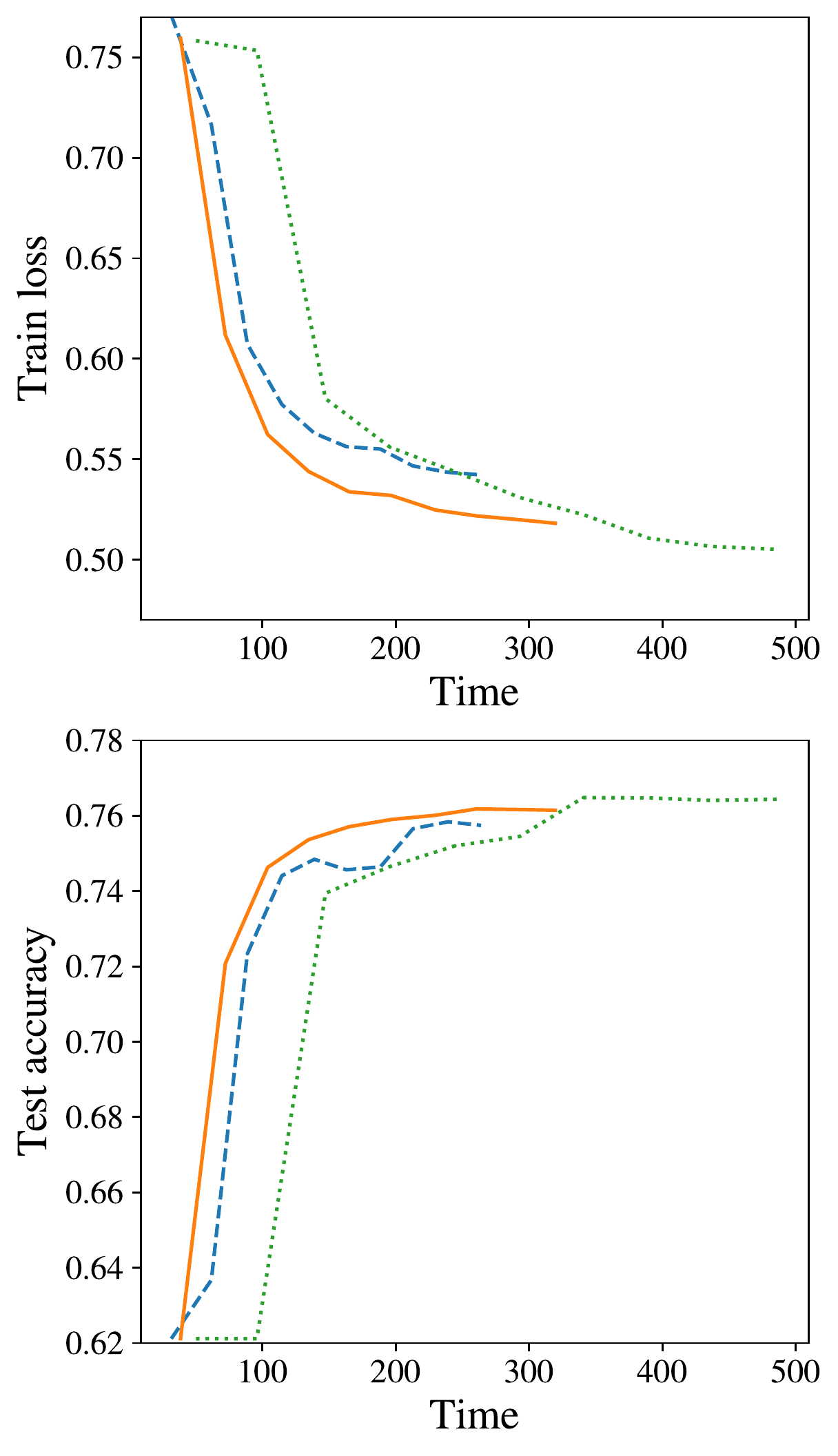}}
    \subfigure[$I=4$, $s=0\%$]{\includegraphics[width=0.32\linewidth]{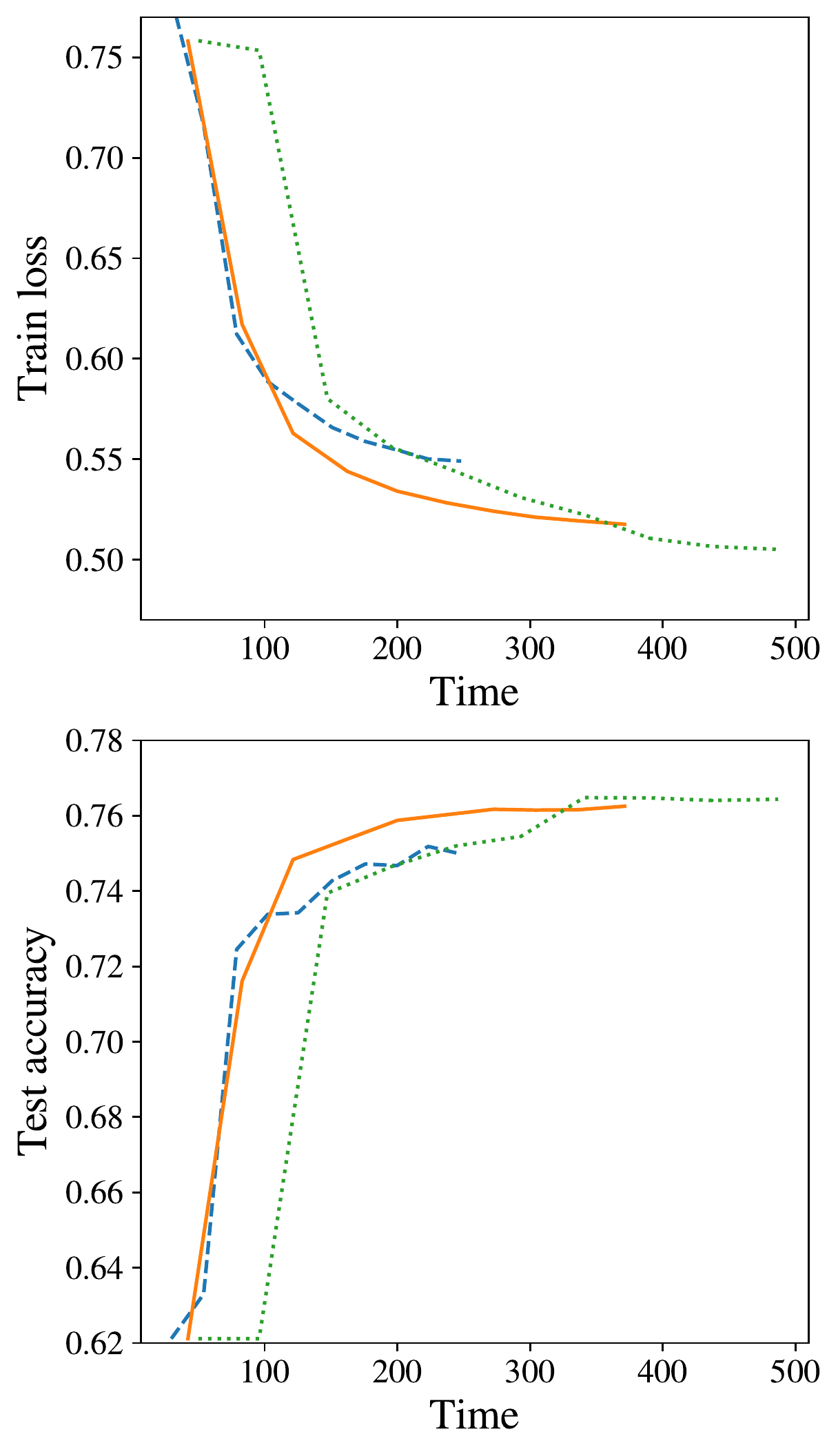}}
    \caption{Training curves for all Sentiment140 experiments over running time.}
    \label{fig:sent140_learning_curves_time}
\end{figure}

Overall, EPISODE is able to nearly match the training loss and testing accuracy of NaiveParallelClip while requiring significantly \emph{less running time}, and the performance of EPISODE does not degrade as the client data similarity $s$ decreases. Figure \ref{fig:sent140_learning_curves_steps} shows that, with respect to the number of training steps, EPISODE remains competitive with NaiveParallelClip and outperforms CELGC. In particular, the gap between EPISODE and CELGC grows as the client data similarity decreases, showing that EPISODE can adapt to data heterogeneity. On the other hand, Figure \ref{fig:sent140_learning_curves_time} shows that, with a fixed time budget, EPISODE is able to reach lower training loss and higher testing accuracy than both CELGC and NaiveParallelClip in all settings. This demonstrates the superior performance of EPISODE in practical scenarios.

\end{document}